\newif\iflongversion
\newif\ifprop
\newif\ifproof
\newif\ifintable
\newcommand{\LossPrefix}{\ifintable\else\small\fi}
\newcommand{\Sm}{\Delta}
\newcommand{\Ska}{\Delta^{\alpha}_{k}}
\newcommand{\Skb}{\Delta^{\beta}_{k}}
\newcommand{\kerr}[1]{\err_k\!\left(#1\right)}
\newcommand{\nerr}[2]{\err_{#1}\!\left(#2\right)}
\DeclareRobustCommand{\PrecAtK}[1]{\ifmmode\mathrm{P}@{#1}\else%
{\LossPrefix$\mathrm{P}@{#1}$}\fi}
\DeclareRobustCommand{\RecAtK}[1]{\ifmmode\mathrm{R}@{#1}\else%
{\LossPrefix$\mathrm{R}@{#1}$}\fi}
\DeclareRobustCommand{\RLoss}{\ifmmode\mathrm{RLoss}\else%
{\LossPrefix$\mathrm{RLoss}$}\fi}
\DeclareRobustCommand{\HLoss}{\ifmmode\mathrm{HLoss}\else%
{\LossPrefix$\mathrm{HLoss}$}\fi}
\DeclareRobustCommand{\Acc}{\ifmmode\mathrm{Acc}\else%
{\LossPrefix$\mathrm{Acc}$}\fi}
\DeclareRobustCommand{\SAcc}{\ifmmode\mathrm{SAcc}\else%
{\LossPrefix$\mathrm{SAcc}$}\fi}
\DeclareRobustCommand{\Fmicro}{\ifmmode\mathrm{F_1^{micro}}\else%
{\LossPrefix$\mathrm{F_1^{mic}}$}\fi}
\DeclareRobustCommand{\Fmacro}{\ifmmode\mathrm{F_1^{macro}}\else%
{\LossPrefix$\mathrm{F_1^{mac}}$}\fi}
\DeclareRobustCommand{\Finst}{\ifmmode\mathrm{F_1^{inst}}\else%
{\LossPrefix$\mathrm{F_1^{inst}}$}\fi}
\newcommand{\LrOva}{{\LossPrefix$\mathrm{LR^{OVA}}$}}
\newcommand{\LrMulti}{{\LossPrefix$\mathrm{LR^{Multi}}$}}
\newcommand{\LrTopK}[1]{{\LossPrefix$\mathrm{top\mhyphen{}{#1}~Ent}$}}
\newcommand{\LrTopKn}[1]{{\LossPrefix$\mathrm{top\mhyphen{}{#1}~Ent_{tr}}$}}
\newcommand{\LrML}{{\LossPrefix$\mathrm{LR^{ML}}$}}
\newcommand{\SvmOva}{{\LossPrefix$\mathrm{SVM^{OVA}}$}}
\newcommand{\SvmOvag}[1]{{\LossPrefix$\mathrm{SVM^{OVA}_{#1}}$}}
\newcommand{\SvmMulti}{{\LossPrefix$\mathrm{SVM^{Multi}}$}}
\newcommand{\SvmMultig}[1]{{\LossPrefix$\mathrm{SVM^{Multi}_{#1}}$}}
\newcommand{\SvmML}{{\LossPrefix$\mathrm{SVM^{ML}}$}}
\newcommand{\SvmMLg}[1]{{\LossPrefix$\mathrm{SVM^{ML}_{#1}}$}}
\newcommand{\SvmTopK}[1]{{\LossPrefix$\mathrm{top\mhyphen{}{#1}~SVM}$}}
\newcommand{\SvmTopKg}[2]{{\LossPrefix$\mathrm{top\mhyphen{}{#1}~SVM_{#2}}$}}
\newcommand{\SvmTopKa}[1]{{\LossPrefix$\mathrm{top\mhyphen{}{#1}~SVM^{\alpha}}$}}
\newcommand{\SvmTopKb}[1]{{\LossPrefix$\mathrm{top\mhyphen{}{#1}~SVM^{\beta}}$}}
\newcommand{\SvmTopKag}[2]{{\LossPrefix$\mathrm{top\mhyphen{}{#1}~SVM^{\alpha}_{#2}}$}}
\newcommand{\SvmTopKbg}[2]{{\LossPrefix$\mathrm{top\mhyphen{}{#1}~SVM^{\beta}_{#2}}$}}
\newcommand{\SvmTopKab}[1]{{\LossPrefix$\mathrm{top\mhyphen{}{#1}~SVM^{\alpha/\beta}}$}}
\newcommand{\SvmTopKabg}[2]{{\LossPrefix$\mathrm{top\mhyphen{}{#1}~SVM^{\alpha/\beta}_{#2}}$}}
\newcommand{\RFPCT}{{\LossPrefix$\mathrm{RF\mhyphen{}PCT}$}}
\newcommand{\HOMER}{{\LossPrefix$\mathrm{HOMER}$}}
\newcommand{\BR}{{\LossPrefix$\mathrm{BR}$}}
\def\R{\Rb}
\newcommand{\eqreftext}[1]{%
\begingroup%
\hypersetup{hidelinks}\eqref{#1}%
\endgroup%
}
\newcommand{\reftext}[1]{%
\begingroup%
\hypersetup{hidelinks}\ref{#1}%
\endgroup%
}
\begin{document}
\title{Analysis and Optimization of Loss Functions
for Multiclass, Top-k, and Multilabel Classification}

\author{Maksim Lapin, Matthias Hein, and Bernt Schiele%
\IEEEcompsocitemizethanks{
\IEEEcompsocthanksitem
M.\ Lapin and B.\ Schiele are with the
Computer Vision and Multimodal Computing group,
Max Planck Institute for Informatics,
Saarbr\"ucken, Saarland, Germany. %
E-mail: \{mlapin, schiele\}@mpi-inf.mpg.de.
\IEEEcompsocthanksitem
M.\ Hein is with the
Department of Mathematics and Computer Science,
Saarland University, Saarbr\"ucken, Saarland, Germany.\protect\\
E-mail: hein@cs.uni-saarland.de.
}}

\IEEEtitleabstractindextext{
\begin{abstract}
Top-k error is currently a popular performance measure on large scale
image classification benchmarks such as ImageNet and Places.
Despite its wide acceptance, our understanding of this metric is limited
as most of the previous research is focused on its special case, the top-1 error.
In this work, we explore two directions that shed more light on the top-k error.
First, we provide an in-depth analysis of established
and recently proposed single-label multiclass methods
along with a detailed account of efficient optimization algorithms for them.
Our results indicate that the softmax loss and the smooth multiclass SVM
are surprisingly competitive in top-k error uniformly across all k,
which can be explained by our analysis of multiclass top-k calibration.
Further improvements for a specific k are possible with
a number of proposed top-k loss functions.
Second, we use the top-k methods to explore the transition
from multiclass to multilabel learning.
In particular, we find that it is possible to obtain effective
multilabel classifiers on Pascal VOC using a single label per image
for training, while the gap between multiclass and multilabel methods
on MS COCO is more significant.
Finally, our contribution of efficient algorithms for training with the
considered top-k and multilabel loss functions is of independent interest.
\end{abstract}

\begin{IEEEkeywords}
Multiclass classification, multilabel classification,
top-k error, top-k calibration, SDCA optimization
\end{IEEEkeywords}}

\maketitle

\IEEEdisplaynontitleabstractindextext

\IEEEpeerreviewmaketitle

\section{Introduction}%
\label{sec:introduction}
\IEEEPARstart{M}{odern} computer vision benchmarks are large scale
\cite{ILSVRCarxiv14,zhou2014learning,lin2014microsoft},
and are only likely to grow further both in terms of the sample size
as well as the number of classes.
While simply collecting more data may be a relatively straightforward exercise,
obtaining high quality ground truth annotation is hard.
Even when the annotation is just a list of image level tags,
collecting a \emph{consistent} and \emph{exhaustive} list
of labels for every image requires significant effort.
Instead, existing benchmarks often offer only a single label
per image, albeit the images may be inherently multilabel.
The increased number of classes then leads to ambiguity
in the labels as classes start to overlap
or exhibit a hierarchical structure.
The issue is illustrated in Figure~\ref{fig:teaser},
where it is difficult even for humans to guess the ground truth label
correctly on the first attempt \cite{zhou2014learning,xiao2010sun}.

Allowing $k$ guesses instead of one leads to what we call
the \emph{top-$k$ error}, which is one of the main subjects of this work.
While previous research is focused on minimizing the top-$1$ error,
we consider $k \geq 1$.
We are mainly interested in two cases:
(i) achieving small top-$k$ error for \emph{all} $k$ simultaneously;
and
(ii) minimization of a specific top-$k$ error.
These goals are pursued in the first part of the paper which is concerned
with single label multiclass classification.
We propose extensions of the established multiclass loss functions
to address top-$k$ error minimization
and derive appropriate optimization schemes based on
stochastic dual coordinate ascent (SDCA) \cite{shalev2013stochastic}.
We analyze which of the multiclass methods are calibrated
for the top-$k$ error
and perform an extensive empirical evaluation
to better understand their benefits and limitations.
An earlier version of this work appeared in \cite{lapin2016loss}.

\begin{figure}[t]
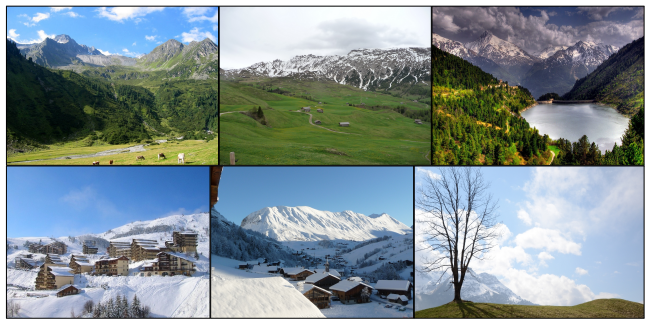
\small\centering%
\setlength\fboxsep{0pt}%
\setlength\fboxrule{0.25pt}
\fbox{\includegraphics[height=60pt]%
{valley-gsun_02126bc0fbc7722e7f0b2db26c2696a3}}%
\fbox{\includegraphics[height=60pt]%
{pasture-gsun_01b34a4825d6e19a81b03fff12aa7f2b}}%
\fbox{\includegraphics[height=60pt]%
{mountain-snowy-gsun_008d67b1032d36a356c5da31a645d7c4}}%
\\
\fbox{\includegraphics[height=57.4pt]%
{ski-resort-gsun_00dc9f9bbb922c487e9860eb01368fc8}}%
\fbox{\includegraphics[height=57.4pt]%
{chalet-gsun_012c7a9c2e9b04ee9894503888fdcd3f}}%
\fbox{\includegraphics[height=57.4pt]%
{sky-gsun_00bb8457178ddc64122fce8b79611b83}}%
\caption{%
Class ambiguity with a single label on Places 205 \cite{zhou2014learning}.
{\bfseries Labels:} Valley, Pasture, Mountain;
Ski resort, Chalet, Sky.
Note that multiple labels apply to each image
and $k$ guesses may be required to guess
the ground truth label correctly.
}\label{fig:teaser}%
\end{figure}

Moving forward,
we see top-$k$ classification as a natural transition step
between multiclass learning with a single label per training example
and multilabel learning with a complete set of relevant labels.
Multilabel learning forms the second part of this work,
where we introduce a smoothed version of
the multilabel SVM loss \cite{crammer2003family},
and contribute two novel projection algorithms for efficient optimization
of multilabel losses in the SDCA framework.
Furthermore, we compare all multiclass, top-$k$, and multilabel
methods in a novel experimental setting, where we want to quantify
the utility of multilabel annotation.
Specifically, we want to understand if it is possible to obtain
effective multilabel classifiers from single label annotation.

\bgroup
\def\arraystretch{1.4}
\begin{table*}[ht]\small\centering\setlength{\tabcolsep}{.45em}
\caption{Overview of the methods considered in this work and our contributions.}
\label{tbl:summary}
\begin{tabular}{l|l|l|c|c|c}
\toprule
Method & Name & Loss function & Conjugate & SDCA update & Top-$k$ calibrated
\\
\midrule
\midrule
\reftext{eq:ova-hinge} & One-vs-all (OVA) SVM &
$\max\{0, \, 1 - y f(x) \}$ &
\multirow{2}{*}{\cite{shalev2014accelerated}} &
\multirow{2}{*}{\cite{shalev2014accelerated}} &
no$^\dagger$ (Prop.~\ref{prop:calibrated-hinge})
\\
\reftext{eq:ova-lr} & OVA logistic regression &
$\log\big(1 + \exp(- y f(x)) \big)$ &
&
&
yes (Prop.~\ref{prop:calibrated-lr})
\\
\cline{1-6}
\reftext{eq:multi-hinge} & Multiclass SVM &
$\max \big\{0, (a + c)_{\pi_1} \big\}$ &
\cite{lapin2015topk,shalev2014accelerated} &
\cite{lapin2015topk,shalev2014accelerated} &
no (Prop.~\ref{prop:multi-hinge-topk-calibrated})
\\
\reftext{eq:softmax} & Softmax (cross entropy) &
$\log\big( \sum_{j \in \Yc} \exp(a_j) \big)$ &
Prop.~\ref{prop:softmax-conjugate} &
Prop.~\ref{prop:topk-entropy-update} &
yes (Prop.~\ref{prop:softmax-topk-calibrated})
\\
\cline{1-6}
\reftext{eq:topk-hinge-alpha} & Top-$k$ SVM ($\alpha$) &
$\max \big\{0, \frac{1}{k} \sum_{j=1}^k (a + c)_{\pi_j} \big\}$ &
\multirow{2}{*}{\cite{lapin2015topk}} &
\multirow{2}{*}{\cite{lapin2015topk}} &
\multirow{5}{*}{\multirowcell{open\\question}}
\\
\reftext{eq:topk-hinge-beta} & Top-$k$ SVM ($\beta$) &
$\frac{1}{k} \sum_{j=1}^k \max \big\{0, (a + c)_{\pi_j} \big\}$ &
&
\\
\cline{1-5}
\reftext{eq:smooth-topk-alpha} & Smooth top-$k$ SVM ($\alpha$) &
$L_{\gamma}$ in Prop.~\ref{prop:topk-smooth} w/ $\Ska$ &
\multirow{2}{*}{Prop.~\ref{prop:topk-smooth}} &
\multirow{2}{*}{Prop.~\ref{prop:smooth-topk-hinge-update}} &
\\
\SvmTopKbg{k}{\gamma} & Smooth top-$k$ SVM ($\beta$) &
$L_{\gamma}$ in Prop.~\ref{prop:topk-smooth} w/ $\Skb$ &
&
&
\\
\cline{1-5}
\reftext{eq:topk-entropy} & Top-$k$ entropy &
$L$ in Prop.~\ref{prop:topk-entropy-primal} &
Eq.~\ref{eq:softmax-conjugate} w/$\,\Ska$ &
Prop.~\ref{prop:topk-entropy-update} &
\\
\cline{6-6}
\reftext{eq:truncated-topk-entropy} & Truncated top-$k$ entropy &
$\log\big( 1 + \tsum_{j \in \Jc_y^k} \exp(a_j) \big)$ &
- &
- &
yes (Prop.~\ref{prop:truncated-topk-entropy-topk-calibrated})
\\
\midrule
\midrule
\reftext{eq:svm-ml} & Multilabel SVM &
$\max_{y \in Y, \, \bar{y} \in \bar{Y}}
\max \{0, 1 + u_{\bar{y}} - u_y \}$ &
Prop.~\ref{prop:svm-ml-conj} &
Prop.~\ref{prop:smooth-svm-ml-update} &
\multirow{3}{*}{\multirowcell{
see \eg \cite{gao2011consistency}\\
for multilabel \\
consistency}}
\\
\reftext{eq:smooth-svm-ml} & Smooth multilabel SVM &
$L_{\gamma}$ in Prop.~\ref{prop:smooth-svm-ml} &
Prop.~\ref{prop:smooth-svm-ml} &
Prop.~\ref{prop:smooth-svm-ml-update} &
\\
\reftext{eq:lr-ml} & Multilabel Softmax &
$\tfrac{1}{\abs{Y}} \tsum_{y \in Y}
\log\big( \tsum_{\bar{y} \in \Yc} \exp(u_{\bar{y}} - u_y) \big)$ &
Prop.~\ref{prop:lr-ml-conj} &
Prop.~\ref{prop:lr-ml-update} &
\\
\midrule
\midrule
\multicolumn{6}{c}{\multirowcell{
Let
$a \bydef (f_j(x) - f_y(x))_{j \in \Yc}$,
$c \bydef \ones - e_y$ (multiclass);
$u \bydef (f_y(x))_{y \in \Yc}$ (multilabel);
$\pi:$ $a_{\pi_1} \geq \ldots \geq a_{\pi_m}$;
$\Jc_y^k$ is defined in \S~\ref{sec:multiclass-methods}.
\\
\SvmMulti\ $\equiv$ \SvmTopKa{1} $\equiv$ \SvmTopKb{1};
\LrMulti\ $\equiv$ \LrTopK{1} $\equiv$ \LrTopKn{1}.
$^\dagger$Smooth \SvmOvag{\gamma} \emph{is} top-$k$ calibrated
(Prop.~\ref{prop:smooth-ova-hinge-calibrated}).
}}
\\\bottomrule
\end{tabular}
\end{table*}
\egroup

The contributions of this work are as follows.

\begin{itemize}
\item In \S~\ref{sec:related-work},
we provide an overview of the related work
and establish connections to a number of related research directions.
In particular, we point to an intimate link that exists between
top-$k$ classification, label ranking, and learning to rank
in information retrieval.

\item In \S~\ref{sec:loss-functions},
we introduce the learning problem for multiclass and multilabel classification,
and discuss the respective performance metrics.
We also propose $4$ novel loss functions for minimizing the top-$k$ error
and a novel smooth multilabel SVM loss.
A brief summary of the methods that we consider
is given in Table~\ref{tbl:summary}.

\item In \S~\ref{sec:topk-calibration},
we introduce the notion of top-$k$ calibration
and analyze which of the multiclass methods are calibrated
for the top-$k$ error.
In particular, we highlight
that the softmax loss is uniformly top-$k$ calibrated for all $k \geq 1$.

\item In \S~\ref{sec:optimization},
we develop efficient optimization schemes based on the SDCA framework.
Specifically, we contribute a set of algorithms for computing
the proximal maps that can be used to train classifiers
with the specified multiclass, top-$k$, and multilabel loss functions.

\item In \S~\ref{sec:experiments},
the methods are evaluated empirically in three different settings:
on synthetic data (\S~\ref{sec:synthetic}),
on multiclass datasets (\S~\ref{sec:multiclass-experiments}), and
on multilabel datasets (\S~\ref{sec:multilabel-experiments}).

\item In \S~\ref{sec:multiclass-experiments},
we perform a set of experiments on $11$ multiclass benchmarks including
the ImageNet 2012 \cite{ILSVRCarxiv14}
and the Places 205 \cite{zhou2014learning} datasets.
Our evaluation reveals, in particular, that the softmax loss
and the proposed smooth \SvmMultig{\gamma} loss
are competitive uniformly in all top-$k$ errors,
while improvements for a specific $k$
can be obtained with the new top-$k$ losses.

\item In \S~\ref{sec:multilabel-experiments},
we evaluate the multilabel methods on $10$ datasets
following \cite{madjarov2012extensive},
where our smooth multilabel \SvmMLg{\gamma}
shows particularly encouraging results.
Next, we perform experiments on
Pascal VOC 2007 \cite{everingham2010pascal}
and Microsoft COCO \cite{lin2014microsoft},
where we train multiclass and top-$k$ methods using only a single
label of the most prominent object per image,
and then compare their multilabel performance on test data
to that of multilabel methods trained with full annotation.
Surprisingly, we observe a gap of just above $2\%$ mAP on Pascal VOC
between the best multiclass and multilabel methods.
\end{itemize}

We release our implementation of SDCA-based solvers for training models
with the loss functions considered in this work\footnote{
\url{https://github.com/mlapin/libsdca}}.
We also publish code for the corresponding proximal maps,
which may be of independent interest.

\section{Related Work}%
\label{sec:related-work}
In this section, we place our work in a broad context
of related research directions.
First, we draw connections to the general problem of \emph{learning to rank}.
While it is mainly studied in the context of information search
and retrieval, there are clear ties to multiclass and multilabel classification.
Second, we briefly review related results on \emph{consistency}
and classification calibration.
These form the basis for our theoretical analysis of top-$k$ calibration.
Next, we focus on the technical side including the \emph{optimization method}
and the algorithms for efficient computation of proximal operators.
Finally, we consider multiclass and multilabel \emph{image classification},
which are the main running examples in this paper.

\textbf{Learning to rank.}
Learning to rank is a supervised learning problem that arises
whenever the structure in the output space admits a partial order
\cite{tsochantaridis2005large}.
The classic example is ranking in information retrieval (IR),
see \eg \cite{liu2009learning} for a recent review.
There, a feature vector $\Phi(q, d)$
is computed for every query $q$ and every document $d$,
and the task is to learn a model that ranks the relevant documents
for the given query before the irrelevant ones.
Three main approaches are recognized within that framework:
the pointwise, the pairwise, and the listwise approach.
Pointwise methods cast the problem of predicting document relevance
as a regression \cite{cossock2006subset}
or a classification \cite{li2007mcrank} problem.
Instead, the pairwise approach is focused on predicting the relative
order between documents
\cite{burges2005learning,freund2003efficient,joachims2002optimizing}.
Finally, the listwise methods attempt to optimize a given
performance measure directly on the full list of documents
\cite{taylor2008softrank,yue2007support,xu2007adarank},
or propose a loss function on the predicted and the ground truth
lists \cite{cao2007learning,xia2008listwise}.

Different from ranking in IR, our main interest in this work
is \emph{label ranking} which generalizes the basic binary classification
problem to multiclass, multilabel, and even hierarchical classification,
see \cite{vembu2010label} for a survey.
A link between the two settings is established if we consider
queries to be examples (\eg images) and documents to be class labels.
The main contrast, however, is in
the employed loss functions and performance evaluation at test time
(\S~\ref{sec:loss-functions}).

Most related to our work is a general family
of convex loss functions for ranking and classification
introduced by Usunier \etal \cite{usunier2009ranking}.
One of the loss functions that we consider
(\SvmTopKb{k} \cite{lapin2015topk}) is a member of that family.
Another example is \textsc{Wsabie} \cite{Weston2011,Gupta2014},
which learns a joint embedding model
optimizing an approximation of a loss from \cite{usunier2009ranking}.

Top-$k$ classification in our setting is directly related to label ranking
as the task is to place the ground truth label in the set of top $k$
labels as measured by their prediction scores.
An alternative approach is suggested
by \cite{mcauley2013optimization}
who use structured learning
to aggregate the outputs of pre-trained one-vs-all binary classifiers
and directly predict a set of $k$ labels,
where the labels missing from the annotation are modelled
with latent variables.
That line of work is pursued further in \cite{xu2016image}.
The task of predicting a set of items
is also considered in \cite{ross2013learning},
who frame it as a problem of maximizing a submodular reward function.
A probabilistic model for ranking and top-$k$ classification
is proposed by \cite{swersky2012probabilistic},
while \cite{guillaumin2009tagprop,mensink2013distance}
use metric learning to train a nearest neighbor model.
An interesting setting related to top-$k$ classification
is learning with positive and unlabeled data
\cite{kanehira2016multilabel,du2014analysis},
where the absence of a label does not imply it is a negative label,
and also learning with label noise
\cite{frenay2014classification,liu2016classification}.

Label ranking is closely related to multilabel classification
\cite{madjarov2012extensive,zhang2014review},
which we consider later in this paper,
and to tag ranking \cite{wang2012assistive}.
Ranking objectives have been also considered for training convolutional
architectures \cite{gong2013deep}, most notably with a loss
on triplets \cite{wang2014learning,zhao2015deep},
that consideres both positive and negative examples.
Many recent works focus on the top of the ranked list
\cite{agarwal2011infinite,boyd2012accuracy,
ICML2012Rakotomamonjy_664,rudin2009p,li2014top}.
However, they are mainly interested in search and retrieval,
where the number of relevant documents by far exceeds
what users are willing to consider.
That setting suggests a different trade-off
for recall and precision compared to our setting
with only a few relevant labels.
This is correspondingly reflected in performance evaluation,
as mentioned above.

\textbf{Consistency and calibration.}
Classification is a discrete prediction problem where minimizing
the expected (0-1) error is known to be computationally hard.
Instead, it is common to minimize a \emph{surrogate} loss
that leads to efficient learning algorithms.
An important question, however, is whether the minimizers of
the expected surrogate loss also minimize the expected error.
Loss functions which have that property are called \emph{calibrated}
or \emph{consistent} with respect to the given discrete loss.
Consistency in binary classification is well understood
\cite{BarJorAuc2006,zhang2004statistical,reid2010composite},
and significant progress has been made in the analysis of
multiclass \cite{tewari2007consistency,pires2016multiclass,vernet2011composite},
multilabel \cite{gao2011consistency,koyejo2015consistent},
and ranking \cite{cossock2008statistical,duchi2010consistency,calauzenes2012non}
methods.
In this work, we investigate calibration of a number of
surrogate losses with respect to the top-$k$ error,
which generalizes previously established results for multiclass methods.

\textbf{Optimization.}
To facilitate experimental evaluation of the proposed loss functions,
we also implement the corresponding optimization routines.
We choose the stochastic dual coordinate ascent (SDCA)
framework of \cite{shalev2013stochastic} for its ease of implementation,
strong convergence guarantees, and the possibility to compute
certificates of optimality via the duality gap.
While \cite{shalev2013stochastic} describe the general SDCA algorithm
that we implement, their analysis is limited to
\emph{scalar} loss functions (both Lipschitz and smooth)
with $\ell_2$ regularization,
which is only suitable for binary problems.
A more recent work \cite{shalev2014accelerated}
extends the analysis to \emph{vector valued}
smooth (or Lipschitz) functions and general strongly convex regularizers,
which is better suited to our multiclass and multilabel loss functions.
A detailed comparison of recent coordinate descent algorithms
is given in \cite{shalev2014accelerated,fercoq2015accelerated}.

Following \cite{shalev2006efficient} and \cite{shalev2014accelerated},
the main step in the optimization algorithm updates the dual variables
by computing a projection or, more generally, the proximal operator
\cite{parikh2014proximal}.
The proximal operators that we consider here
can be equivalently expressed as instances of a
continuous nonlinear resource allocation problem,
which has a long research history,
see \cite{patriksson2015algorithms} for a recent survey.
Most related to our setting is the Euclidean projection onto
the unit simplex or the $\ell_1$-ball in $\Rb^n$,
which can be computed approximately via bisection
in $O(n)$ time \cite{liu2009efficient},
or exactly via breakpoint searching
\cite{kiwiel2008breakpoint}
and variable fixing \cite{kiwiel2008variable}.
The former can be done in $O(n \log n)$ time
with a simple implementation based on sorting,
or in $O(n)$ time with an efficient median finding algorithm.
In this work, we choose the variable fixing scheme
which does not require sorting and is easy to implement.
Although its complexity is $O(n^2)$ on pathological inputs
with elements growing exponentially \cite{condat2014fast},
the observed complexity in practice is linear
and is competitive with breakpoint searching algorithms
\cite{condat2014fast,kiwiel2008variable}.

While there exist efficient projection algorithms for optimizing
the SVM hinge loss and its descendants, the situation is a bit
more complicated for logistic regression, both binary and multiclass.
There exists no analytical solution for an update with the logistic loss,
and \cite{shalev2014accelerated} suggest a formula in the binary case
which computes an approximate update in closed form.
Multiclass logistic (softmax) loss is optimized
in the SPAMS toolbox \cite{mairal2010network},
which implements FISTA \cite{beck2009fast}.
Alternative optimization methods are considered
in \cite{yu2011dual} who also propose a two-level coordinate
descent method in the multiclass case.
Different from these works, we propose to follow closely
the same variable fixing scheme that is used for SVM training
and use the Lambert $W$ function \cite{corless1996lambertw}
in the resulting entropic proximal map.
Our runtime compares favourably with SPAMS,
as we show in \S~\ref{sec:multiclass-experiments}.

\textbf{Image classification.}
Multiclass and multilabel image classification are the main
applications that we consider in this work to evaluate
the proposed loss functions.
We employ a relatively simple image recognition pipeline
following \cite{simonyan2014very},
where feature vectors are extracted from
a convolutional neural network (ConvNet),
such as the VGGNet \cite{simonyan2014very}
or the ResNet \cite{he2016deep},
and are then used to train a linear classifier
with the different loss functions.
The ConvNets that we use are pre-trained on the large scale
ImageNet \cite{ILSVRCarxiv14} dataset,
where there is a large number of object categories ($1000$),
but relatively little variation in scale and location
of the central object.
For scene recognition,
we also use a VGGNet-like architecture \cite{wang2015places}
that was trained on the Places 205 \cite{zhou2014learning} dataset.

Despite the differences between the benchmarks \cite{torralba2011unbiased},
image representations learned by ConvNets on large datasets
have been observed to transfer well \cite{oquab2014learning,razavian2014cnn}.
We follow that scheme in single-label experiments,
\eg when recognizing birds \cite{wah2011caltech}
and flowers \cite{nilsback2008automated}
using a network trained on ImageNet,
or when transferring knowledge in scene recognition
\cite{quattoni2009recognizing,xiao2010sun}.
However, moving on to multi-label classification on
Pascal VOC \cite{everingham2010pascal}
and Microsoft COCO \cite{lin2014microsoft},
we need to account for increased variation in scale and object placement.

While the earlier works ignore explicit search for object location
\cite{chatfield2014return,zeiler2014visualizing},
or require bounding box annotation
\cite{ren2015faster,sermanet2013overfeat,zhao2016regional},
recent results indicate that effective classifiers
for images with multiple objects in cluttered scenes
can be trained from weak image-level annotation
by explicitly searching over multiple scales and locations
\cite{oquab2015object,wei2016hcp,wei2016region,wang2016cnn,wang2016beyond}.
Our multilabel setup follows closely the pipeline of \cite{wei2016region}
with a few exceptions detailed in \S~\ref{sec:multilabel-experiments}.

\section{Loss Functions for Classification}%
\label{sec:loss-functions}
When choosing a loss function, one may want to consider several aspects.
First, at the basic level, the loss function depends on %
the available annotation and the performance metric one is interested in,
\eg we distinguish between (single label) multiclass
and multilabel losses in this work.
Next, there are two fundamental factors that control
the \emph{statistical} and the \emph{computational} behavior of learning.
For computational reasons, we work with convex surrogate losses
rather than with the performance metric directly.
In that context, a relevant distinction is between
the nonsmooth Lipschitz functions (\SvmMulti, \SvmTopK{k})
and the smooth functions (\LrMulti, \SvmMultig{\gamma}, \SvmTopKg{k}{\gamma})
with strongly convex conjugates that lead to faster convergence rates.
From the statistical perspective, it is important to understand
if the surrogate loss is classification calibrated as it is
an attractive asymptotic property that leads to Bayes consistent classifiers.
Finally, one may exploit duality and introduce modifications to
the conjugates of existing functions that have desirable effects
on the primal loss (\LrTopK{k}).

The rest of this section covers the technical background
that is used later in the paper.
We discuss our notation, introduce multiclass and multilabel
classification, recall the standard approaches to classification,
and introduce our recently proposed methods for top-$k$ error minimization.

In \S~\ref{sec:perf-metrics}, we discuss multiclass and multilabel
performance evaluation measures that are used later in our experiments.
In \S~\ref{sec:multiclass-methods}, we review established
multiclass approaches and introduce our novel top-$k$ loss functions;
we also recall Moreau-Yosida regularization as a smoothing technique
and compute convex conjugates for SDCA optimization.
In \S~\ref{sec:multilabel-methods}, we discuss multilabel classification
methods, introduce the smooth multilabel SVM,
and compute the corresponding convex conjugates.
To enhance readability, we defer all the proofs to the appendix.

\textbf{Notation.}
We consider classification problems with a predefined set of $m$ classes.
We begin with \emph{multiclass} classification,
where every example $x_i \in \Xc$
has exactly \emph{one} label $y_i \in \Yc \bydef \{ 1, \ldots, m \}$,
and later generalize to the \emph{multilabel} setting,
where each example is associated with a \emph{set} of labels
$Y_i \subset \Yc$.
In this work, a classifier is a function $f: \Xc \rightarrow \R^m$
that induces a ranking of class labels via the prediction scores
$f(x) = \big( f_y(x) \big)_{y \in \Yc}$.
In the linear case, each predictor $f_y$ has
the form $f_y(x) = \inner{w_y, x}$,
where $w_y \in \Rb^d$ is the parameter to be learned.
We stack the individual parameters into a weight matrix
$W \in \R^{d \times m}$, so that $f(x) = \tra{W} x$.
While we focus on linear classifiers with $\Xc \equiv \Rb^d$
in the exposition below and in most of our experiments,
all loss functions are formulated in the general setting
where the kernel trick \cite{scholkopf2002learning}
can be employed to construct nonlinear decision surfaces.
In fact, we have a number of experiments with the RBF kernel as well.

At test time, prediction depends on the evaluation metric
and generally involves sorting / producing the top-$k$
highest scoring class labels in the multiclass setting,
and predicting the labels that score above a certain threshold $\delta$
in multilabel classification.
We come back to performance metrics shortly.

We use $\pi$ and $\tau$ to denote permutations of (indexes) $\Yc$.
Unless stated otherwise, $a_\pi$ reorders components of a vector $a$
in descending order,
$ a_{\pi_1} \geq a_{\pi_2} \geq \ldots \geq a_{\pi_m} $.
Therefore, for example, $a_{\pi_1} = \max_j a_j$.
If necessary, we make it clear which vector is being sorted
by writing $\pi(a)$ to mean $\pi(a) \in \argsort a$
and let $\pi_{1:k}(a) \bydef \{ \pi_1(a), \ldots, \pi_k(a) \}$.
We also use the Iverson bracket defined as $\iv{P} = 1$ if $P$ is true
and $0$ otherwise;
and introduce a shorthand for the conditional probability
$p_y(x) \bydef \Pr(Y=y \given X=x)$.
Finally, we let $\wo{a}{y}$ be obtained
by removing the $y$-th coordinate from $a$.

We consider $\ell_2$-regularized objectives in this work,
so that if $L : \Yc \times \R^m \rightarrow \R_+$ is a multiclass loss
and $\lambda > 0$ is a regularization parameter,
classifier training amounts to solving
$
\min_W \frac{1}{n}\sum_{i=1}^n L(y_i, \tra{W} x_i) + \lambda \norm{W}_F^2 .
$
Binary and multilabel classification problems only differ in the loss $L$.

\subsection{Performance Metrics}\label{sec:perf-metrics}

Here, we briefly review performance evaluation metrics employed
in multiclass and multilabel classification.

\textbf{Multiclass.}
A standard performance measure for classification problems is the zero-one loss,
which simply counts the number of classification mistakes
\cite{friedman2001elements, duda2012pattern}.
While that metric is well understood and
inspired such popular surrogate losses as the SVM hinge loss,
it naturally becomes more stringent as the number of classes increases.
An alternative to the standard zero-one error
is to allow $k$ guesses instead of one.
Formally, the \textbf{top-$k$ zero-one loss} (\textbf{top-$k$ error}) is
\begin{align}\label{eq:topk-error} 
\kerr{y, f(x)} \bydef \iv{f_{\pi_k}(x) > f_y(x)} .
\end{align}
That is, we count a mistake if the ground truth label $y$ scores below
$k$ other class labels.
Note that for $k=1$ we recover the standard zero-one error.
\textbf{Top-$k$ accuracy} is defined as $1$ minus the top-$k$ error,
and performance on the full test sample is computed as the mean across
all test examples.

\textbf{Multilabel.}
Several groups of multilabel evaluation metrics are established
in the literature and it is generally suggested that
multiple contrasting measures should be reported to avoid skewed results.
Here, we give a brief overview of the metrics that we report
and refer the interested reader to
\cite{madjarov2012extensive,zhang2014review,koyejo2015consistent},
where multilabel metrics are discussed in more detail.

\textsl{Ranking based.}
This group of performance measures compares the ranking of the labels
induced by $f_y(x)$ to the ground truth ranking.
We report the \textbf{rank loss} defined as
$$
\RLoss(f) = \tfrac{1}{n} \tsum_{i=1}^n
\abs{D_i} / (\abs{Y_i} \abs{\bar{Y}_i}) ,
$$
where
$D_i = \{ (y, \bar{y}) \given f_y(x_i) \leq f_{\bar{y}}(x_i), \;
(y, \bar{y}) \in Y_i \times \bar{Y}_i \}$
is the set of reversely ordered pairs,
and $\bar{Y}_i \bydef \Yc \setminus Y_i$ is the complement of $Y_i$.
This is the loss that is implicitly optimized by all
multiclass / multilabel loss functions that we consider
since they induce a penalty when $f_{\bar{y}}(x_i) - f_y(x_i) > 0$.

Ranking class labels for a given image is similar to
ranking documents for a user query in information retrieval
\cite{liu2009learning}.
While there are many established metrics \cite{manning2008introduction},
a popular measure that is relevant to our discussion is
\textbf{precision-at-$k$} (\PrecAtK{k}),
which is the fraction of relevant items within the top $k$ retrieved
\cite{joachims2005support,mcfee2010metric}.
Although this measure makes perfect sense when $k \ll \abs{Y_i}$,
\ie there are many more relevant documents than we possibly want to examine,
it is not very useful when there are only a few correct
labels per image -- once all the relevant labels are in the top $k$ list,
\PrecAtK{k} starts to decrease as $k$ increases.
A better alternative in our multilabel setting is
a complementary measure, \textbf{recall-at-$k$},
defined as
$$
\RecAtK{k}(f) = \tfrac{1}{n} \tsum_{i=1}^n
\big( \pi_{1:k}(f(x_i)) \cap \abs{Y_i} \big) / \abs{Y_i} ,
$$
which measures the fraction of relevant labels in the top $k$ list.
Note that \RecAtK{k} is a natural generalization of the top-$k$ error
to the multilabel setting and coincides with that multiclass metric
whenever $Y_i$ is singleton.

Finally, we report the standard Pascal VOC \cite{everingham2010pascal}
performance measure, mean average precision (\textbf{mAP}),
which is computed as the one-vs-all AP averaged over all classes.

\textsl{Partition based.}
In contrast to ranking evaluation,
partition based measures assess the quality of the actual
multilabel prediction which requires
a cut-off \textbf{threshold} $\delta \in \Rb$.
Several threshold selection strategies have been proposed in the literature:
(i) setting a constant threshold prior to experiments \cite{dembczynski2010bayes};
(ii) selecting a threshold \emph{a posteriori}
by matching label cardinality \cite{read2009classifier};
(iii) tuning the threshold on a validation set
\cite{koyejo2015consistent,yang1999evaluation};
(iv) learning a regression function \cite{elisseeff2001kernel};
(v) bypassing threshold selection altogether by introducing
a (dummy) calibration label \cite{furnkranz2008multilabel}.
We have experimented with options (ii) and (iii),
as discussed in \S~\ref{sec:multilabel-experiments}.

Let $h(x) \bydef \{ y \in \Yc \given f_y(x) \geq \delta \}$
be the set of predicted labels for a given threshold $\delta$,
and let
\begin{align*}
\tp_{i,j} &= \iv{j \in h(x_i), j \in Y_i}, &
\tn_{i,j} &= \iv{j \notin h(x_i), j \notin Y_i}, \\
\fp_{i,j} &= \iv{j \in h(x_i), j \notin Y_i}, &
\fn_{i,j} &= \iv{j \notin h(x_i), j \in Y_i},
\end{align*}
be a set of $m \cdot n$ primitives defined as in \cite{koyejo2015consistent}.
Now, one can use any performance measure $\Psi$
that is based on the binary confusion matrix,
but, depending on where the averaging occurs,
the following three groups of metrics are recognized.

\noindent
\textbf{Instance-averaging.}
The binary metrics are computed on the averages over labels
and then averaged across examples:
$$
\Psi^{\rm inst}(h) = \tfrac{1}{n} \tsum_{i=1}^n \Psi\big(
\tfrac{1}{m} \tsum_{j=1}^m \tp_{i,j}, \ldots,
\tfrac{1}{m} \tsum_{j=1}^m \fn_{i,j} \big) .
$$

\noindent
\textbf{Macro-averaging.}
The metrics are averaged across labels:
$$
\Psi^{\rm mac}(h) = \tfrac{1}{m} \tsum_{j=1}^m \Psi\big(
\tfrac{1}{n} \tsum_{i=1}^n \tp_{i,j}, \ldots,
\tfrac{1}{n} \tsum_{i=1}^n \fn_{i,j} \big) .
$$

\noindent
\textbf{Micro-averaging.}
The metric is applied on the averages over both labels and examples:
$$
\Psi^{\rm mic}(h) = \Psi\big(
\tfrac{1}{mn} \tsum_{i,j} \tp_{i,j}, \ldots,
\tfrac{1}{mn} \tsum_{i,j} \fn_{i,j} \big) .
$$
Following \cite{madjarov2012extensive}, we consider
the \textbf{$\mathbf{F_1}$ score} as the binary metric $\Psi$
with all three types of averaging.
We also report multilabel \textbf{accuracy}, \textbf{subset accuracy},
and the \textbf{hamming loss} defined respectively as
\begin{align*}
\Acc(h) &= \tfrac{1}{n} \tsum_{i=1}^n
(\abs{h(x_i) \cap Y_i}) / (\abs{h(x_i) \cup Y_i}) , \\
\SAcc(h) &= \tfrac{1}{n} \tsum_{i=1}^n
\iv{h(x_i) = Y_i} , \\
\HLoss(h) &= \tfrac{1}{mn} \tsum_{i=1}^n \abs{h(x_i) \triangle Y_i} ,
\end{align*}
where $\triangle$ is the symmetric set difference.

\subsection{Multiclass Methods}\label{sec:multiclass-methods}

In this section, we switch from performance evaluation at test time
to how the quality of a classifier is measured during training.
In particular, we introduce the loss functions
used in established multiclass methods
as well as our novel loss functions for optimizing the top-$k$ error
\eqref{eq:topk-error}.

\textbf{OVA.}
A multiclass problem is often solved using the one-vs-all (OVA)
reduction to $m$ independent binary classification problems.
Every class is trained versus the rest
which yields $m$ classifiers $\{f_y\}_{y \in \Yc}$.
Typically, each classifier $f_y$ is trained
with a convex margin-based loss function $L(\tilde{y} f_y(x))$,
where $L: \Rb \rightarrow \Rb_+$, $\tilde{y} = \pm 1$.
Simplifying the notation, we consider
\begin{align}
L(yf(x)) &= \max\{0, \, 1 - y f(x)\} ,
\tag{\SvmOva}\label{eq:ova-hinge} \\
L(yf(x)) &= \log(1 + e^{- y f(x)} ) .
\tag{\LrOva}\label{eq:ova-lr}
\end{align}
The hinge \eqreftext{eq:ova-hinge} and
logistic \eqreftext{eq:ova-lr} losses correspond to
the SVM and logistic regression methods respectively.

\textbf{Multiclass.}
An alternative to the OVA scheme above
is to use a \emph{multiclass} loss
$L: \Yc \times \Rb^m \rightarrow \Rb_+$ directly.
All multiclass losses that we consider only depend on pairwise
differences between the ground truth score $f_y(x)$
and all the other scores $f_j(x)$.
Loss functions from the SVM family additionally require a \emph{margin}
$\Delta(y,j)$, which can be interpreted as a distance
in the label space \cite{tsochantaridis2005large}
between $y$ and $j$.
To simplify the notation, we use vectors $a$ (for the differences)
and $c$ (for the margin) defined for a given $(x,y)$ pair as
\begin{align*}
a_j \bydef f_j(x) - f_y(x), \;
c_j \bydef 1 - \iv{y = j} , \;
j = 1, \ldots, m.
\end{align*}
We also write $L(a)$ instead of the full $L(y,f(x))$.

We consider two generalizations of \reftext{eq:ova-hinge} and
\reftext{eq:ova-lr}:
\begin{align}
&L(a) = \max_{j \in \Yc} \{ a_j + c_j \} ,
\tag{\SvmMulti}\label{eq:multi-hinge} \\
&L(a) = \log\big( \tsum_{j \in \Yc} \exp(a_j) \big) .
\tag{\LrMulti}\label{eq:softmax}
\end{align}
Both the multiclass SVM loss \eqreftext{eq:multi-hinge}
of \cite{crammer2001algorithmic}
and the softmax loss \eqreftext{eq:softmax}
are common in multiclass problems.
The latter is particularly popular in deep architectures
\cite{bengio2009learning, krizhevsky2012imagenet, simonyan2014very},
while \reftext{eq:multi-hinge} is also competitive
in large-scale image classification \cite{akata2014good}.

The OVA and multiclass methods were designed
with the goal of minimizing the standard zero-one loss.
Now, if we consider the top-$k$ error \eqref{eq:topk-error}
which does not penalize $(k-1)$ mistakes,
we discover that convexity of the above losses
leads to phenomena where $\kerr{y, f(x)} = 0$,
but $L(y,f(x)) \gg 0$.
That happens, for example, when
$f_{\pi_1}(x) \gg f_y(x) \geq f_{\pi_k}(x)$,
and creates a bias if we are working with
rigid function classes such as linear classifiers.
Next, we introduce loss functions
that are modifications of the above losses
with the goal of alleviating that phenomenon.

\textbf{Top-$k$ SVM.}
Recently, we introduced Top-$k$ Multiclass SVM \cite{lapin2015topk},
where two modifications of the multiclass hinge loss \eqreftext{eq:multi-hinge}
were proposed.
The first version ($\alpha$) is motivated directly by the top-$k$ error
while the second version ($\beta$) falls into a general family of ranking losses
introduced earlier by Usunier \etal~\cite{usunier2009ranking}.
The two top-$k$ SVM losses are
\begin{align}
L(a) &= \max \big\{0, \tfrac{1}{k} \tsum_{j=1}^k
(a + c)_{\pi_j} \big\} ,
\tag{\SvmTopKa{k}}\label{eq:topk-hinge-alpha} \\
L(a) &= \tfrac{1}{k} \tsum_{j=1}^k \max \big\{0,
(a + c)_{\pi_j} \big\} ,
\tag{\SvmTopKb{k}}\label{eq:topk-hinge-beta}
\end{align}
where $\pi$ reorders the components of $(a + c)$ in descending order.
We show in \cite{lapin2015topk} that \reftext{eq:topk-hinge-alpha}
offers a tighter upper bound on the top-$k$ error than
\reftext{eq:topk-hinge-beta}.
However, both losses perform similarly in our experiments
with only a small advantage of \reftext{eq:topk-hinge-beta}
in some settings.
Therefore, when the distinction is not important,
we simply refer to them as the top-$k$ hinge
or the top-$k$ SVM loss.
Note that they both reduce to \reftext{eq:multi-hinge} for $k=1$.

Top-$k$ SVM losses are not smooth which has implications
for their optimization (\S~\ref{sec:optimization})
and top-$k$ calibration (\S~\ref{sec:topk-calibration-multiclass}).
Following \cite{shalev2014accelerated},
who employed Moreau-Yosida regularization
\cite{BecTeb2012, nesterov2005smooth}
to obtain a smoothed version of
the binary hinge loss \eqreftext{eq:ova-hinge},
we applied the same technique in \cite{lapin2016loss}
and introduced smooth top-$k$ SVM.

\textbf{Moreau-Yosida regularization.}
We follow \cite{parikh2014proximal}
and give the main points here for completeness.
The \emph{Moreau envelope} or \emph{Moreau-Yosida regularization}
$M_f$ of the function $f$ is
$$
M_f(v) \bydef \inf_x \big( f(x) + (1/2) \norms{x-v}_2 \big).
$$
It is a smoothed or regularized form of $f$ with the following nice properties:
it is continuously differentiable on $\Rb^d$, even if $f$ is not,
and the sets of minimizers of $f$ and $M_f$ are the same\footnote{
That does not imply that we get the same classifiers
since we are minimizing a regularized sum
of individually smoothed loss terms.}.
To compute a smoothed top-$k$ hinge loss, we use
$$
M_f = \big( f^* + (1/2)\norms{\,\cdot\,}_2 \big)^*,
$$
where $f^*$ is the convex conjugate\footnote{
The \textbf{convex conjugate} of $f$ is
$f^*(x^*) = \sup_x \{ \inner{x^*, x} - f(x) \}$.
}
of $f$.
A classical result in convex analysis \cite{HirLem2001}
states that a conjugate of a strongly convex function
has Lipschitz smooth gradient, therefore,
$M_f$ is indeed a smooth function.

\textbf{Top-$k$ hinge conjugate.}
Here, we compute the conjugates of the top-$k$ hinge losses
$\alpha$ and $\beta$.
As we show in \cite{lapin2015topk},
their effective domains\footnote{
The \textbf{effective domain} of $f$ is
$\dom f = \{x \in X \given f(x) < +\infty \}$.
}
are given by the \textbf{top-$k$ simplex}
($\alpha$ and $\beta$ respectively)
of radius $r$ defined as
\begin{align}
\Ska(r) &\bydef
\big\{ x \given \inner{\ones, x} \leq r, \;
0 \leq x_i \leq \tfrac{1}{k} \inner{\ones, x}, \; \forall i \big\} ,
\label{eq:topk-simplex-alpha} \\
\Skb(r) &\bydef
\big\{ x \given \inner{\ones, x} \leq r, \;
0 \leq x_i \leq \tfrac{1}{k} r , \; \forall i \big\} .
\label{eq:topk-simplex-beta}
\end{align}
We let $\Ska = \Ska(1)$, $\Skb = \Skb(1)$,
and note the relation
$
\Ska \subset \Skb \subset \Sm,
$
where
$\Sm = \big\{ x \given \inner{\ones, x} \leq 1, \; x_i \geq 0 \big\}$
is the unit simplex and the inclusions are
proper for $k > 1$, while for $k=1$ all three sets coincide.

\begin{proposition}[\!\!\cite{lapin2015topk}]\label{prop:topk-conj}
The convex conjugate of \reftext{eq:topk-hinge-alpha} is
\begin{align*}
L^*(v) &= \begin{cases}
- \tsum_{j \neq y} v_j &
\text{if } \inner{\ones, v} = 0 \text{ and } \wo{v}{y} \in \Ska, \\
+ \infty & \text{otherwise}.
\end{cases}
\end{align*}
The conjugate of \reftext{eq:topk-hinge-beta} is defined
in the same way, but with the set $\Skb$ instead of $\Ska$.
\end{proposition}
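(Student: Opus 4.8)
The plan is to read $L^*$ off a support-function representation of the loss via Fenchel--Moreau biconjugacy, and then to account for the ground-truth coordinate, which is what produces the constraint $\inner{\ones, v} = 0$. First I would express the loss as a support function over the top-$k$ simplex. Since $c_y = 0$ and $c_j = 1$ for $j \neq y$, the loss only involves the $m-1$ shifted differences $b_j \bydef a_j + 1$, $j \neq y$. The average of the $k$ largest entries of a vector is maximized linearly over the capped simplex, so $\tfrac1k \tsum_{j=1}^k b_{\pi_j} = \max\{ \inner{q, b} : \inner{\ones, q} = 1,\ 0 \le q_i \le \tfrac1k \}$. The outer $\max\{0, \cdot\}$ adjoins the origin to the feasible set, and a short computation shows that the convex hull of this capped simplex together with $0$ is exactly $\Ska$; since a linear functional attains the same maximum over a set and over its convex hull, we get
\begin{align*}
L(a) = \max_{q \in \Ska} \inner{q, b} .
\end{align*}
For the $\beta$ loss the same steps apply, but the sum of top-$k$ \emph{positive parts} is already a support function over the box-capped set $\Skb$ (no convexification needed), giving $L(a) = \max_{q \in \Skb} \inner{q, b}$. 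Identifying the correct simplex here is the crux: it is the hull-with-origin step that distinguishes $\Ska$ from $\Skb$.

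Given the support-function form, the conjugate in the reduced variable is immediate. Writing $\inner{q, b} = \inner{q, \tilde a} + \inner{\ones, q}$ with $\tilde a \bydef (a_j)_{j \neq y}$, the reduced loss equals $g^*$ for $g(q) \bydef -\inner{\ones, q}$ on $\Ska$ and $+\infty$ elsewhere; as $g$ is closed, proper, and convex, Fenchel--Moreau yields the conjugate of the reduced loss as $g$ itself, i.e.\ $-\inner{\ones, u} = -\tsum_{j \neq y} u_j$ on the effective domain $\Ska$.

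It remains to pass from the reduced variable back to $v \in \R^m$. The loss is $\phi \circ P$, where $P$ maps the scores $f$ to the differences $(f_j - f_y)_{j \neq y}$ and $\phi$ is the reduced loss above. From $(\phi \circ P)^*(v) = \inf_{\tra{P} u = v} \phi^*(u)$ together with $(\tra{P} u)_j = u_j$ for $j \neq y$ and $(\tra{P} u)_y = -\inner{\ones, u}$, the equation $\tra{P} u = v$ is solvable only when $\inner{\ones, v} = 0$, in which case it forces $u = \wo{v}{y}$. Substituting gives exactly $L^*(v) = -\tsum_{j \neq y} v_j$ under $\inner{\ones, v} = 0$ and $\wo{v}{y} \in \Ska$, and $+\infty$ otherwise; the $\beta$ case is identical with $\Skb$ in place of $\Ska$. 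I expect the convex-hull identification in the first step to be the main obstacle; the coordinate bookkeeping in the last step is routine but must be handled carefully to produce the sum-to-zero constraint.
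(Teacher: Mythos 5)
Your argument is correct: the identification of \reftext{eq:topk-hinge-alpha} as the support function of $\Ska$ (via taking the convex hull of the capped simplex with the origin), of \reftext{eq:topk-hinge-beta} as the support function of $\Skb$, and the subsequent chain rule for the conjugate under the difference map (whose kernel $\mathrm{span}(\ones)$ produces the constraint $\inner{\ones,v}=0$ and forces $u=\wo{v}{y}$) all check out. Note that the paper does not prove this proposition itself but imports it from \cite{lapin2015topk}; your derivation is essentially the one given there, and the final coordinate-bookkeeping step is the same pseudoinverse/kernel argument the paper uses for the analogous softmax conjugate in Proposition~\ref{prop:softmax-conjugate}.
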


Note that the conjugates of both top-$k$ SVM losses
coincide and are equal to the conjugate of
the \reftext{eq:multi-hinge} loss
with the exception of their effective domains,
which are $\Ska$, $\Skb$, and $\Sm$ respectively.
As becomes evident in \S~\ref{sec:optimization},
the effective domain of the conjugate is
the feasible set for the dual variables.
Therefore, as we move from \reftext{eq:multi-hinge}
to \reftext{eq:topk-hinge-beta},
to \reftext{eq:topk-hinge-alpha},
we introduce more and more constraints on the dual variables
thus limiting the extent to which a single training example
can influence the classifier.

\textbf{Smooth top-$k$ SVM.}
We apply the smoothing technique introduced above
to \reftext{eq:topk-hinge-alpha}.
Smoothing of \reftext{eq:topk-hinge-beta} is done similarly,
but the set $\Ska(r)$ is replaced with $\Skb(r)$.

\ifprop
\begin{proposition}\label{prop:topk-smooth}
Let $\gamma > 0$ be the smoothing parameter.
The smooth top-$k$ hinge loss ($\alpha$) and its conjugate are
\begin{align}
L_{\gamma}(a) &= \tfrac{1}{\gamma} \big(
\innern{\wo{(a + c)}{y}, p} - \tfrac{1}{2} \normn{p}^2 \big) ,
\tag{\SvmTopKag{k}{\gamma}}\label{eq:smooth-topk-alpha} \\
L_{\gamma}^*(v) &= \begin{cases}
\tfrac{\gamma}{2} \normn{\wo{v}{y}}^2 - \innern{\wo{v}{y}, \wo{c}{y}} &
\text{if } \inner{\ones, v} = 0, \; \wo{v}{y} \in \Ska, \\
+\infty & \text{otherwise},
\end{cases}\notag%
\end{align}
where
$p = \proj_{\Ska(\gamma)}\wo{(a + c)}{y}$
is the Euclidean projection of $\wo{(a + c)}{y}$ onto $\Ska(\gamma)$.
Moreover,
$L_{\gamma}(a)$ is $1/\gamma$-smooth.
\end{proposition}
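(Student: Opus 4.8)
The plan is to build $L_\gamma$ from the unsmoothed conjugate supplied by Prop.~\ref{prop:topk-conj} and to carry out the whole computation in the reduced $(m-1)$-dimensional variable $\wo{v}{y}$, since the loss sees the score vector only through $\wo{(a+c)}{y}$. Prop.~\ref{prop:topk-conj} tells us that on the hyperplane $\inner{\ones,v}=0$ the conjugate $L^*$ is the sum of the linear term $-\innern{\wo{v}{y}, \wo{c}{y}}$ and the indicator of the top-$k$ simplex $\Ska$ in the variable $\wo{v}{y}$. Following the Moreau--Yosida recipe from the text, I would \emph{define} the $\gamma$-smoothed loss by adding the strongly convex penalty $v \mapsto \tfrac{\gamma}{2}\normn{\wo{v}{y}}^2$ to the conjugate, so that by construction $L_\gamma^*(v) = L^*(v) + \tfrac{\gamma}{2}\normn{\wo{v}{y}}^2$ and $L_\gamma$ is its biconjugate, once we check that this bracketed function is closed, proper, and convex.

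The conjugate formula is then immediate: on $\{\inner{\ones,v}=0,\ \wo{v}{y}\in\Ska\}$ it equals $\tfrac{\gamma}{2}\normn{\wo{v}{y}}^2 - \innern{\wo{v}{y}, \wo{c}{y}}$ and is $+\infty$ otherwise, as claimed. For the primal $L_\gamma = (L_\gamma^*)^*$, I would pass to the reduced variable $u = \wo{v}{y}$, where the linear terms recombine so that $L_\gamma$ becomes $\sup_{u\in\Ska}\{\innern{u, b} - \tfrac{\gamma}{2}\normn{u}^2\}$ with $b \bydef \wo{(a+c)}{y}$. Completing the square rewrites this as $\tfrac{1}{2\gamma}\normn{b}^2 - \tfrac{\gamma}{2}\min_{u\in\Ska}\normn{u - b/\gamma}^2$, so the optimizer is $u^\star = \proj_{\Ska}(b/\gamma)$. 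The one nontrivial identity is the projection scaling $\proj_{\Ska(\gamma)}(b) = \gamma\,\proj_{\Ska}(b/\gamma)$, which holds because $\Ska(\gamma) = \gamma\,\Ska$; it lets me write $u^\star = p/\gamma$ for $p = \proj_{\Ska(\gamma)}(b)$. Substituting $u^\star = p/\gamma$ back collapses the objective to $\tfrac{1}{\gamma}\big(\innern{b, p} - \tfrac{1}{2}\normn{p}^2\big)$, which is exactly the stated $L_\gamma(a)$.

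For $1/\gamma$-smoothness I would invoke duality: $L_\gamma^*$ is $\gamma$-strongly convex in the reduced variable $\wo{v}{y}$, being the sum of the quadratic $v \mapsto \tfrac{\gamma}{2}\normn{\wo{v}{y}}^2$ and a convex function, and the classical result \cite{HirLem2001} that the conjugate of a $\gamma$-strongly convex function has $1/\gamma$-Lipschitz gradient yields the smoothness of $L_\gamma = (L_\gamma^*)^*$. The main obstacle I anticipate is the bookkeeping between the full dual vector $v \in \Rb^m$ and its reduced form $\wo{v}{y}$: one must keep the constraint $\inner{\ones,v}=0$ throughout---it encodes the invariance of the loss under adding a constant to all scores---and confirm closedness of $L_\gamma^*$ so that biconjugation returns $L_\gamma$ exactly. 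Everything else, namely completing the square and the projection-scaling identity, is routine once the reduced formulation is in place.
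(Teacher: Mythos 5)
Your proposal is correct and follows essentially the same route as the paper: smooth by adding $\tfrac{\gamma}{2}\normn{\cdot}^2$ to the conjugate from Proposition~\ref{prop:topk-conj}, reduce to the $(m-1)$-dimensional variable with $\inner{\ones,v}=0$, recognize the biconjugate as a Euclidean projection problem onto the top-$k$ simplex, and get $1/\gamma$-smoothness from strong convexity of the conjugate via \cite{HirLem2001}. The only cosmetic difference is that you complete the square and invoke the scaling identity $\proj_{\Ska(\gamma)}(b)=\gamma\,\proj_{\Ska}(b/\gamma)$, whereas the paper absorbs the rescaling directly into the constraint via the substitution $v/\gamma\in\Ska(1)\Leftrightarrow v\in\Ska(\gamma)$; both are valid and yield the stated formula.
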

\fi
\ifproof
\begin{proof}
We take the convex conjugate of the top-$k$ hinge loss,
which was derived in \cite[Proposition~2]{lapin2015topk},
and add a regularizer $\frac{\gamma}{2} \inner{v,v}$
to obtain the $\gamma$-strongly convex conjugate loss $L_{\gamma}^*(v)$.
Note that since $v_y=-\sum_{j \neq y} v_j$ and $a_y = f_y(x) - f_y(x) = 0$,
we only need to work with $(m-1)$-dimensional vectors where the $y$-th
coordinate is removed.
The primal loss $L_{\gamma}(a)$,
obtained as the convex conjugate of
$L_{\gamma}^*(v)$, is $1/\gamma$-smooth
due to a known result in convex analysis \cite{HirLem2001}
(see also \cite[Lemma~2]{shalev2014accelerated}).
We now derive a formula to compute it based on the Euclidean
projection onto the top-$k$ simplex.
By definition,
\begin{align*}
L_{\gamma}(a) &=
\sup_{v' \in \Rb^m} \{ \inner{a,v'} - L_{\gamma}^*(v') \} \\
&=
\max_{v \in \Ska(1)}
\big\{ \innern{\wo{a}{y},v} - \tfrac{\gamma}{2} \inner{v,v}
+ \innern{v, \wo{c}{y}} \big\} \\
&=
- \tfrac{1}{\gamma} \min_{\frac{v}{\gamma} \in \Ska(1)}
\big\{ \tfrac{1}{2} \inner{v, v} - \innern{\wo{(a + c)}{y}, v} \big\} .
\end{align*}
For the constraint $\frac{v}{\gamma} \in \Ska(1)$, we have
\begin{align*}
\inner{\ones, v / \gamma} &\leq 1, &
0 &\leq v_i / \gamma \leq
\tfrac{1}{k} \inner{\ones, v / \gamma} \;
\Longleftrightarrow \\
\inner{\ones, v } &\leq \gamma, &
0 &\leq v_i \leq
\tfrac{1}{k} \inner{\ones, v} \;
\Longleftrightarrow 
\; v \in \Ska(\gamma) .
\end{align*}
The final expression follows from the fact that
\begin{align*}
&\argmin_{v \in \Ska(\gamma)}
\big\{ \tfrac{1}{2} \inner{v, v} - \innern{\wo{(a + c)}{y}, v} \big\} \\
&\equiv \argmin_{v \in \Ska(\gamma)} \normn{\wo{(a + c)}{y} - v}^2 
\equiv \proj_{\Ska(\gamma)}\wo{(a + c)}{y} .
\end{align*}
\end{proof}
\fi

While there is no analytic formula for the
\reftext{eq:smooth-topk-alpha} loss,
it can be computed efficiently via the projection
onto the top-$k$ simplex \cite{lapin2015topk}.
We can also compute its gradient as
\begin{align*}
\nabla L_{\gamma}(a) &= (1/\gamma)
\big( \Id_y - e_y \ones_y^\top \big)
\proj_{\Ska(\gamma)}\wo{(a + c)}{y} ,
\end{align*}
where $\Id_y$ is the identity matrix w/o the $y$-th column,
$e_y$ is the $y$-th standard basis vector,
and $\ones_y$ is the $(m-1)$-dimensional vector of all ones.
This follows from the definition of $a$, the fact that $L_{\gamma}(a)$
can be written as $\tfrac{1}{2 \gamma}(\norms{x} - \norms{x - p})$
for $x = \wo{(a + c)}{y}$ and $p = \proj_{\Ska(\gamma)}(x)$,
and a known result
\cite{borwein2000convex}
which says that
$\nabla_x \tfrac{1}{2}\norms{x - \proj_C(x)} = x - \proj_C(x)$
for any closed convex set $C$.

\textbf{Smooth multiclass SVM (\SvmMultig{\gamma}).}
We also highlight an important special case of \reftext{eq:smooth-topk-alpha}
that performed remarkably well in our experiments.
It is a smoothed version of \reftext{eq:multi-hinge}
and is obtained with $k=1$ and $\gamma > 0$.

\textbf{Softmax conjugate.}
Before we introduce a top-$k$ version of the softmax loss \eqreftext{eq:softmax},
we need to recall its conjugate.

\ifprop
\begin{proposition}\label{prop:softmax-conjugate}
The convex conjugate of the \reftext{eq:softmax} loss is
\begin{align}\label{eq:softmax-conjugate}
L^*(v) &= \begin{cases}
\sum_{j \neq y} v_j \log v_j + (1 + v_y) \log(1 + v_y) , \\
\quad\quad\quad\!\text{if }
\inner{\ones, v} = 0 \text{ and } \wo{v}{y} \in \Sm , \\
+ \infty \quad \text{otherwise},
\end{cases}
\end{align}
where $\Sm = \big\{ x \given \inner{\ones, x} \leq 1, \; x_j \geq 0 \big\}$
is the unit simplex.
\end{proposition}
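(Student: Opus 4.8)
The plan is to exploit the classical duality between the log-sum-exp function and the negative entropy, while taking care that the softmax loss is written in terms of $a$ with the ground-truth coordinate pinned to zero, $a_y = f_y(x) - f_y(x) = 0$. Since $\exp(a_y)=1$, I would first rewrite the loss as $L(\wo{a}{y}) = \log\big(1 + \sum_{j\neq y}\exp(a_j)\big)$, so that it is genuinely a function of the $(m-1)$ free variables $\wo{a}{y}$ and the conjugate is computed over $\wo{a}{y}\in\Rb^{m-1}$. The target formula, which exhibits $v_y$ through the term $(1+v_y)\log(1+v_y)$ together with the constraint $\inner{\ones, v}=0$, is then recovered by embedding the resulting $(m-1)$-dimensional conjugate into $\Rb^m$ via $v_y\bydef -\sum_{j\neq y}v_j$; this is exactly the convention already used for the top-$k$ hinge conjugate in Proposition~\ref{prop:topk-conj}.

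The computation itself is a standard stationarity argument. I would write $L^*(\wo{v}{y}) = \sup_{\wo{a}{y}}\big\{\sum_{j\neq y}a_j v_j - \log(1 + \sum_{k\neq y}\exp(a_k))\big\}$ and differentiate, obtaining the softmax relation $v_j = \exp(a_j)/Z$ with $Z = 1 + \sum_{k\neq y}\exp(a_k)$. This already forces $v_j>0$ and $\sum_{j\neq y}v_j = (Z-1)/Z < 1$, so $\wo{v}{y}$ lies in the open unit simplex $\Sm$. Substituting $a_j = \log v_j + \log Z$ back into the objective and using $1/Z = 1 - \sum_{j\neq y}v_j$ collapses the expression to $\sum_{j\neq y} v_j\log v_j + \big(1 - \sum_{j\neq y}v_j\big)\log\big(1 - \sum_{j\neq y}v_j\big)$; writing $1 - \sum_{j\neq y}v_j = 1 + v_y$ under the embedding yields the stated value.

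To pin down the effective domain I would argue separately that the supremum is $+\infty$ whenever $\wo{v}{y}\notin\Sm$: if some $v_j<0$, then sending $a_j\to-\infty$ drives $a_j v_j\to+\infty$ while the log term stays bounded; and if $\sum_{j\neq y}v_j>1$, then taking $a_j=t\to+\infty$ uniformly makes the objective grow like $t\big(\sum_{j\neq y}v_j - 1\big)$. The relative boundary of $\Sm$ (some $v_j=0$, or $\sum_{j\neq y}v_j=1$) is handled by continuity of $t\mapsto t\log t$ with the convention $0\log 0 = 0$, so the entropy formula persists there even though the stationary point is attained only in the limit. The condition $\inner{\ones,v}=0$ is then simply the definition of the embedding $v_y = -\sum_{j\neq y}v_j$, and the conjugate is declared $+\infty$ off this hyperplane.

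The main obstacle here is bookkeeping rather than depth: one must be consistent about whether $a_y$ (and hence the dual coordinate $v_y$) is a free or a pinned variable. The clean route is to treat the conjugate as a function of the honest free variables $\wo{a}{y}$ and only afterwards re-express it on the hyperplane $\inner{\ones,v}=0$; conflating the two — for instance, taking the conjugate of $\log\sum_j\exp(a_j)$ over all of $\Rb^m$ without fixing $a_y=0$ — introduces a spurious degree of freedom in $v_y$ and produces the wrong effective domain.
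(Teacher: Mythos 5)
Your proposal is correct, and its computational core — the stationarity relation $v_j = e^{a_j}/Z$, the back-substitution $a_j = \log v_j + \log Z$, and the collapse to the negative entropy via $1/Z = 1+v_y$ — is exactly the paper's. Where you genuinely diverge is in how the problem is reduced to $(m-1)$ dimensions. The paper writes $L(u)=\phi(H_y u)$ with $H_y=\Id-\ones\tra{e_y}$ and takes the conjugate over all of $u\in\Rb^m$: the translation invariance of $L$ along $\Ker H_y=\{t\ones\}$ is what \emph{forces} $\inner{\ones,v}=0$ (with $L^*=+\infty$ otherwise), and the restriction to $\Img H_y$ is carried out via the Moore--Penrose pseudoinverse, giving $\tra{(\pinv{H}_y)}v=v-v_ye_y$. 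You instead pin $a_y=0$ from the start, conjugate over the free variables $\wo{a}{y}$, and then \emph{declare} the embedding $v_y=-\sum_{j\neq y}v_j$. That gives the right answer, but as stated it is a convention rather than a derivation; to match the proposition (whose $L^*$ is the $m$-dimensional conjugate used in the SDCA dual) you should add the one-line bridge: for any $u$, setting $a=u-u_y\ones$ gives $a_y=0$ and $\inner{u,v}=\inner{a,v}+u_y\inner{\ones,v}$, so the two suprema coincide precisely on the hyperplane $\inner{\ones,v}=0$ and the full conjugate blows up off it. With that line added, your route is arguably cleaner than the pseudoinverse computation. On the other hand, your treatment of the effective domain is \emph{more} complete than the paper's: you explicitly check that the supremum is $+\infty$ when some $v_j<0$ or $\sum_{j\neq y}v_j>1$, and you handle the relative boundary of $\Sm$ (where the stationary point is attained only in the limit) by continuity of $t\mapsto t\log t$; the paper only reads off $\wo{v}{y}\in\Sm$ from the first-order conditions and leaves those cases implicit.
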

\fi
\ifproof
\begin{proof}
Here, we use the notation $u \bydef f(x)$
as we need to take special care of the differences $f_j(x) - f_y(x)$
when computing the conjugate.
Therefore, the softmax loss is
\begin{align*}
L(u) = \log\big( \tsum_{j \in \Yc} \exp(u_j - u_y) \big)
= \log\big( \tsum_{j \in \Yc} \exp(a_j) \big) ,
\end{align*}
where $a = H_y u$ as before and $H_y \bydef \Id - \ones \tra{e_y}$.
Define
$$
\phi(u) \bydef \log\big( \tsum_{j \in \Yc} \exp(u_j) \big) ,
$$
then $L(u) = \phi(H_y u)$ and
the convex conjugate is computed similar to
\cite[Lemma~2]{lapin2015topk} as follows.
\begin{align*}
L^*(v)
= \sup\{ &\inner{u,v} - L(u) \given u \in \Rb^m \} \\
= \sup\{ &\inner{u,v} - \phi(H_y u) \given u \in \Rb^m \} \\
= \sup\{ &\innern{\para{u},v} + \innern{\ort{u},v} - \phi(H_y \ort{u})
\given \\
& \para{u} \in \Ker H_y, \ort{u} \in \ort{\Ker} H_y \} ,
\end{align*}
where
$\Ker H_y = \{ u \given H_y u = 0 \} = \{ t\ones \given t \in \Rb \}$
and
$\ort{\Ker} H_y = \{ u \given \inner{\ones, u} = 0 \}$.
It follows that $L^*(v)$ can only be finite if
$\innern{\para{u},v} = 0$, which implies
$v \in \ort{\Ker} H_y \Longleftrightarrow \inner{\ones, v} = 0$.
Let $\pinv{H}_y$ be the Moore-Penrose pseudoinverse of $H_y$.
For a $v \in \ort{\Ker} H_y$, we write
\begin{align*}
L^*(v) &=
\sup\{ \innern{\pinv{H}_y H_y \ort{u},v} - \phi(H_y \ort{u}) \given \ort{u} \} \\
&= \sup\{ \innern{z, \tra{(\pinv{H}_y)} v} - \phi(z) \given z \in \Img H_y \} ,
\end{align*}
where
$\Img H_y = \{ H_y u \given u \in \Rb^m \} = \{ u \given u_y = 0 \}$.
Using rank-$1$ update of the pseudoinverse
\cite[\S~3.2.7]{petersen2008matrix}, we have
$$
\tra{(\pinv{H}_y)} = \Id - e_y \tra{e}_y
- \frac{1}{m}(\ones - e_y) \tra{\ones} ,
$$
which together with $\inner{\ones, v} = 0$ implies
$
\tra{(\pinv{H}_y)} v = v - v_y e_y .
$
\begin{align*}
L^*(v)
&= \sup\{ \innern{u, v - v_y e_y} - \phi(u) \given u_y = 0 \} \\
&= \sup\big\{ \innern{\wo{u}{y}, \wo{v}{y}} -
\log\big( 1 + \tsum_{j \neq y} \exp(u_j) \big) \big\} .
\end{align*}
The function inside $\sup$ is concave and differentiable,
hence the global optimum is at the critical point \cite{boyd2004convex}.
Setting the partial derivatives to zero yields
\begin{align*}
v_j = \exp(u_j) / \big(1 + \tsum_{j \neq y} \exp(u_j) \big)
\end{align*}
for $j \neq y$, from which we conclude,
similar to \cite[\S~5.1]{shalev2014accelerated},
that
$\inner{\ones, v} \leq 1$ and
$0 \leq v_j \leq 1$ for all $j \neq y$, \ie
$\wo{v}{y} \in \Sm$.
Let $Z \bydef \sum_{j \neq y} \exp(u_j)$, we have at the optimum
\begin{align*}
u_j &= \log(v_j) + \log(1 + Z), \quad
\forall j \neq y .
\end{align*}
Since $\inner{\ones, v} = 0$, we also have that
$v_y = - \sum_{j \neq y} v_j$, hence
\begin{align*}
L^*(v)
&= \tsum_{j \neq y} u_j v_j - \log(1 + Z) \\
&= \tsum_{j \neq y} v_j \log(v_j) + \log(1 + Z)
\big( \tsum_{j \neq y} v_j - 1 \big) \\
&= \tsum_{j \neq y} v_j \log(v_j) - \log(1 + Z)(1 + v_y) .
\end{align*}
Summing $v_j$ and using the definition of $Z$,
\begin{align*}
\tsum_{j \neq y} v_j
= \tsum_{j \neq y} e^{u_j} / \big(1 + \tsum_{j \neq y} e^{u_j} \big)
= Z / (1 + Z) .
\end{align*}
Therefore,
\begin{align*}
1 + Z = 1 / \big(1 - \tsum_{j \neq y} v_j \big) = 1 / (1 + v_y) ,
\end{align*}
which finally yields
\begin{align*}
L^*(v)
&= \tsum_{j \neq y} v_j \log(v_j) + \log(1 + v_y)(1 + v_y) ,
\end{align*}
if $\inner{\ones, v} = 0$ and $\wo{v}{y} \in \Sm$
as stated in the proposition.
\end{proof}
\fi

Note that the conjugates of both
the \reftext{eq:multi-hinge} and the \reftext{eq:softmax} losses
share the same effective domain, the unit simplex $\Sm$,
and differ only in their functional form:
a linear function for \reftext{eq:multi-hinge}
and a negative entropy for \reftext{eq:softmax}.
While we motivated top-$k$ SVM directly from the top-$k$ error,
we see that the only change compared to \reftext{eq:multi-hinge}
was in the effective domain of the conjugate loss.
This suggests a general way to \emph{construct novel losses}
with specific properties by taking the conjugate of an existing loss function,
and modifying its effective domain in a way
that enforces the desired properties.
The motivation for doing so comes from the interpretation of the dual
variables as forces with which every training example pushes
the decision surface in the direction given by the ground truth label.
Therefore, by reducing the feasible set we can limit the maximal contribution
of any given training example.

\textbf{Top-$k$ entropy.}
As hinted above,
we first construct the conjugate of the top-$k$ entropy loss ($\alpha$)
by taking the conjugate of \reftext{eq:softmax}
and replacing $\Sm$ in \eqref{eq:softmax-conjugate} with $\Ska$,
and then take the conjugate again to obtain the primal loss
\reftext{eq:topk-entropy}.
A $\beta$ version can be constructed using the set $\Skb$ instead.

\ifprop
\begin{proposition}\label{prop:topk-entropy-primal}
The top-$k$ entropy loss is defined as
\begin{equation}\tag{\LrTopK{k}}\label{eq:topk-entropy}
\mkern-16mu
\begin{aligned}
L(a) = \max&\big\{
\innern{\wo{a}{y},x} - (1-s) \log(1-s) \\
&- \inner{x, \log x}
\given x \in \Ska, \; \inner{\ones, x} = s \big\} .
\end{aligned}
\mkern-16mu
\end{equation}
Moreover, we recover the \reftext{eq:softmax} loss when $k=1$.
\end{proposition}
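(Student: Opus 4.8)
The plan is to carry out verbatim the two-step recipe described immediately before the statement: first form a modified conjugate by shrinking the domain of the softmax conjugate from $\Sm$ to $\Ska$, and then take the conjugate once more to read off the primal loss \reftext{eq:topk-entropy}. Concretely, I would set $g(v)$ to equal the negative-entropy expression $\tsum_{j \neq y} v_j \log v_j + (1 + v_y)\log(1 + v_y)$ on the set $\{v : \inner{\ones, v} = 0, \; \wo{v}{y} \in \Ska\}$ and $+\infty$ elsewhere; by Proposition~\ref{prop:softmax-conjugate} this is precisely \eqref{eq:softmax-conjugate} with $\Sm$ replaced by $\Ska$. The primal loss is then the conjugate $L(a) = \sup_v \{\inner{a, v} - g(v)\}$, and the whole task reduces to evaluating this supremum in closed form.

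The computational core is a reduction of the supremum to the $(m-1)$ free coordinates $x \bydef \wo{v}{y}$. Using the constraint $\inner{\ones, v} = 0$ that is built into $\dom g$, the $y$-th coordinate is pinned down by $v_y = -\tsum_{j \neq y} v_j = -\inner{\ones, x}$, and since $a_y = f_y(x) - f_y(x) = 0$ the linear term collapses to $\inner{a, v} = \inner{\wo{a}{y}, x}$. Writing $s \bydef \inner{\ones, x}$, I would substitute $1 + v_y = 1 - s$ so that the term $(1 + v_y)\log(1 + v_y)$ becomes $(1-s)\log(1-s)$ while $\tsum_{j \neq y} v_j \log v_j = \inner{x, \log x}$. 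One should check that the map $x \mapsto v$ with $v_y = -s$ is a bijection between $\Ska$ and $\dom g$ (indeed $\inner{\ones, v} = s + (-s) = 0$), and that $\inner{\ones, x} \le 1$ on $\Ska$ guarantees $1 - s \ge 0$, so the entropic terms are well defined. The supremum over $x \in \Ska$ of $\inner{\wo{a}{y}, x} - (1-s)\log(1-s) - \inner{x, \log x}$ is then exactly the quantity in \eqref{eq:topk-entropy}; because $\Ska$ is compact and the objective continuous, the supremum is attained and is legitimately written as a $\max$.

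For the $k=1$ assertion I would invoke the relation $\Ska = \Sm$ recorded after \eqref{eq:topk-simplex-beta}, which makes $g$ coincide exactly with the softmax conjugate $L^*$ of Proposition~\ref{prop:softmax-conjugate}. Since the softmax loss \reftext{eq:softmax} is closed, proper, and convex, biconjugation returns it, so $g^* = L^{**}$ equals the softmax loss, giving the claimed recovery without any further calculation.

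The derivation is essentially routine; the step demanding the most care is the passage to $(m-1)$ variables — correctly eliminating $v_y$ through $\inner{\ones, v} = 0$, tracking the identity $1 + v_y = 1 - s$, and confirming that restricting $\wo{v}{y}$ from $\Sm$ to $\Ska$ changes only the feasible set and leaves the functional form of the objective identical to the softmax case, which is what justifies re-using the softmax conjugate computation wholesale.
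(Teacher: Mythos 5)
Your derivation of the primal form \eqref{eq:topk-entropy} is essentially the paper's: both arguments define the modified conjugate by replacing $\Sm$ with $\Ska$ in \eqref{eq:softmax-conjugate}, conjugate once more, eliminate $v_y$ through $\inner{\ones, v}=0$ (using $a_y=0$ so the linear term reduces to $\innern{\wo{a}{y},x}$), and substitute $1+v_y=1-s$ to obtain the stated maximization over $\Ska$. Where you genuinely diverge is the $k=1$ claim. The paper proves it the hard way: it writes down the Lagrangian of the resulting problem, argues from stationarity that $x_j>0$ and $s<1$ so the multipliers vanish and the cap $x_j\leq s/k$ is inactive for $k=1$, solves $x_j=e^{a_j-t}$, $1-s=e^{-t}$, and evaluates the objective to $\log(1+\tsum_j e^{a_j})$. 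You instead note that $\Ska=\Sm$ at $k=1$, so the modified conjugate \emph{is} the softmax conjugate, and invoke Fenchel--Moreau: the softmax loss is finite, convex and continuous, hence closed and proper, so its biconjugate returns it. That is a correct and cleaner route to the stated equivalence; the one small step worth making explicit is that the conjugate pairing is in $u=f(x)$ while the loss is written in $a$, and the two coincide because $\inner{\ones,v}=0$ on the domain forces $\inner{u,v}=\inner{a,v}$ (the paper handles this implicitly via $a_y=0$). The paper's explicit KKT computation is not redundant in context, however: essentially the same calculation is recycled in Proposition~\ref{prop:solve-topk-entropy-primal} to evaluate the loss for general $k$, which the biconjugation argument alone does not provide.
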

\fi
\ifproof
\begin{proof}
The convex conjugate of the top-$k$ entropy loss is
\begin{align*}
L^*(v) &\bydef \begin{cases}
\sum_{j \neq y} v_j \log v_j + (1 + v_y) \log(1 + v_y) , \\
\quad\quad\quad\!\text{if }
\inner{\ones, v} = 0 \text{ and } \wo{v}{y} \in \Ska , \\
+ \infty \quad \text{otherwise}.
\end{cases}
\end{align*}
The (primal) top-$k$ entropy loss is defined as the convex conjugate
of the $L^*(v)$ above.
We have
\begin{align*}
L(a)
= \sup\{ &\inner{a,v} - L^*(v) \given v \in \Rb^m \} \\
= \sup\{ &\inner{a,v} - \tsum_{j \neq y} v_j \log v_j 
- (1 + v_y) \log(1 + v_y) \\
&\given \inner{\ones, v} = 0, \; \wo{v}{y} \in \Ska \} \\
= \sup\{ &\innern{\wo{a}{y},\wo{v}{y}} - a_y \tsum_{j \neq y} v_j
- \tsum_{j \neq y} v_j \log v_j \\
& \!\! - (1 - \tsum_{j \neq y} v_j) \log(1 - \tsum_{j \neq y} v_j)
\given \wo{v}{y} \in \Ska \} .
\end{align*}
Note that $a_y = 0$, and hence the corresponding term vanishes.
Finally, we let $x \bydef \wo{v}{y}$
and $s \bydef \sum_{j \neq y} v_j = \inner{\ones, x}$.

Next, we discuss how this problem can be solved and show that it
reduces to the softmax loss for $k=1$.
Let $a \bydef \wo{a}{y}$ and consider an equivalent problem below.
\begin{equation}\label{eq:topk-entropy-primal-min}
\begin{aligned}
L(a) = -\min\big\{&
\inner{x, \log x} + (1-s) \log(1-s) \\
&- \innern{a,x} \given x \in \Ska, \; \inner{\ones, x} = s \big\} .
\end{aligned}
\end{equation}
The Lagrangian for (\ref{eq:topk-entropy-primal-min}) is
\begin{gather*}
\Lc(x, s, t, \lambda, \mu, \nu)
= \inner{x, \log x} + (1-s) \log(1-s) - \innern{a,x} \\
+ t(\inner{\ones, x} - s) + \lambda (s - 1)
- \inner{\mu, x} + \inner{\nu, x - \tfrac{s}{k} \ones} ,
\end{gather*}
where $t \in \Rb$ and $\lambda, \mu, \nu \geq 0$ are the dual variables.
Computing the partial derivatives of $\Lc$ \wrt $x_j$ and $s$,
and setting them to zero, we obtain
\begin{align*}
\log x_j &= a_j - 1 - t + \mu_j - \nu_j , \quad \forall j \\
\log(1-s) &= -1 - t - \tfrac{1}{k} \inner{\ones, \nu} + \lambda .
\end{align*}
Note that $x_j = 0$ and $s = 1$ cannot satisfy the above conditions
for any choice of the dual variables in $\Rb$.
Therefore, $x_j > 0$ and $s < 1$, which implies
$\mu_j = 0$ and $\lambda = 0$.
The only constraint that might be active is $x_j \leq \frac{s}{k}$.
Note, however, that in view of $x_j > 0$
it can only be active if either $k > 1$ or
we have a one dimensional problem.
We consider the case when this constraint is active below.

Consider $x_j$'s for which
$0 < x_j < \frac{s}{k}$ holds at the optimum.
The complementary slackness conditions imply that
the corresponding $\mu_j = \nu_j = 0$.
Let $p \bydef \inner{\ones, \nu}$ and re-define $t$
as $t \leftarrow 1 + t$.
We obtain the simplified equations
\begin{align*}
\log x_j &= a_j - t , \\
\log(1-s) &= - t - \tfrac{p}{k} .
\end{align*}
If $k=1$, then $0 < x_j < s$ for all $j$
in a multiclass problem as discussed above,
hence also $p=0$.
We have
\begin{align*}
x_j &= e^{a_j - t} , &
1-s &= e^{-t} ,
\end{align*}
where $t \in \Rb$ is to be found.
Plugging that into the objective,
\begin{align*}
-L(a)
&= \tsum_j (a_j - t) e^{a_j - t} - t e^{-t} - \tsum_j a_j e^{a_j - t} \\
&= e^{-t} \Big[
\tsum_j (a_j - t) e^{a_j} - t - \tsum_j a_j e^{a_j} \Big] \\
&= -t e^{-t} \big[ 1 + \tsum_j e^{a_j} \big] 
= -t \big[ e^{-t} + \tsum_j e^{a_j - t} \big] \\
&= -t \big[ 1 - s + s \big] = -t .
\end{align*}
To compute $t$, we note that
$$
\tsum_j e^{a_j - t} = \inner{\ones, x} = s = 1 - e^{-t} ,
$$
from which we conclude
\begin{align*}
1 &= \big( 1 + \tsum_j e^{a_j} \big) e^{-t} \; \Longrightarrow \;
-t = - \log(1 + \tsum_j e^{a_j}) .
\end{align*}
Taking into account the minus in front of the $\min$ in
(\ref{eq:topk-entropy-primal-min})
and the definition of $a$, we finally recover the softmax loss
\begin{align*}
L(y,f(x)) = \log\big(1 + \tsum_{j \neq y} \exp(f_j(x) - f_y(x)) \big) .
\end{align*}
\end{proof}
\fi

While there is no closed-form solution for the \reftext{eq:topk-entropy}
loss when $k > 1$, we can compute and optimize it efficiently
as we discuss later in \S~\ref{sec:optimization}.

\textbf{Truncated top-$k$ entropy.}
A major limitation of the softmax loss for top-$k$ error optimization
is that it cannot ignore the $(k-1)$ highest scoring predictions.
This can lead to a situation where the loss is high
even though the top-$k$ error is zero.
To see that, let us rewrite the \reftext{eq:softmax} loss as
\begin{equation}\label{eq:softmax-2}
L(y,f(x)) = \log\big( 1 + \tsum_{j\neq y} \exp(f_{j}(x) - f_y(x)) \big).
\end{equation}
If there is only a \emph{single} $j$ such that $f_j(x) - f_y(x) \gg 0$,
then $L(y,f(x)) \gg 0$ even though $\nerr{2}{y, f(x)}$ is zero.

This problem is also present in all top-$k$ hinge losses
considered above and is an inherent limitation due to their convexity.
The origin of the problem is the fact that ranking
based losses \cite{usunier2009ranking} are based on functions such as
$$
\phi(f(x)) = (1/m) \tsum_{j \in \Yc} \alpha_j f_{\pi_j}(x) - f_y(x) .
$$
The function $\phi$ is convex if the sequence $(\alpha_j)$
is monotonically non-increasing \cite{boyd2004convex}.
This implies that convex ranking based losses have to put \emph{more} weight
on the highest scoring classifiers,
while we would like to put \emph{less} weight on them.
To that end, we drop the first $(k-1)$ highest scoring
predictions from the sum in \eqref{eq:softmax-2},
sacrificing convexity of the loss,
and define the truncated top-$k$ entropy loss as follows
\begin{align}\tag{\LrTopKn{k}}\label{eq:truncated-topk-entropy}
L(a) = \log\big( 1 + \tsum_{j \in \Jc_y^k} \exp(a_j) \big),
\end{align}
where $\Jc_y^k$
are the indexes corresponding to the $(m - k)$ \emph{smallest}
components of $(f_j(x))_{j \neq y}$.
This loss can be seen as a smooth version
of the top-$k$ error \eqref{eq:topk-error},
as it is small whenever the top-$k$ error is zero.
We show a synthetic experiment in \S~\ref{sec:synthetic},
where the advantage of discarding the highest scoring classifier in
\reftext{eq:truncated-topk-entropy} becomes apparent.

\subsection{Multilabel Methods}\label{sec:multilabel-methods}

In this section, we introduce natural extensions of the classic
multiclass methods discussed above
to the setting where there is a \emph{set} of ground truth labels
$Y \subset \Yc$ for each example $x$.
We focus on the loss functions that produce a ranking
of labels and optimize a multilabel loss
$L: 2^\Yc \times \R^m \rightarrow \R_+$.
We let $u \bydef f(x)$ and use a simplified notation
$L(u) = L(Y,f(x))$.
A more complete overview of multilabel classification methods is given in
\cite{tsoumakas2007multi,madjarov2012extensive,zhang2014review}.

\textbf{Binary relevance (BR).}
Binary relevance is the standard one-vs-all scheme
applied to multilabel classification.
It is the default baseline for direct multilabel methods
as it does not consider possible correlations between the labels.

\textbf{Multilabel SVM.}
We follow the line of work by \cite{crammer2003family}
and consider the Multilabel SVM loss below:
\begin{equation}\tag{\SvmML}\label{eq:svm-ml}
\begin{aligned}
L(u) &= \max_{y \in Y} \max_{j \in \bar{Y}}
\max \{0, 1 + u_j - u_y \} \\
&= \max\{ 0, 1
+ \max_{j \in \bar{Y}} u_j - \min_{y \in Y} u_y \} .
\end{aligned}
\end{equation}
This method is also known as
the \emph{multiclass multilabel perceptron} (MMP) \cite{furnkranz2008multilabel}
and the \emph{separation ranking loss} \cite{guo2011adaptive}.
It can be contrasted with another \reftext{eq:multi-hinge} extension,
the RankSVM of Elisseeff and Weston \cite{elisseeff2001kernel},
which optimizes the \emph{pairwise ranking loss}:
$$
\tfrac{1}{\abs{Y_i} \abs{\bar{Y}_i}}
\tsum_{(y, j) \in Y \times \bar{Y}}
\max \{0, 1 + u_j - u_y \} .
$$
Note that both the \reftext{eq:svm-ml} that we consider
and RankSVM avoid expensive enumeration of all the $2^\Yc$
possible labellings by considering only pairwise label ranking.
A principled large margin approach
that accounts for all possible label interactions
is structured output prediction \cite{tsochantaridis2005large}.

\textbf{Multilabel SVM conjugate.}
Here, we compute the convex conjugate of the \reftext{eq:svm-ml} loss
which is used later to define a Smooth Multilabel SVM.
Note that the \reftext{eq:svm-ml} loss depends on the partitioning
of $\Yc$ into $Y$ and $\bar{Y}$ for every given $(x,Y)$ pair.
This is reflected in the definition of a set $S_Y$ below,
which is the effective domain of the conjugate:
\begin{align*}
S_Y \bydef \big\{ x \given
- \tsum_{y \in Y} x_y = \tsum_{j \in \bar{Y}} x_j \leq 1, \;
x_y \leq 0, \;  x_j \geq 0 \big\} .
\end{align*}
In the multiclass setting, the set $Y$ is singleton,
therefore $x_y = - \sum_{j \in \bar{Y}} x_j$ has no degrees of freedom
and we recover the unit simplex $\Sm$ over $(x_j)$,
as in \eqref{eq:softmax-conjugate}.
In the true multilabel setting, on the other hand,
there is freedom to distribute the weight across all the classes in $Y$.

\ifprop
\begin{proposition}\label{prop:svm-ml-conj}
The convex conjugate of the \reftext{eq:svm-ml} loss is
\begin{align}\label{eq:svm-ml-conj}
L^*(v) = - \tsum_{j \in \bar{Y}} v_j , \; \text{if} \; v \in S_Y , \;
+\infty, \; \text{otherwise} .
\end{align}
\end{proposition}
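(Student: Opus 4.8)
The plan is to compute the convex conjugate $L^*(v) = \sup_u \{ \inner{u,v} - L(u) \}$ directly from the definition, using the second (single-$\max$) form of the \reftext{eq:svm-ml} loss. First I would split the supremum into cases based on whether the hinge is active. The loss is $L(u) = \max\{0,\, 1 + \max_{j \in \bar Y} u_j - \min_{y \in Y} u_y\}$, so I would introduce the auxiliary quantity $g(u) \bydef \max_{j \in \bar Y} u_j - \min_{y \in Y} u_y$ and analyze $L(u) = \max\{0, 1+g(u)\}$.

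\textbf{Determining the effective domain.} The heart of the argument is showing that $L^*(v)$ is finite exactly on $S_Y$. I would establish the constraints defining $S_Y$ one at a time by probing the supremum with suitable directions $u$. To get $x_j \ge 0$ for $j \in \bar Y$: sending a single $u_j \to -\infty$ for $j \in \bar Y$ leaves $L(u)$ bounded (since that coordinate only enters through a $\max$), so the $\inner{u,v}$ term would drive the sup to $+\infty$ unless $v_j \ge 0$. Symmetrically, sending $u_y \to +\infty$ for $y \in Y$ forces $v_y \le 0$. To get the ``balance'' condition $-\sum_{y \in Y} v_y = \sum_{j \in \bar Y} v_j$, I would test the direction $u = t\ones$: since $g$ and hence $L$ are invariant under adding a constant to all coordinates, the term $\inner{t\ones, v} = t \inner{\ones,v}$ must vanish for all $t$, forcing $\inner{\ones,v}=0$, i.e.\ $\sum_{y\in Y} v_y + \sum_{j \in \bar Y} v_j = 0$. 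Finally, the normalization $\sum_{j \in \bar Y} v_j \le 1$ comes from the active regime: when $g(u) \ge -1$, we have $L(u) = 1 + g(u)$, and I would bound $\inner{u,v} - (1 + g(u))$ using the sign constraints already derived together with the identity $\sum_{j\in\bar Y} v_j = -\sum_{y\in Y} v_y$.

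\textbf{Evaluating the conjugate on its domain.} Assuming $v \in S_Y$, I would show the sup equals $-\sum_{j \in \bar Y} v_j$. The value is attained in the active branch: picking $u$ that makes $g(u)$ large (or using the scale-invariance together with the constraint structure), the linear term $\inner{u,v}$ cancels against $g(u)$ because the sign pattern of $v$ aligns $\inner{u,v}$ with $\max_{\bar Y} u_j$ and $-\min_Y u_y$, leaving precisely the constant $-\sum_{j\in\bar Y} v_j$ from the ``$-1$'' in $1+g(u)$ scaled by $\inner{\ones_{\bar Y}, v}$. I would verify the matching upper and lower bounds to pin down the exact value.

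\textbf{The main obstacle} will be handling the $\max$/$\min$ structure cleanly, since both $\max_{j\in\bar Y} u_j$ and $\min_{y\in Y} u_y$ are themselves nonsmooth and the supremum over $u$ interacts with them nontrivially; in particular, the freedom to distribute weight across multiple labels in $Y$ (noted in the text preceding the proposition) means the argument is genuinely multilabel and does not collapse to the singleton simplex case. I expect the slickest route is to recognize $g(u)$ as a support-function-type object and identify $S_Y$ as (a scaling of) the convex hull of the difference vectors $e_j - e_y$ for $(y,j) \in Y \times \bar Y$, so that the conjugate of the hinge composed with $g$ follows from standard conjugacy rules for $\max\{0,\,1+\inner{\cdot}\}$; reconciling that viewpoint with the explicit inequality description of $S_Y$ is the step requiring the most care.
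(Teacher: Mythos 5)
Your proposal is correct, and it takes a genuinely different route from the paper. The paper eliminates the $\max$/$\min$ structure by introducing auxiliary variables $\alpha \leq u_y$, $\beta \geq u_j$, $\xi \geq \max\{0, 1+\beta-\alpha\}$, rewriting the infimum as a linear program, and reading off both the effective domain $S_Y$ and the value $-\lambda = -\sum_{j \in \bar{Y}} v_j$ from the Lagrangian dual (the constraints $\nu_y = -v_y \geq 0$, $\eta_j = v_j \geq 0$, $\lambda = \inner{\ones,\nu} = \inner{\ones,\eta} = 1-\mu \leq 1$ are exactly the description of $S_Y$). You instead work directly from the definition: the domain constraints follow from probing with coordinate rays and the translation direction $t\ones$, and all of these checks go through, including the normalization $\sum_{j\in\bar{Y}} v_j \leq 1$ from the active branch. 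For the value on $S_Y$, the upper bound is the one place your sketch leaves implicit work, but it closes cleanly: the sign pattern of $v$ gives $\inner{u,v} \leq \lambda\,(\max_{j\in\bar{Y}} u_j - \min_{y\in Y} u_y)$ with $\lambda = \sum_{j\in\bar{Y}} v_j$, reducing the supremum to the scalar problem $\sup_{s}\{\lambda s - \max\{0,1+s\}\} = -\lambda$ for $\lambda \in [0,1]$, and the matching lower bound is attained, \eg at $u_y = 1/2$ for $y \in Y$, $u_j = -1/2$ for $j \in \bar{Y}$. What each approach buys: the paper's LP reformulation is mechanical and produces the dual feasible set $S_Y$ without guessing it in advance, whereas your direct argument is more elementary and makes transparent \emph{why} each constraint of $S_Y$ arises (recession directions of $L$ versus linear growth of $\inner{u,v}$). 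Your closing support-function viewpoint ($S_Y$ as the sub-convex-hull of $\{e_j - e_y\}$) is also viable but, as you anticipate, requires a separate bipartite-transportation argument to match the inequality description of $S_Y$, so the direct route is the shorter one to complete.
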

\fi
\ifproof
\begin{proof}
We compute the convex conjugate of \eqreftext{eq:svm-ml} as
\begin{align*}
L^*(v) &= - \inf_{u \in \Rb^m} \{
\max\{ 0, 1
+ \max_{j \in \bar{Y}} u_j - \min_{y \in Y} u_y \}
- \inner{u, v}
\} .
\end{align*}
When the infimum is attained, the conjugate can be computed by solving
the following optimization problem, otherwise the conjugate is $+\infty$.
The corresponding dual variables are given on the right.
\begin{align*}
\min_{u, \alpha, \beta, \xi} \;\;
& \xi - \inner{u, v} \\
& \xi \geq 1 + \beta - \alpha , && (\lambda \geq 0) \\
& \xi \geq 0 , && (\mu \geq 0) \\
& \alpha \leq u_y, \quad \forall \, y \in Y , && (\nu_y \geq 0) \\
& \beta \geq u_j, \quad \forall \, j \in \bar{Y} . && (\eta_j \geq 0)
\end{align*}
The Lagrangian is given as
\begin{gather*}
\Lc(u, \alpha, \beta, \xi, \lambda, \mu, \nu, \eta)
= \xi - \inner{u, v}
+ \lambda (1 + \beta - \alpha - \xi) \\
- \mu \xi + \tsum_{y \in Y} \nu_y (\alpha - u_y)
+ \tsum_{j \in \bar{Y}} \eta_j (u_j - \beta) .
\end{gather*}
Computing the partial derivatives and setting them to zero,
\begin{align*}
\partial_{u_y} \Lc &= - v_y - \nu_y , &&&
\nu_y &= - v_y , && \forall \, y \in Y , \\
\partial_{u_j} \Lc &= - v_j + \eta_j , &&&
\eta_j &= v_j , && \forall \, j \in \bar{Y} , \\
\partial_{\alpha} \Lc &= - \lambda + \inner{\ones, \nu} , &&&
\lambda &= \inner{\ones, \nu} , \\
\partial_{\beta} \Lc &= \lambda - \inner{\ones, \eta} , &&&
\lambda &= \inner{\ones, \eta} , \\
\partial_{\xi} \Lc &= 1 - \lambda - \mu , &&&
\lambda &= 1 - \mu .
\end{align*}
After a basic derivation, we arrive at the solution of the dual problem
given by
\begin{align*}
\lambda = - \tsum_{y \in Y} v_y = \tsum_{j \in \bar{Y}} v_j ,
\end{align*}
where $v$ must be in the following feasible set $S_Y$:
\begin{align*}
S_Y \bydef \big\{ &v \in \Rb^m \given
- \tsum_{y \in Y} v_y = \tsum_{j \in \bar{Y}} v_j \leq 1, \\
&v_y \leq 0, \;  v_j \geq 0, \; 
\forall \, y \in Y, \; \forall \, j \in \bar{Y} \big\} .
\end{align*}
To complete the proof, note that $L^*(v) = - \lambda$
if $v \in S_Y$.
\end{proof}
\fi

Note that when $\abs{Y}=1$, \eqref{eq:svm-ml-conj} naturally reduces
to the conjugate of \reftext{eq:multi-hinge}
given in Proposition~\ref{prop:topk-conj} with $k=1$.

\textbf{Smooth multilabel SVM.}
Here, we apply the smoothing technique,
which worked very well for multiclass problems
\cite{shalev2014accelerated,lapin2016loss},
to the multilabel \reftext{eq:svm-ml} loss.

As with the smooth top-$k$ SVM,
there is no analytic formula for the smoothed loss.
However, we can both compute and optimize it within our framework
by solving the Euclidean projection problem
onto what we call a \textbf{bipartite simplex}.
It is a convenient modification of the set $S_Y$ above:
\begin{align}\label{eq:bipartite-simplex}
B(r) \bydef \{ (x,y) \given \inner{\ones, x} = \inner{\ones, y} \leq r,
x \in \Rb^m_+, y \in \Rb^n_+ \}.
\end{align}

\ifprop
\begin{proposition}\label{prop:smooth-svm-ml}
Let $\gamma > 0$ be the smoothing parameter.
The smooth multilabel SVM loss and its conjugate are
\begin{align}
L_{\gamma}(u) &= \tfrac{1}{\gamma}\big(
\inner{b,p} - \tfrac{1}{2}\norms{p} +
\inner{\bar{b},\bar{p}} - \tfrac{1}{2}\norms{\bar{p}} \big),
\tag{\small\SvmMLg{\gamma}}\label{eq:smooth-svm-ml}\\
L_{\gamma}^*(v) &= \begin{cases}
\frac{1}{2} \big( \sum_{y \in Y} v_y - \sum_{j \in \bar{Y}} v_j \big)
+ \frac{\gamma}{2} \norms{v}, & v \in S_Y , \\
+\infty, & \text{o/w} ,
\end{cases}\notag
\end{align}
where
$(p,\bar{p}) = \proj_{B(\gamma)}(b,\bar{b})$
is the projection onto $B(\gamma)$ of
$b = \big( \tfrac{1}{2} - u_y \big)_{y \in Y}$,
$\bar{b} = \big( \tfrac{1}{2} + u_j \big)_{j \in \bar{Y}}$.
$L_{\gamma}(u)$ is $1/\gamma$-smooth.
\end{proposition}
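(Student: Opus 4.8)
The plan is to mirror the derivation of the smooth top-$k$ SVM in Proposition~\ref{prop:topk-smooth}, replacing the top-$k$ simplex by the bipartite simplex $B(\gamma)$ of \eqref{eq:bipartite-simplex}. First I would obtain the conjugate $L_\gamma^*$ by Moreau-Yosida smoothing: starting from the conjugate of \reftext{eq:svm-ml} in Proposition~\ref{prop:svm-ml-conj}, I add the strongly convex term $\tfrac{\gamma}{2}\norms{v}$, so that $L_\gamma = \big(L^* + \tfrac{\gamma}{2}\norms{\,\cdot\,}\big)^*$. On the effective domain $S_Y$ the defining equality $-\tsum_{y \in Y} v_y = \tsum_{j \in \bar{Y}} v_j$ lets me rewrite the linear part $-\tsum_{j \in \bar{Y}} v_j$ in the symmetric form $\tfrac{1}{2}\big(\tsum_{y \in Y} v_y - \tsum_{j \in \bar{Y}} v_j\big)$, giving the stated $L_\gamma^*$. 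Since $L_\gamma^*$ is $\gamma$-strongly convex, the $1/\gamma$-smoothness of $L_\gamma$ follows immediately from the standard duality between strong convexity and Lipschitz gradients \cite{HirLem2001} (cf.\ \cite[Lemma~2]{shalev2014accelerated}), exactly as in Proposition~\ref{prop:topk-smooth}.

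The main work is computing the primal loss $L_\gamma(u) = \sup_{v \in S_Y}\{\inner{u,v} - L_\gamma^*(v)\}$ and identifying it with a projection onto $B(\gamma)$. Here I would change to nonnegative coordinates, setting $x_y \bydef -v_y \geq 0$ for $y \in Y$ and $\bar{x}_j \bydef v_j \geq 0$ for $j \in \bar{Y}$; the constraint $v \in S_Y$ then becomes $(x,\bar{x}) \in B(1)$. Collecting terms, the objective splits as $\inner{b,x} - \tfrac{\gamma}{2}\norms{x} + \inner{\bar{b},\bar{x}} - \tfrac{\gamma}{2}\norms{\bar{x}}$ with $b_y = \tfrac{1}{2} - u_y$ and $\bar{b}_j = \tfrac{1}{2} + u_j$: the $\tfrac{1}{2}$ shifts come from the symmetric linear part of $L_\gamma^*$, while the sign flip $v_y = -x_y$ on the $Y$-block turns $u_y v_y$ into $-u_y x_y$ and hence produces the $-u_y$ in $b_y$.

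Completing the square turns the maximization into the Euclidean projection problem $\min_{(x,\bar{x}) \in B(1)} \norms{x - \tfrac{1}{\gamma}b} + \norms{\bar{x} - \tfrac{1}{\gamma}\bar{b}}$, i.e.\ the projection of $\tfrac{1}{\gamma}(b,\bar{b})$ onto $B(1)$. Using the homogeneity $B(\gamma) = \gamma\,B(1)$ together with the scaling identity $\proj_{B(1)}(w/\gamma) = \tfrac{1}{\gamma}\proj_{B(\gamma)}(w)$, the optimizer is $(x^*,\bar{x}^*) = \tfrac{1}{\gamma}(p,\bar{p})$ with $(p,\bar{p}) = \proj_{B(\gamma)}(b,\bar{b})$. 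Substituting this back and simplifying yields $L_\gamma(u) = \tfrac{1}{\gamma}\big(\inner{b,p} - \tfrac{1}{2}\norms{p} + \inner{\bar{b},\bar{p}} - \tfrac{1}{2}\norms{\bar{p}}\big)$, as claimed. The only delicate point is the bookkeeping of signs on the $Y$-block together with the $B(1)$-to-$B(\gamma)$ rescaling; once these are tracked carefully the computation is routine and parallels Proposition~\ref{prop:topk-smooth} verbatim.
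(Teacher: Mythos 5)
Your proposal is correct and follows essentially the same route as the paper: smooth the conjugate from Proposition~\ref{prop:svm-ml-conj} with $\tfrac{\gamma}{2}\norms{v}$ after the symmetric rewriting of the linear term, conjugate back, switch to nonnegative coordinates $x_y=-v_y$, $\bar{x}_j=v_j$ so that $S_Y$ becomes the bipartite simplex, and complete the square to land on the projection. The only cosmetic difference is that you project onto $B(1)$ and rescale via $\proj_{B(1)}(w/\gamma)=\tfrac{1}{\gamma}\proj_{B(\gamma)}(w)$ at the end, whereas the paper absorbs $\gamma$ into the constraint up front so the feasible set is $B(\gamma)$ directly; both give the stated formula.
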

\fi
\ifproof
\begin{proof}
The convex conjugate of the \reftext{eq:svm-ml} loss is
\begin{align*}
L^*(v) = \begin{cases}
\sum_{y \in Y} v_y , & \text{if } v \in S_Y , \\
+\infty, & \text{otherwise} .
\end{cases}
\end{align*}
Before we add $\frac{\gamma}{2} \norms{v}$,
recall that $\sum_{y \in Y} v_y = - \sum_{j \in \bar{Y}} v_j$,
and so
$\sum_{y \in Y} v_y =
\frac{1}{2} \big( \tsum_{y \in Y} v_y - \tsum_{j \in \bar{Y}} v_j \big)$.
We use the average instead of an individual sum
for symmetry and improved numerical stability.
The smoothed conjugate loss is then
\begin{align*}
L_{\gamma}^*(v) = \begin{cases}
\frac{1}{2} \big( \tsum_{y \in Y} v_y - \tsum_{j \in \bar{Y}} v_j \big)
+ \frac{\gamma}{2} \norms{v},
& \text{if } v \in S_Y , \\
+\infty, & \text{otherwise} .
\end{cases}
\end{align*}
To derive the primal loss, we take the conjugate again:
\begin{align*}
L_{\gamma}(u)
&= \sup_v \{ \inner{u, v} - L_{\gamma}^*(v) \} \\
&= \max_{v \in S_Y} \big\{ \inner{u, v}
- \tfrac{1}{2} \big( \textstyle\sum\limits_{y \in Y} v_y
- \textstyle\sum\limits_{j \in \bar{Y}} v_j \big)
- \tfrac{\gamma}{2} \norms{v} \big\} \\
&= - \tfrac{1}{\gamma} \min_{\tfrac{v}{\gamma} \in S_Y} \big\{
\tfrac{1}{2} \norms{v} + \tfrac{1}{2} \big(
\textstyle\sum\limits_{y \in Y} v_y
- \textstyle\sum\limits_{j \in \bar{Y}} v_j \big)
- \inner{u, v} \big\} \\
&= - \tfrac{1}{\gamma} \min_{\tfrac{v}{\gamma} \in S_Y} \big\{
\tfrac{1}{2} \norms{v} - \tsum_{y \in Y} (\tfrac{1}{2} - u_y) (- v_y) \\[-1em]
&\mkern159mu
- \tsum_{j \in \bar{Y}} (\tfrac{1}{2} + u_j) v_j \big\} .
\end{align*}
Next, we define the following auxiliary variables:
\begin{align*}
x_j &= -v_j, & b_j &= \tfrac{1}{2} - u_j, & \forall\, j &\in Y, \\
y_j &= v_j, & \bar{b}_j &= \tfrac{1}{2} + u_j, & \forall\, j &\in \bar{Y},
\end{align*}
and rewrite the smooth loss $L_{\gamma}(u)$ equivalently as
\begin{align*}
L_{\gamma}(u) = - \tfrac{1}{\gamma} \min_{x, y} \;
& \tfrac{1}{2} \norms{x} - \inner{x, b}
+ \tfrac{1}{2} \norms{y} - \inner{y, \bar{b}} \\
& \inner{\ones, x} = \inner{\ones, y} \leq \gamma, \\
& x \geq 0, \; y \geq 0 ,
\end{align*}
which is the Euclidean projection onto the set $B(\gamma)$.
\end{proof}
\fi

Note that the smooth \reftext{eq:smooth-svm-ml} loss
is a nice generalization of the smooth multiclass loss \SvmMultig{\gamma}
and we naturally recover the latter when $Y$ is singleton.
In \S~\ref{sec:optimization}, we extend
the variable fixing algorithm of \cite{kiwiel2008variable}
and obtain an efficient method
to compute Euclidean projections onto $B(r)$.

\textbf{Multilabel cross-entropy.}
Here, we discuss an extension of the \LrMulti\ loss to multilabel learning.
We use the softmax function to
model the distribution over the class labels $p_y(x)$,
which recovers the well-known multinomial logistic regression
\cite{krishnapuram2005sparse}
and the maximum entropy \cite{yu2011dual} models.

Assume that all the classes given in the ground truth set $Y$ are equally likely.
We define an empirical distribution for a given $(x,Y)$ pair as
$\hat{p}_y = (1/\abs{Y}) \iv{y \in Y}$,
and model the conditional probability $p_y(x)$ via the softmax:
\begin{align*}
p_y(x) = (\exp u_y ) / \big(\tsum_{j \in \Yc} \exp u_j \big),
\quad \forall\, y \in \Yc.
\end{align*}
The cross-entropy of the distributions $\hat{p}$ and $p(x)$ is given by
\begin{align*}
H(\hat{p}, p(x))
&= - \tfrac{1}{\abs{Y}} \sum_{y \in Y}
\log\Big( \frac{\exp u_y}{\sum_j \exp u_j} \Big) ,
\end{align*}
and the corresponding multilabel cross entropy loss is:
\begin{align}\tag{\small\LrML}\label{eq:lr-ml}
L(u) &= \tfrac{1}{\abs{Y}} \tsum_{y \in Y}
\log\big( \tsum_{j \in \Yc} \exp(u_j - u_y) \big) .
\end{align}

\textbf{Multilabel cross-entropy conjugate.}
Next, we compute the convex conjugate of the \reftext{eq:lr-ml} loss,
which is used later in our optimization framework.

\ifprop
\begin{proposition}\label{prop:lr-ml-conj}
The convex conjugate of the \reftext{eq:lr-ml} loss is
\begin{align}\label{eq:lr-ml-conj}
L^*(v) &= \begin{cases}
\tsum_{y \in Y} (v_y + \tfrac{1}{k}) \log (v_y + \tfrac{1}{k})
+ \tsum_{j \in \bar{Y}} v_j \log v_j, \\
\quad\quad\quad\!\text{if } v \in D_Y, \\
+\infty \quad \text{otherwise} .
\end{cases}\raisetag{1.75\baselineskip}
\end{align}
where $k=\abs{Y}$ and $D_Y$ is the effective domain defined as:
\begin{align*}
D_Y \bydef \big\{ v \given
&\tsum_{y \in Y} \big( v_y + \tfrac{1}{k} \big)
+ \tsum_{j \in \bar{Y}} v_j = 1, \\
&\, v_y + \tfrac{1}{k} \geq 0, \;
v_j \geq 0, \; y \in Y, \; j \in \bar{Y}
\big\} .
\end{align*}
\end{proposition}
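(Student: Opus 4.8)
The plan is to exploit a simplification that collapses the multilabel cross-entropy loss into a log-sum-exp term minus a linear term, after which the conjugate follows from the standard conjugate of log-sum-exp together with the shift rule for Fenchel conjugates. Concretely, reusing $\phi(u) \bydef \log\big(\tsum_{j \in \Yc} \exp u_j\big)$ from the proof of Proposition~\ref{prop:softmax-conjugate}, note that $\log\big(\tsum_{j \in \Yc} \exp(u_j - u_y)\big) = \phi(u) - u_y$. Averaging this identity over $y \in Y$ gives $L(u) = \phi(u) - \inner{\hat{p}, u}$, where $\hat{p}_y \bydef \tfrac{1}{k}\iv{y \in Y}$ is exactly the empirical label distribution introduced above and $k = \abs{Y}$.

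First I would use the definition of the conjugate and absorb the linear term into the argument:
\begin{align*}
L^*(v) &= \sup_{u \in \Rb^m}\{ \inner{v, u} - \phi(u) + \inner{\hat{p}, u} \} \\
&= \sup_{u \in \Rb^m}\{ \inner{v + \hat{p}, u} - \phi(u) \}
= \phi^*(v + \hat{p}) .
\end{align*}
Next I would invoke the classical fact \cite{boyd2004convex} that the conjugate of the log-sum-exp function $\phi$ is the negative entropy restricted to the unit simplex, i.e. $\phi^*(q) = \tsum_j q_j \log q_j$ if $\inner{\ones, q} = 1$ and $q \geq 0$ (with the convention $0 \log 0 = 0$), and $+\infty$ otherwise. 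Substituting $q = v + \hat{p}$, whose coordinates are $q_y = v_y + \tfrac{1}{k}$ for $y \in Y$ and $q_j = v_j$ for $j \in \bar{Y}$, immediately yields the stated entropic expression for $L^*(v)$.

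It then remains to verify that the simplex constraints on $q$ are precisely the description of $D_Y$. The nonnegativity $q \geq 0$ reads $v_y + \tfrac{1}{k} \geq 0$ and $v_j \geq 0$, while the normalization $\inner{\ones, q} = 1$ reads $\tsum_{y \in Y}(v_y + \tfrac{1}{k}) + \tsum_{j \in \bar{Y}} v_j = 1$; these are exactly the conditions defining $D_Y$ (and, upon summing, they are equivalent to $\inner{\ones, v} = 0$, consistent with $L$ being invariant under $u \mapsto u + t\ones$). The hard part, if one does not simply cite the log-sum-exp conjugate, is reproducing $\phi^*$ from scratch via a Lagrangian/KKT argument in the style of the proof of Proposition~\ref{prop:softmax-conjugate}; there one must handle the boundary convention $0 \log 0 = 0$ carefully so that coordinates with $q_y = 0$ or $q_j = 0$ are treated correctly. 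Everything else is a direct substitution.
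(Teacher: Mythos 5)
Your proof is correct, and it takes a genuinely different route from the paper. The paper computes the conjugate from scratch: it differentiates $L(u)$ coordinate-wise, solves the stationarity conditions $v_j = \partial L/\partial u_j$ for $u^*$ (distinguishing $j \in Y$ from $j \in \bar{Y}$), and substitutes back, obtaining the domain constraints as by-products of the first-order conditions. You instead observe that $L(u) = \phi(u) - \inner{\hat{p}, u}$ with $\phi$ the log-sum-exp function and $\hat{p}$ the empirical label distribution, apply the shift rule $L^*(v) = \phi^*(v + \hat{p})$, and cite the standard fact that $\phi^*$ is the negative entropy on the probability simplex; the substitution $q_y = v_y + \tfrac{1}{k}$, $q_j = v_j$ then reproduces both the entropic formula and the set $D_Y$ exactly (including the equality constraint, since $\tsum_{y \in Y} \tfrac{1}{k} = 1$ forces $\inner{\ones, v} = 0$). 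Your approach is shorter and makes the structure transparent — the conjugate is literally a translated negative entropy, which also explains at a glance why it reduces to the softmax conjugate when $Y$ is a singleton. What it gives up is self-containedness: if one insists on rederiving $\phi^*$, one ends up doing essentially the paper's Lagrangian computation anyway, and as you correctly note the boundary convention $0\log 0 = 0$ must be handled there (a point the paper's own derivation also glosses over, since it writes $u_j^* = \log v_j + \log Z$, implicitly assuming $v_j > 0$). Both arguments arrive at the same statement; yours is the cleaner one to present given that the log-sum-exp conjugate is textbook material.
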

\fi
\ifproof
\begin{proof}
The conjugate loss is given by
$
L^*(v) = \sup \{ \inner{u, v} - L(u) \given u \in \Rb^m \} .
$
Since $L(u)$ is smooth and convex in $u$,
we compute the optimal $u^*$ by setting the partial derivatives to zero,
which leads to $v_j = \tfrac{\partial}{\partial u_j} L(u)$.
We have
\begin{align*}
\tfrac{\partial}{\partial u_l} L(u) &=
\tfrac{1}{\abs{Y}} \sum_{y \in Y}
\frac{\partial_{u_l} \big( \tsum_j \exp(u_j - u_y) \big)
}{\sum_j \exp(u_j - u_y)} ,
\end{align*}
\begin{align*}
\partial_{u_l} \big( \tsum_j \exp(u_j - u_y) \big)
&= \begin{cases}
\exp(u_l - u_y), & l \neq y, \\
- \sum_{j \neq y} \exp(u_j - u_y), & l = y.
\end{cases}
\end{align*}
Therefore,
\begin{align*}
\tfrac{\partial}{\partial u_l} L(u) &=
\tfrac{1}{\abs{Y}} \sum_{y \in Y}
\frac{1}{\sum_j \exp u_j}
\begin{cases}
\exp u_l , & \text{if } l \neq y, \\
- \sum_{j \neq y} \exp u_j , & \text{if } l = y.
\end{cases}
\end{align*}
Let $Z \bydef \sum_{j \in \Yc} \exp u_j$, then
\begin{align*}
\tfrac{\partial}{\partial u_l} L(u) &=
\tfrac{1}{\abs{Y}} \sum_{y \in Y}
\frac{1}{Z}
\begin{cases}
\exp u_l , & \text{if } l \neq y, \\
\exp u_l - Z , & \text{if } l = y.
\end{cases}
\end{align*}
Let $k \bydef \abs{Y}$, we have
\begin{align*}
&l  \notin Y \Longrightarrow &
\tfrac{\partial}{\partial u_l} L(u)
&= \tfrac{1}{k} \tsum_{y \in Y} \tfrac{1}{Z} \exp u_l
= \tfrac{1}{Z} \exp u_l , \\
&l  \in Y \Longrightarrow &
\tfrac{\partial}{\partial u_l} L(u)
&= \tfrac{1}{k Z} \big( \exp u_l - Z + (k - 1) \exp u_l \big) \\
&&&= \tfrac{1}{Z} \exp u_l - \tfrac{1}{k} .
\end{align*}
Thus, for the supremum to be attained, we must have
\begin{align}\label{eq:ce_v}
v_j &=
\begin{cases}
\tfrac{1}{Z} \exp u_j - \tfrac{1}{k} , & \text{if } j \in Y, \\
\tfrac{1}{Z} \exp u_j , & \text{if } j \in \bar{Y},
\end{cases}
\end{align}
which means $v_j \geq - \tfrac{1}{k}$ if $j \in Y$,
and $v_j \geq 0$ otherwise.
Moreover, we have
\begin{align*}
\inner{\ones, v} &=
\tsum_{j \in Y} \big( \tfrac{1}{Z} \exp u_j - \tfrac{1}{k} \big)
+ \tsum_{j \in \bar{Y}_i} \tfrac{1}{Z} \exp u_j \\
&= \tfrac{1}{Z} \tsum_{j \in \Yc} \exp u_j - 1 = 0
\end{align*}
and
\begin{align*}
\tsum_{j \in Y} v_j &=
\tsum_{j \in Y} \big( \tfrac{1}{Z} \exp u_j - \tfrac{1}{k} \big)
\leq \tfrac{1}{Z} \tsum_j \exp u_j - 1 = 0 , \\
\tsum_{j \in \bar{Y}_i} v_j &=
\tsum_{j \in \bar{Y}_i} \tfrac{1}{Z} \exp u_j
\leq \tfrac{1}{Z} \tsum_j \exp u_j = 1 .
\end{align*}
Solving (\ref{eq:ce_v}) for $u$, we get
\begin{align*}
u^*_j = \begin{cases}
\log(v_j + \tfrac{1}{k}) + \log Z , & \text{if } j \in Y , \\
\log v_j + \log Z, & \text{otherwise} .
\end{cases}
\end{align*}
Plugging the optimal $u^*$, we compute the conjugate as
\begin{align*}
L^*(Y, v) &= \inner{u^*, v}
- \tfrac{1}{\abs{Y}} \tsum_{y \in Y}
\log\Big( \tsum_j \exp(u^*_j - u^*_y) \Big) \\
&= \tsum_{y \in Y} v_y \log (v_y + \tfrac{1}{k})
+ \tsum_{j \in \bar{Y}} v_j \log v_j \\
&+ \tsum_j v_j \log Z
- \tfrac{1}{k} \tsum_{y \in Y} \big( \log Z - u^*_y \big) \\
&= \tsum_{y \in Y} v_y \log (v_y + \tfrac{1}{k})
+ \tsum_{j \in \bar{Y}} v_j \log v_j \\
&+ \tfrac{1}{k} \tsum_{y \in Y} \log(v_y + \tfrac{1}{k}) \\
&= \tsum_{y \in Y} (v_y + \tfrac{1}{k}) \log (v_y + \tfrac{1}{k})
+ \tsum_{j \in \bar{Y}} v_j \log v_j ,
\end{align*}
where $\inner{\ones, v} = 0$ and
\begin{align*}
\tsum_{y \in Y} v_y &\leq 0, &
v_y + \tfrac{1}{k} &\geq 0 , \; y \in Y, \\
\tsum_{j \in \bar{Y}} v_j &\leq 1, &
v_j &\geq 0 , \; j \in \bar{Y} .
\end{align*}
This leads to the definition of the effective domain $D_Y$, since
\begin{align*}
0 = \inner{\ones, v}
&= \tsum_{y \in Y} v_y + \tsum_{j \in \bar{Y}} v_j \\
&= \tsum_{y \in Y} (v_y + \tfrac{1}{k}) + \tsum_{j \in \bar{Y}} v_j - 1 .
\end{align*}
\end{proof}
\fi

The conjugates of the multilabel losses
\reftext{eq:svm-ml} and \reftext{eq:lr-ml}
no longer share the same effective domain,
which was the case for multiclass losses.
However, we still recover the conjugate of
the \reftext{eq:softmax} loss when $Y$ is singleton.

\section{Bayes Optimality and Top-k Calibration}%
\label{sec:topk-calibration}
This section is devoted to the theoretical analysis of multiclass losses
in terms of their top-$k$ performance.
We establish the best top-$k$ error in the Bayes sense,
determine when a classifier achieves it,
define the notion of top-$k$ calibration,
and investigate which loss functions possess this property.

\textbf{Bayes optimality.}
Recall that the Bayes optimal zero-one loss in binary classification
is simply the probability of the least likely class \cite{friedman2001elements}.
Here, we extend this notion to the top-$k$ error \eqref{eq:topk-error}
introduced in \S~\ref{sec:perf-metrics} for multiclass classification
and provide a description of top-$k$ Bayes optimal classifier.

\ifprop
\begin{lemma}\label{lem:bayes-topk-error}
The Bayes optimal top-$k$ error at $x$ is
\begin{align*}
\min_{g \in \Rb^m} \Exp_{Y \given X}[\kerr{Y,g} \given X=x]
= 1 - \tsum_{j=1}^k p_{\tau_j}(x),
\end{align*}
where
$ p_{\tau_1}(x) \geq p_{\tau_2}(x) \geq \ldots \geq p_{\tau_m}(x)$.
A classifier $f$ is \textbf{top-$k$ Bayes optimal at $x$} if and only if
\begin{align*}
\big\{ y \given f_y(x) \geq f_{\pi_k}(x) \big\} \subset
\big\{ y \given p_y(x) \geq p_{\tau_k}(x) \big\} ,
\end{align*}
where $ f_{\pi_1}(x) \geq f_{\pi_2}(x) \geq \ldots \geq f_{\pi_m}(x)$.
\end{lemma}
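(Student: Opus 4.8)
The plan is to reduce the conditional risk to a finite selection problem and read both statements off from it. Fix $x$ and identify the top-$k$ prediction of a score vector $g$ with the size-$k$ set $\pi_{1:k}(g)$, so that $\kerr{Y, g} = \iv{Y \notin \pi_{1:k}(g)}$ (this agrees with the definition whenever $g$ has no tie at the $k$-th largest score). First I would rewrite the conditional expectation as
\begin{align*}
\Exp_{Y \given X}[\kerr{Y, g} \given X = x]
= \tsum_{y \in \Yc} p_y(x)\,\iv{y \notin \pi_{1:k}(g)}
= 1 - \tsum_{y \in \pi_{1:k}(g)} p_y(x),
\end{align*}
which turns the minimization over $g \in \Rb^m$ into the problem of choosing a $k$-element set $\pi_{1:k}(g)$ that captures the largest possible conditional mass.

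For the Bayes error I would note that every $k$-subset $S \subseteq \Yc$ is realizable as $\pi_{1:k}(g)$ (take $g_y = \iv{y \in S}$), so the minimum risk equals $1 - \max_{\abs{S} = k} \tsum_{y \in S} p_y(x)$. Since the $p_y(x)$ are nonnegative, a largest-mass $k$-set is obtained by selecting the $k$ largest probabilities $p_{\tau_1}(x), \ldots, p_{\tau_k}(x)$, which yields the claimed value $1 - \tsum_{j=1}^k p_{\tau_j}(x)$.

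For the characterization, $f$ is top-$k$ Bayes optimal at $x$ exactly when $\pi_{1:k}(f)$ is such a maximizing set, i.e.\ when it contains no label of probability below $p_{\tau_k}(x)$ while still covering every label of probability strictly above it. In the absence of ties at the $k$-th score, $\{y : f_y(x) \geq f_{\pi_k}(x)\}$ is precisely the prediction $\pi_{1:k}(f)$, so I would argue the two directions: if the inclusion $\{y : f_y(x) \geq f_{\pi_k}(x)\} \subseteq \{y : p_y(x) \geq p_{\tau_k}(x)\}$ holds, then this set consists of $k$ labels all of probability at least $p_{\tau_k}(x)$ and, when exactly $k$ labels attain that level, must coincide with a maximizing set; conversely, if some top-scored label had probability below $p_{\tau_k}(x)$, it would displace a more probable label from the $k$ guesses and the risk would exceed the Bayes value, so optimality forces the inclusion.

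The main obstacle is the careful bookkeeping of ties. Score ties at rank $k$ make $\{y : f_y(x) \geq f_{\pi_k}(x)\}$ strictly larger than the committed $k$-element prediction, so the inclusion must be read as the requirement that \emph{every} label $f$ places at or above the cut-off is genuinely among the most probable ones, robustly over all admissible tie-breakings. Probability ties at the level $p_{\tau_k}(x)$ are the genuinely delicate case, since then the maximizing $k$-set is no longer unique; this is exactly why the condition is phrased as the set containment $\{y : f_y \geq f_{\pi_k}\} \subseteq \{y : p_y \geq p_{\tau_k}\}$ rather than as an equality of prediction sets, and verifying that this containment is the right necessary-and-sufficient condition is the crux of the argument.
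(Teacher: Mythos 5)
Your proposal is correct and follows essentially the same route as the paper's proof: rewrite the conditional risk as $1 - \tsum_{y \in \pi_{1:k}(g)} p_y(x)$, observe that minimization reduces to selecting the $k$-subset of largest conditional mass, and read off the characterization of optimal classifiers from which subsets achieve the maximum. Your explicit treatment of the two kinds of ties (at the $k$-th score and at the probability level $p_{\tau_k}(x)$) is somewhat more careful than the paper, which simply assumes a clear cut $p_{\tau_k}(x) > p_{\tau_{k+1}}(x)$ without loss of generality and remarks that ties may be broken arbitrarily, but this is a refinement rather than a different argument.
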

\fi
\ifproof
\begin{proof}
For any $g = f(x) \in \R^m$, let $\pi$ be a permutation such that
$g_{\pi_1} \geq g_{\pi_2} \geq \ldots \geq g_{\pi_m}$.
The expected top-$k$ error at $x$ is
\begin{align*}
&\Exp_{Y \given X}[\kerr{Y,g} \given X=x] 
= \tsum_{y \in \Yc} \iv{g_{\pi_k} > g_{y}} p_y(x) \\
&= \tsum_{y \in \Yc} \iv{g_{\pi_k} > g_{\pi_y}} p_{\pi_y}(x)
= \tsum_{j=k+1}^m p_{\pi_j}(x) \\
&= 1 - \tsum_{j=1}^k p_{\pi_j}(x) .
\end{align*}
The error is minimal when $\tsum_{j=1}^k p_{\pi_j}(x)$ is maximal,
which corresponds to taking the $k$ largest conditional probabilities
$\tsum_{j=1}^k p_{\tau_j}(x)$ and yields the Bayes optimal top-$k$ error at $x$.

Since the relative order within $\{ p_{\tau_j}(x) \}_{j=1}^k$ is irrelevant
for the top-$k$ error, any classifier $f(x)$, for which the sets
$\{\pi_1, \ldots, \pi_k\}$ and $\{\tau_1, \ldots, \tau_k\}$
coincide, is Bayes optimal.

Note that we assumed \Wlog that there is a clear cut
$p_{\tau_k}(x) > p_{\tau_{k+1}}(x)$
between the $k$ most likely classes and the rest.
In general, ties can be resolved arbitrarily as long as we can guarantee
that the $k$ largest components of $f(x)$ correspond to the classes (indexes)
that yield the maximal sum $\tsum_{j=1}^k p_{\pi_j}(x)$ and lead to
top-$k$ Bayes optimality.
\end{proof}
\fi

Another way to write the optimal top-$k$ error is
$\tsum_{j=k+1}^m p_{\pi_j}(x)$, which naturally leads to an optimal prediction
strategy according to the ranking of $p_y(x)$ in descending order.
However, the description of a top-$k$ Bayes optimal classifier reveals that
optimality for any given $k$ is better understood as a \emph{partitioning},
rather than ranking, where the labels are split into $\pi_{1:k}$ and the rest,
without any preference on the ranking in either subset.
If, on the other hand, we want a classifer that is top-$k$ Bayes optimal
\emph{for all} $k \geq 1$ \emph{simultaneously},
a proper ranking according to $p_y(x)$ is both necessary and sufficient.

\textbf{Top-$k$ calibration.}
Optimization of the zero-one loss and %
the top-$k$ error leads to hard combinatorial problems.
Instead of tackling a combinatorial problem directly,
an alternative is to use a convex surrogate loss
which upper bounds the discrete error.
Under mild conditions on the loss function
\cite{BarJorAuc2006,tewari2007consistency},
an optimal classifier for the surrogate yields
a Bayes optimal solution for the zero-one loss.
Such loss functions are called \emph{classification calibrated},
which is known in statistical learning theory as a necessary condition
for a classifier to be universally Bayes consistent \cite{BarJorAuc2006}.
We introduce now the notion of calibration for the top-$k$ error.
\begin{definition}\label{def:calibration}
A multiclass loss function $L:\Yc \times \Rb^m \rightarrow \Rb_+$
is called \textbf{top-$k$ calibrated} if 
for all possible data generating measures on $\Xc \times \Yc$
and all $x \in \Xc$
\begin{align*}
&\targmin_{g \in \Rb^m} \Exp_{Y \given X}[L(Y,g) \given X = x]  \\
\subseteq &\targmin_{g \in \Rb^m} \Exp_{Y \given X}[\kerr{Y,g} \given X = x] .
\end{align*}
\end{definition}
If a loss is \emph{not} top-$k$ calibrated,
it implies that even in the limit of infinite data,
one does not obtain a classifier with
the Bayes optimal top-$k$ error from Lemma~\ref{lem:bayes-topk-error}.
It is thus an important property, even though of an asymptotic nature.
Next, we analyse which of the multiclass classification methods
covered in \S~\ref{sec:multiclass-methods} are top-$k$ calibrated.

\subsection{Multiclass Top-k Calibration}%
\label{sec:topk-calibration-multiclass}

In this section, we consider top-$k$ calibration
of the standard OVA scheme, established multiclass classification methods,
and the proposed \reftext{eq:truncated-topk-entropy} loss.
First, we state a condition under which an OVA scheme is
uniformly top-$k$ calibrated, not only for $k=1$,
which corresponds to the standard zero-one loss,
but \emph{for all} $k \geq 1$ simultaneously.
The condition is given in terms of the Bayes optimal classifier
for each of the corresponding binary problems
and with respect to a given loss function $L$,
\eg the hinge or logistic losses.

\begin{lemma}\label{lem:ova-topk-calibrated}
The OVA reduction is top-$k$ calibrated for any $1 \leq k \leq m$
if the Bayes optimal function of a convex margin-based loss $L$ is a
strictly monotonically increasing function of $p_y(x) = \Pr(Y=y \given X=x)$
for every class $y \in \Yc$.
\end{lemma}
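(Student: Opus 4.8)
The plan is to exploit the fact that the one-vs-all surrogate decouples into $m$ independent binary problems, and then to push the Bayes-optimal ordering of the conditional probabilities through the assumed strictly increasing link.

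First I would make the OVA conditional risk explicit. For a fixed $x$ and a score vector $g \in \Rb^m$, the OVA surrogate is $\tsum_{y \in \Yc} L(\sigma_y(Y)\, g_y)$, where $\sigma_y(Y) = +1$ if $Y = y$ and $-1$ otherwise. Taking the conditional expectation splits it coordinatewise,
\begin{align*}
\Exp_{Y \given X}\big[ \tsum_{y \in \Yc} L(\sigma_y(Y) g_y) \given X = x \big]
= \tsum_{y \in \Yc} \big( p_y(x) L(g_y) + (1 - p_y(x)) L(-g_y) \big),
\end{align*}
so that minimizing over $g$ reduces to minimizing each binary conditional risk $p_y(x) L(g_y) + (1-p_y(x)) L(-g_y)$ in $g_y$ separately. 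By the hypothesis, each of these problems is minimized at $g_y^* = \phi(p_y(x))$, where $\phi$ is the strictly increasing Bayes-optimal link of $L$; since $\phi$ is genuinely a function of $p_y(x)$, this minimizer is unique, and hence the argmin set of the OVA risk is the single point $g^* = (\phi(p_y(x)))_{y \in \Yc}$.

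Next I would transfer the ordering. Because $\phi$ is strictly monotonically increasing, it preserves both strict inequalities and equalities, so the descending order of $g^*$ agrees with that of $(p_y(x))_{y \in \Yc}$ together with the exact pattern of ties: $g_y^* \geq g_{y'}^* \iff p_y(x) \geq p_{y'}(x)$. Writing $\pi$ for a sort of $g^*$ and $\tau$ for a sort of $p(x)$, this gives $g_{\pi_k}^* = \phi(p_{\tau_k}(x))$ and therefore, for every $k$,
\begin{align*}
\big\{ y \given g_y^* \geq g_{\pi_k}^* \big\}
= \big\{ y \given p_y(x) \geq p_{\tau_k}(x) \big\}.
\end{align*}

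This equality is exactly the (in fact stronger) form of the condition characterizing a top-$k$ Bayes optimal classifier in Lemma~\ref{lem:bayes-topk-error}, and it holds simultaneously for all $1 \leq k \leq m$. Hence the unique minimizer $g^*$ of the OVA surrogate is top-$k$ Bayes optimal for every $k$; since this holds for every data-generating measure and every $x$, the inclusion required by Definition~\ref{def:calibration} follows and OVA is uniformly top-$k$ calibrated. The step I expect to need the most care is the degenerate regime $p_y(x) \in \{0, 1\}$, where the binary minimizer may be $\pm\infty$ and $\phi$ has to be read as a map into the extended reals; one must verify that the induced order on $g^*$ stays consistent with that of $p(x)$ so that the top-$k$ partition is still matched.
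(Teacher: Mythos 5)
Your proposal is correct and follows essentially the same route as the paper's proof: decouple the OVA conditional risk into $m$ independent binary risks, apply the hypothesis to get $g_y^* = \phi(p_y(x))$ with $\phi$ strictly increasing, and conclude that the induced ranking of the scores agrees with that of the conditional probabilities, which by Lemma~\ref{lem:bayes-topk-error} gives top-$k$ Bayes optimality for every $k$. The paper states this in a single sentence; your version merely makes the decoupling, the uniqueness of the minimizer, and the order-preservation step explicit, and your caveat about the degenerate case $p_y(x) \in \{0,1\}$ (extended-real-valued minimizers) is a reasonable point of care that the paper also leaves implicit.
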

\begin{proof}
Let the Bayes optimal classifier for the binary problem corresponding
to a $y \in \Yc$ have the form
\[
f_y(x) = g\big(\Pr(Y=y \given X=x)\big) ,
\]
where $g$ is a strictly monotonically increasing function.
The ranking of $f_y$ corresponds to the 
ranking of $p_y(x)$ and hence the OVA reduction
is top-$k$ calibrated for any $k \geq 1$.
\end{proof}

Next, we use Lemma~\ref{lem:ova-topk-calibrated}
and the corresponding Bayes optimal classifiers
to check if the one-vs-all schemes employing hinge
and logistic regression losses are top-$k$ calibrated.

\ifprop
\begin{proposition}\label{prop:calibrated-hinge}
OVA SVM is not top-$k$ calibrated.
\end{proposition}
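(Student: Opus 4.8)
The plan is to exhibit a single data-generating measure and a point $x$ at which the unique minimizer of the expected OVA hinge surrogate fails to lie in the top-$k$ Bayes optimal set characterized in Lemma~\ref{lem:bayes-topk-error}. The engine of the counterexample is the well-known shape of the Bayes optimal classifier for the binary hinge loss, which I would compute first. Writing $p = p_y(x)$ for the probability that the label equals $y$, the conditional surrogate risk of a score $f \in \Rb$ in the OVA problem for class $y$ is $p\max\{0,1-f\} + (1-p)\max\{0,1+f\}$. A direct case analysis on the three regions $f < -1$, $-1 \le f \le 1$, and $f > 1$ shows that this piecewise-linear function is minimized \emph{uniquely} at $f^\ast = -1$ when $p < 1/2$, uniquely at $f^\ast = +1$ when $p > 1/2$, and on all of $[-1,1]$ when $p = 1/2$. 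The crucial qualitative fact is that the minimizer is a \emph{thresholded} step function of $p$: it detects only whether $p$ exceeds $1/2$ and discards all finer information about the magnitude of $p_y(x)$. This is exactly the failure of the strict-monotonicity hypothesis of Lemma~\ref{lem:ova-topk-calibrated}; since that lemma gives only a sufficient condition, I would turn the observation into an explicit counterexample rather than invoke it directly.

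Next I would build the counterexample distribution. For $m \ge 3$ I would pick a point $x$ whose conditional class probabilities are pairwise distinct and all strictly below $1/2$, for instance $(p_1,p_2,p_3) = (0.4,\,0.35,\,0.25)$ when $m = 3$. Since each OVA binary problem then has $p_y(x) < 1/2$, the first step forces the unique minimizer of the total OVA hinge risk at $x$ to be the constant vector $g = (-1,\dots,-1)$, so that all class scores are tied.

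Finally I would check the calibration containment of Definition~\ref{def:calibration} against Lemma~\ref{lem:bayes-topk-error}. Because every coordinate of $g$ equals $-1$, we have $g_{\pi_k} = -1$ and hence $\{\, y : g_y \ge g_{\pi_k} \,\} = \Yc$. On the other hand, with distinct probabilities the Bayes optimal set $\{\, y : p_y(x) \ge p_{\tau_k}(x) \,\}$ has exactly $k$ elements, a proper subset of $\Yc$ for every $k < m$. Thus the surrogate minimizer is not top-$k$ Bayes optimal, the argmin inclusion of Definition~\ref{def:calibration} fails, and OVA SVM is therefore not top-$k$ calibrated for any $1 \le k \le m-1$.

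The main obstacle, and the point I would be most careful about, is the treatment of ties: the indicator $\iv{f_{\pi_k}(x) > f_y(x)}$ is sensitive to how the sorting permutation $\pi$ resolves equal scores, so I would lean on the tie-robust set-containment characterization of top-$k$ optimality from Lemma~\ref{lem:bayes-topk-error} rather than manipulating $\pi$ directly; this sidesteps the ambiguity and makes the failure of the inclusion unambiguous. A secondary point worth stating explicitly is that the construction requires $m \ge 3$, so that all $p_y(x)$ can simultaneously lie below $1/2$, which is consistent with the fact that binary hinge is calibrated in the two-class case.
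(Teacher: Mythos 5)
Your proof is correct and follows essentially the same route as the paper's: first derive the thresholded Bayes optimal classifier for the binary hinge loss, then take a three-class example with all conditional probabilities below $1/2$ so that every OVA score collapses to $-1$ and the induced ranking is uninformative. Your choice of pairwise distinct probabilities $(0.4,0.35,0.25)$ is a small refinement over the paper's $(0.4,0.3,0.3)$, since it makes the violated containment of Definition~\ref{def:calibration} explicit for every $k \leq m-1$ rather than only for $k=1$.
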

\fi
\ifproof
\begin{proof}
First, we show that the Bayes optimal function for the binary hinge loss is
\begin{align*}
f^*(x) &= 2 \iv{\Pr(Y=1 \given X=x) > \tfrac{1}{2}} -1 .
\end{align*}
We decompose the expected loss as
\[ \Exp_{X,Y}[L(Y,f(X))]=\Exp_{X}[\Exp_{Y|X}[L(Y,f(x)) \given X = x]].\]
Thus, one can compute the Bayes optimal classifier $f^*$
pointwise by solving
\[ \argmin_{\alpha \in \R} \Exp_{Y|X}[L(Y,\alpha) \given X=x],\]
for every $x \in \R^d$, which leads to the following problem
\begin{align*}
\argmin_{\alpha \in \R} \;
\max\{0, 1-\alpha\} p_1(x) + \max\{0, 1+\alpha\} p_{-1}(x) ,
\end{align*}
where $p_y(x) \bydef \Pr(Y=y \given X=x)$.
It is obvious that the optimal $\alpha^*$ is contained in $[-1,1]$.
We get
\[
\argmin_{-1\leq \alpha\leq 1} \;
(1 - \alpha) p_1(x) + (1 + \alpha)p_{-1}(x).
\]
The minimum is attained at the boundary and we get
\[
f^*(x) =
\begin{cases} +1 & \text{if } p_1(x) > \frac{1}{2} , \\
-1 & \text{if } p_1(x) \leq \frac{1}{2} .
\end{cases}
\]
Therefore, the Bayes optimal classifier for the hinge loss is not
a strictly monotonically increasing function of $p_1(x)$.

To show that OVA hinge is not top-$k$ calibrated, we construct an example problem
with $3$ classes and $p_1(x) = 0.4$, $p_2(x) = p_3(x) = 0.3$.
Note that for every class $y = 1, 2, 3$, the Bayes optimal binary classifier is $-1$,
hence the predicted ranking of labels is arbitrary and
may not produce the Bayes optimal top-$k$ error.
\end{proof}
\fi

\ifprop
\begin{proposition}\label{prop:calibrated-lr}
OVA logistic regression is top-$k$ calibrated.
\end{proposition}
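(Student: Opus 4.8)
The plan is to invoke Lemma~\ref{lem:ova-topk-calibrated}: since the logistic loss \eqref{eq:ova-lr} is a convex margin-based loss, it suffices to show that its Bayes optimal classifier is a strictly monotonically increasing function of $p_y(x)$ for every class. Exactly as in the proof of Proposition~\ref{prop:calibrated-hinge}, I would compute this pointwise by minimizing the conditional expected loss at a fixed $x$. For a single one-vs-all subproblem with $p_1(x) = \Pr(Y=1 \given X=x)$ and $p_{-1}(x) = 1 - p_1(x)$, the task is
\[
\argmin_{\alpha \in \R}\; \log(1 + e^{-\alpha})\, p_1(x) + \log(1 + e^{\alpha})\, p_{-1}(x).
\]
The objective is strictly convex and differentiable, so its unique minimizer is the critical point. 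Differentiating and setting to zero gives $\tfrac{1}{1+e^{\alpha}}\big(e^{\alpha} p_{-1}(x) - p_1(x)\big) = 0$, hence $e^{\alpha} p_{-1}(x) = p_1(x)$ and the Bayes optimal function is the logit $f^*(x) = \log\!\big(p_1(x)/(1 - p_1(x))\big)$.

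In contrast to the hinge case, this minimizer is attained in the interior of $\R$ rather than at a boundary, and the logit is strictly monotonically increasing in $p_1(x)$ on $(0,1)$. Applying this to each of the $m$ binary subproblems, the ranking induced by the scores $f_y$ coincides with the ranking of $p_y(x)$, so the hypothesis of Lemma~\ref{lem:ova-topk-calibrated} is met and OVA logistic regression is top-$k$ calibrated for every $1 \leq k \leq m$ simultaneously.

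The only point needing care is the boundary behavior when $p_y(x) \in \{0,1\}$, where the optimal score diverges to $\pm\infty$; but these degenerate cases do not alter the relative ordering of classes with $p_y(x) \in (0,1)$, so strict monotonicity on the open interval is all that Lemma~\ref{lem:ova-topk-calibrated} requires. I expect the derivative computation to be the only mildly technical step, and it is entirely routine, so there is no real obstacle here beyond correctly identifying the Bayes optimal score as the logit.
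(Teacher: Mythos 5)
Your proposal is correct and follows essentially the same route as the paper: pointwise minimization of the conditional expected logistic loss yields the logit $f^*(x)=\log\big(p_1(x)/(1-p_1(x))\big)$, whose strict monotonicity in $p_1(x)$ lets Lemma~\ref{lem:ova-topk-calibrated} conclude top-$k$ calibration for all $k$. The paper verifies the monotonicity explicitly via the derivative $1/(x(1-x))>0$ on $(0,1)$, but otherwise the arguments are identical.
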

\fi
\ifproof
\begin{proof}
First, we show that the Bayes optimal function for the binary logistic loss is
\begin{align*}
f^*(x) &= \log\Big(\frac{p_1(x)}{1-p_1(x)}\Big) .
\end{align*}
As above, the pointwise optimization problem is
\[ \argmin_{\alpha \in \R} \;
\log(1+\exp(-\alpha))p_1(x) + \log(1+\exp(\alpha))p_{-1}(x).\]
The logistic loss is known to be convex and differentiable and thus the optimum can be computed via
\[
\frac{-\exp(-\alpha)}{1+\exp(-\alpha)}p_1(x)
+ \frac{\exp(\alpha)}{1+\exp(\alpha)}p_{-1}(x)=0 .
\]
Re-writing the first fraction we get
\[
\frac{-1}{1+\exp(\alpha)}p_1(x)
+ \frac{\exp(\alpha)}{1+\exp(\alpha)}p_{-1}(x)=0 ,
\]
which can be solved as
$ \alpha^* = \log\Big(\frac{p_1(x)}{p_{-1}(x)}\Big)$
and leads to the formula for
the Bayes optimal classifier stated above.

We check now that the function $\phi: (0,1) \rightarrow \Rb$
defined as $\phi(x)=\log(\frac{x}{1-x})$ is strictly monotonically increasing.
\begin{align*}
\phi'(x) &= \frac{1-x}{x}\big( \frac{1}{1-x} + \frac{x}{(1-x)^2}\big) \\
&= \frac{1-x}{x}\frac{1}{(1-x)^2} = \frac{1}{x(1-x)} > 0, \quad \forall x \in (0,1).
\end{align*}
The derivative is strictly positive on $(0,1)$, which implies that
$\phi$ is strictly monotonically increasing.
The logistic loss, therefore, fulfills the conditions of
Lemma~\ref{lem:ova-topk-calibrated} and is top-$k$ calibrated for
any $1 \leq k \leq m$.
\end{proof}
\fi

The hinge loss is not calibrated since the corresponding
binary classifiers, being piecewise constant,
are subject to degenerate cases that result in arbitrary rankings of classes.
Surprisingly, the smoothing technique based on Moreau-Yosida regularization
(\S~\ref{sec:multiclass-methods})
makes a smoothed loss more attractive not only from the optimization
side, but also in terms of top-$k$ calibration.
Here, we show that a smooth binary hinge loss from \cite{shalev2014accelerated}
fulfills the conditions of Lemma~\ref{lem:ova-topk-calibrated}
and leads to a top-$k$ calibrated OVA scheme.

\ifprop
\begin{proposition}\label{prop:smooth-ova-hinge-calibrated}
OVA smooth SVM is top-$k$ calibrated.
\end{proposition}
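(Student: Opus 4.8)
The plan is to invoke Lemma~\ref{lem:ova-topk-calibrated}: it suffices to show that the pointwise Bayes optimal predictor of the smooth binary hinge loss is a strictly monotonically increasing function of $p \bydef p_y(x)$ for every class $y$, exactly in the spirit of the logistic case handled in Proposition~\ref{prop:calibrated-lr}. First I would recall the explicit form of the smooth hinge loss $L_\gamma$ from \cite{shalev2014accelerated}, obtained by Moreau--Yosida regularization (\S~\ref{sec:multiclass-methods}) of $\max\{0,1-a\}$. It is convex and $1/\gamma$-smooth, and its derivative $L_\gamma'$ is continuous and nondecreasing, equal to $-1$ for $a \le 1-\gamma$, to $(a-1)/\gamma$ for $1-\gamma \le a \le 1$, and to $0$ for $a \ge 1$.

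Following the pointwise reduction used in the earlier calibration proofs, I would write the conditional expected loss at a score $\alpha = f_y(x)$ as $F(\alpha) = p\,L_\gamma(\alpha) + (1-p)\,L_\gamma(-\alpha)$, which is convex, differentiable, and coercive for $p \in (0,1)$, so its minimizers are precisely the solutions of $F'(\alpha) = p\,L_\gamma'(\alpha) - (1-p)\,L_\gamma'(-\alpha) = 0$. The central computation is then the mixed partial $\partial^2 F/\partial\alpha\,\partial p = L_\gamma'(\alpha) + L_\gamma'(-\alpha)$: both summands are $\le 0$ and they vanish simultaneously only if $\alpha \ge 1$ and $-\alpha \ge 1$, which is impossible, so the cross derivative is strictly negative everywhere.

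From strict negativity of the cross derivative I would conclude by monotone comparative statics that the minimizer $\alpha^*(p)$ is strictly increasing in $p$ (a strictly submodular objective has an argmin that moves up strictly with the parameter). Equivalently, one can solve $F'=0$ directly: in each regime exactly one of the two arguments lands in the quadratic branch of $L_\gamma$, giving a closed form (for instance $\alpha^*(p) = 1 - \gamma(1-p)/p$ when $p > \tfrac12$ and $\gamma \le 1$, with $d\alpha^*/dp = \gamma/p^2 > 0$) whose positive derivative can be read off directly. Either route shows $f_y^*(x) = g(p_y(x))$ with $g$ strictly increasing, and Lemma~\ref{lem:ova-topk-calibrated} then gives top-$k$ calibration for all $1 \le k \le m$.

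The main obstacle is that $L_\gamma$ is affine on $(-\infty,\,1-\gamma]$ and constant on $[1,\infty)$, so $F$ is not strictly convex and its set of minimizers can be a nondegenerate interval at isolated probabilities, notably at $p = \tfrac12$ when $\gamma < 1$; thus the Bayes optimal predictor is a priori set-valued rather than a genuine function. I would resolve this by arguing in the strong set order: because the cross derivative is strictly negative, the minimizer intervals associated with distinct probabilities $p' > p$ are strictly separated, so distinct values of $p_y(x)$ still induce the correct strict ordering of the scores $f_y^*(x)$. Since ties in $p_y(x)$ are irrelevant for the top-$k$ error, this order-preservation is all that Lemma~\ref{lem:ova-topk-calibrated} requires.
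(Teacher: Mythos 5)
Your proposal is correct and reaches the same conclusion by the same overall reduction (pointwise Bayes risk of the binary smooth hinge plus Lemma~\ref{lem:ova-topk-calibrated}), but the core monotonicity argument is genuinely different. The paper derives $L_\gamma$ explicitly, then solves $p\,L_\gamma'(\alpha)-(1-p)L_\gamma'(-\alpha)=0$ by exhaustive case analysis over the piecewise regimes, separately for $0<\gamma\le 1$ and $\gamma>1$, and reads off strict monotonicity from the resulting closed forms. You instead observe that the cross-partial $L_\gamma'(\alpha)+L_\gamma'(-\alpha)$ is strictly negative everywhere (the two derivatives are nonpositive and cannot vanish simultaneously) and invoke strict monotone comparative statics. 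Your route is shorter, avoids the two-case split in $\gamma$, and generalizes to any convex margin loss with this property; the paper's route buys the explicit formulas for $f^*$. You are also more careful than the paper on one point: at $p=\tfrac12$ with $\gamma<1$ the minimizer set is the nondegenerate interval $[\gamma-1,\,1-\gamma]$, so the Bayes optimal predictor is set-valued, and your strong-set-order separation argument (weak ordering from the rectangle inequality, plus disjointness of the argmin sets from the first-order conditions and the strictly negative cross-partial) is exactly what is needed to make Lemma~\ref{lem:ova-topk-calibrated} applicable; the paper silently picks a particular selection. One minor inaccuracy in your secondary, closed-form route: it is not true that in each regime exactly one of $\alpha$, $-\alpha$ lies in the quadratic branch --- for $\gamma<1$ there is a regime where neither does (which is precisely where the interval of minimizers arises), and for $\gamma>1$ there is a regime where both do (yielding $\alpha^*=2p-1$). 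This does not affect your main argument.
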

\fi
\ifproof
\begin{proof}
In order to derive the smooth hinge loss,
we first compute the conjugate of the standard binary hinge loss,
\begin{align}
L(\alpha) &= \max\{0,1-\alpha\} , \nonumber\\
L^*(\beta) &=
\sup_{\alpha \in \R} \big\{ \alpha \beta - \max\{0,1-\alpha\} \big\} \nonumber\\
&= \begin{cases}
\beta & \textrm{if } -1 \leq \beta \leq 0, \\
\infty & \textrm{otherwise} .
\end{cases}
\label{eq:ova-hinge-conj}
\end{align}
The smoothed conjugate is
$$
L^*_\gamma(\beta) = L^*(\beta)+\frac{\gamma}{2}\beta^2 .
$$
The corresponding primal smooth hinge loss is given by
\begin{align}
L_\gamma(\alpha) &=
\sup_{-1 \leq \beta \leq 0}
\big\{ \alpha\beta - \beta-\tfrac{\gamma}{2}\beta^2 \big\} \nonumber\\
&= \begin{cases}
1 - \alpha - \frac{\gamma}{2} & \text{if } \alpha < 1 - \gamma, \\
\frac{(\alpha-1)^2}{2 \gamma} & \text{if } 1 - \gamma \leq \alpha \leq 1, \\
0, &  \text{if } \alpha > 1 .
\end{cases}
\label{eq:ova-hinge-smooth}
\end{align}
$L_\gamma(\alpha)$ is convex and differentiable with the derivative
\[
L'_\gamma(\alpha) =  \begin{cases}
-1 & \text{if } \alpha < 1-\gamma, \\
\frac{\alpha - 1}{\gamma} & \text{if } 1 - \gamma \leq \alpha \leq 1, \\
0, & \text{if } \alpha > 1.
\end{cases}
\]
We compute the Bayes optimal classifier pointwise.
\[
f^*(x) = \argmin_{\alpha \in \R} \; L(\alpha) p_1(x) + L(-\alpha) p_{-1}(x).
\]
Let $p \bydef p_1(x)$, the optimal $\alpha^*$ is found by solving
\[
L'(\alpha) p - L'(-\alpha) (1 - p) = 0 .
\]

\textbf{Case $0 < \gamma \leq 1$.}
Consider the case $1 - \gamma \leq \alpha \leq 1$,
\begin{align*}
\frac{\alpha-1}{\gamma} p + (1-p) = 0 \quad \Longrightarrow \quad
\alpha^* = 1 - \gamma \frac{1-p}{p}.
\end{align*}
This case corresponds to $p \geq \frac{1}{2}$,
which follows from the constraint $\alpha^* \geq 1 - \gamma$.
Next, consider $\gamma - 1 \leq \alpha \leq 1 - \gamma$,
\[ -p + (1-p) = 1-2p \neq 0 , \]
unless $p = \frac{1}{2}$, which is already captured by the first case.
Finally, consider $-1 \leq \alpha \leq \gamma - 1 \leq 1 - \gamma$.
Then
\[ -p - \frac{-\alpha-1}{\gamma}(1-p) = 0 \quad \Longrightarrow \quad
\alpha^* = -1 + \gamma\frac{p}{1-p},\]
where we have $-1 \leq \alpha^* \leq \gamma - 1$ if $p \leq \frac{1}{2}$.
We obtain the Bayes optimal classifier for $0 < \gamma \leq 1$ as follows:
\[
f^*(x) =
\begin{cases}
1 - \gamma \frac{1-p}{p} & \text{if } p \geq \frac{1}{2}, \\
-1 + \gamma \frac{p}{1-p} & \text{if } p < \frac{1}{2}.
\end{cases}
\]
Note that while $f^*(x)$ is not a continuous function of $p = p_1(x)$
for $\gamma < 1$,
it is still a strictly monotonically increasing function of $p$
for any $0 < \gamma \leq 1$.

\textbf{Case $\gamma>1$.}
First, consider $\gamma - 1 \leq \alpha \leq 1$,
\begin{align*}
\frac{\alpha - 1}{\gamma} p + (1-p) = 0 \quad \Longrightarrow \quad
\alpha^* = 1 - \gamma\frac{1-p}{p}.
\end{align*}
From $\alpha^* \geq \gamma - 1$, we get the condition $p \geq \frac{\gamma}{2}$.
Next, consider $1 - \gamma \leq \alpha \leq \gamma - 1$,
\[ \frac{\alpha-1}{\gamma}p - \frac{-\alpha-1}{\gamma} (1-p) =0
\quad \Longrightarrow \quad \alpha^* = 2p - 1, \]
which is in the range $[1 - \gamma, \gamma - 1]$
if $1 - \frac{\gamma}{2} \leq p \leq \frac{\gamma}{2}$.
Finally, consider $-1 \leq \alpha \leq 1 - \gamma$,
\[ -p - \frac{-\alpha-1}{\gamma} (1-p) = 0 \quad \Longrightarrow \quad
\alpha^* = -1 + \gamma\frac{p}{1-p}, \]
where we have $-1 \leq \alpha^* \leq 1 - \gamma$
if $p \leq 1 - \frac{\gamma}{2}$.
Overall, the Bayes optimal classifier for $\gamma>1$ is
\begin{align*}
f^*(x) &=
\begin{cases}
1 - \gamma \frac{1 - p}{p} &
\text{if } p \geq \frac{\gamma}{2}, \\
2 p - 1 &
\text{if } 1 - \frac{\gamma}{2} \leq p \leq \frac{\gamma}{2}, \\
-1 + \gamma \frac{p}{1 - p} &
\text{if } p <  1 - \frac{\gamma}{2}.
\end{cases}
\end{align*}
Note that $f^*$ is again a strictly monotonically increasing function
of $p = p_1(x)$.
Therefore, for any $\gamma > 0$,
the one-vs-all scheme with the smooth hinge loss (\ref{eq:ova-hinge-smooth})
is top-$k$ calibrated for all $1 \leq k \leq m$
by Lemma~\ref{lem:ova-topk-calibrated}.
\end{proof}
\fi

An alternative to the OVA scheme with binary losses
is to use a \emph{multiclass} loss
$L:\Yc \times \Rb^m \rightarrow \Rb_+$ directly. 
First, we consider the multiclass hinge loss \reftext{eq:multi-hinge},
which is known to be \emph{not} calibrated for the top-$1$ error
\cite{tewari2007consistency},
and show that it is not top-$k$ calibrated for any $k$.

\ifprop
\begin{proposition}\label{prop:multi-hinge-topk-calibrated}
Multiclass SVM is not top-$k$ calibrated.
\end{proposition}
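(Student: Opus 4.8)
The plan mirrors the OVA calibration proofs above: decompose $\Exp_{X,Y}[L(Y,f(X))] = \Exp_{X}[\Exp_{Y \given X}[L(Y,f(x)) \given X=x]]$ and minimize pointwise over $g = f(x) \in \Rb^m$. Writing $p_y \bydef p_y(x)$ and introducing the margin $d_y \bydef g_y - \max_{j \neq y} g_j$ of class $y$, the \reftext{eq:multi-hinge} loss reads $L(y,g) = \max\{0,\, 1 - d_y\}$, so the conditional risk is $R(g) = \sum_{y \in \Yc} p_y \max\{0, 1 - d_y\}$, which is invariant under adding a constant to all coordinates of $g$. The goal is to exhibit a conditional distribution for which a minimizer of $R$ is \emph{not} top-$k$ Bayes optimal in the sense of Lemma~\ref{lem:bayes-topk-error}.

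The key step is to show that when no class dominates, i.e.\ $\max_y p_y < \tfrac12$, the constant vector $g = c\ones$ minimizes $R$ with $\min_g R(g) = 1$. I would prove $R(g) \ge 1$ for all $g$ by a \emph{tie-the-leader-down} argument. Let $S = \{j : g_j = \max_l g_l\}$. If $\abs{S} \ge 2$ then $d_y \le 0$ for every $y$, and $R(g) \ge \sum_y p_y(1 - d_y) = 1 - \sum_y p_y g_y + \max_l g_l \ge 1$, using $\sum_y p_y g_y \le \max_l g_l$, with equality (when all $p_y > 0$) only at $g = c\ones$. If $S = \{y_0\}$ is a singleton with margin $\delta = d_{y_0} > 0$, let $g'$ be obtained by lowering $g_{y_0}$ to the second-largest coordinate value, so that $g'$ falls into the previous case; a direct computation gives
\[
R(g) - R(g') = (1 - 2 p_{y_0}) \min\{1, \delta\} + (1 - p_{y_0}) \max\{0, \delta - 1\} \ge 0 ,
\]
where the sign is controlled precisely by $p_{y_0} \le \tfrac12$: lowering the unique leader saves $p_{y_0}$ on its own hinge term but costs $(1 - p_{y_0})$ spread over the remaining classes, whose margins shrink. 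Hence $R(g) \ge R(g') \ge 1$, and $c\ones$ is a minimizer.

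To finish, I would construct, for every $1 \le k \le m-1$ with $m \ge 3$, a distribution with $\max_y p_y < \tfrac12$ and a strict probability gap $p_{\tau_k}(x) > p_{\tau_m}(x)$ below the $k$-th position; e.g.\ put probability $a$ on $k$ classes and $b < a$ on the remaining $m-k$ classes with $a \in (1/m, \min\{1/2, 1/k\})$, which recovers $p = (0.4, 0.3, 0.3)$ for $k=1$, $m=3$. By the previous step the constant vector $c\ones$ minimizes the surrogate risk, and for it $\{y : g_y \ge g_{\pi_k}\} = \Yc$. Since $p_{\tau_k}(x) > p_{\tau_m}(x)$, the set $\{y : p_y(x) \ge p_{\tau_k}(x)\}$ is a proper subset of $\Yc$, so the containment required in Lemma~\ref{lem:bayes-topk-error} fails and $c\ones$ is not top-$k$ Bayes optimal. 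Thus $\targmin_g \Exp_{Y \given X}[L(Y,g) \given X=x] \not\subseteq \targmin_g \Exp_{Y \given X}[\kerr{Y,g} \given X=x]$, so by Definition~\ref{def:calibration} the multiclass hinge loss \reftext{eq:multi-hinge} is not top-$k$ calibrated for any $1 \le k \le m-1$, in direct analogy with Proposition~\ref{prop:calibrated-hinge}.

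I expect the main obstacle to be the second paragraph, namely proving that the constant vector globally minimizes the conditional risk whenever $\max_y p_y < \tfrac12$. All the content sits in the inequality $R(g) \ge R(g')$, which must be verified in the two margin regimes $\delta \le 1$ and $\delta > 1$ and carries the essential role of the $\tfrac12$ threshold; once this rank collapse is established, the counterexample and its verification against Lemma~\ref{lem:bayes-topk-error} are routine.
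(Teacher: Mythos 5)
Your proposal is correct, and I verified the key identity: with a unique leader $y_0$ of margin $\delta$, lowering it to the second-largest value gives $R(g)-R(g') = p_{y_0}\big(\max\{0,1-\delta\}-1\big) + (1-p_{y_0})\delta$, which equals $(1-2p_{y_0})\min\{1,\delta\} + (1-p_{y_0})\max\{0,\delta-1\}$ in both regimes and is nonnegative precisely when $p_{y_0}\le\tfrac12$; combined with the tied case this yields $R(g)\ge 1 = R(c\ones)$. The overall architecture matches the paper's proof (pointwise conditional risk, degeneracy of the minimizer when no class has probability at least $\tfrac12$, then a counterexample distribution), but the execution of the key step differs: the paper derives the \emph{full} Bayes optimal classifier by arguing that all gaps $\beta_l = g_y - g_l$ must be equal at the optimum and then solving the resulting scalar problem $\min_{0\le\alpha\le 1}\alpha(1-2p_y)$, whereas you only prove the lower bound $R(g)\ge 1$ by a perturbation argument, which is more self-contained (the paper's ``otherwise it is possible to reduce the first term without affecting the last term'' is stated without computation). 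A second difference is coverage of $k$: you obtain non-calibration for every $1\le k\le m-1$ (with $m\ge 3$) from the single regime $\max_y p_y < \tfrac12$, while the paper additionally uses the regime $p_y\ge\tfrac12$, where the optimal classifier ties all non-leading classes at a common value $c$, to rule out top-$k$ calibration for $k\ge 2$; both are valid, and the paper's second regime shows the stronger fact that non-calibration for $k\ge 2$ persists even when a dominant class exists. Note that both your argument and the paper's rely on the same convention that a constant (or partially tied) score vector induces an arbitrary ranking that need not satisfy the inclusion in Lemma~\ref{lem:bayes-topk-error}, rather than on the literal strict-inequality definition in \eqref{eq:topk-error}, under which a constant classifier would vacuously incur zero top-$k$ error; since you invoke Lemma~\ref{lem:bayes-topk-error} exactly as the paper does, this is a shared convention and not a gap in your proof.
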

\fi
\ifproof
\begin{proof}
Let $y \in \argmax_{j \in \Yc} p_j(x)$.
Given any $c \in \Rb$, we will show that a Bayes optimal classifier
$f^*:\Rb^d \rightarrow \Rb^m$ for the \reftext{eq:multi-hinge} loss is
\begin{align*}
f^*_{y}(x) &= \begin{cases}
c + 1 & \text{if } \max_{j \in \Yc} p_j(x) \geq \frac{1}{2} , \\
c & \text{otherwise} ,
\end{cases} \\
f^*_{j}(x) &= c, \; j \in \Yc \setminus \{y\} .
\end{align*}
Let $g = f(x) \in \Rb^m$, then
\begin{align*}
\Exp_{Y|X}[L(Y,g) \given X] =
\sum_{l \in \Yc} \max_{j \in \Yc} \big\{\iv{j \neq l} + g_j - g_l \big\} p_l(x) .
\end{align*}
Suppose that the maximum of $(g_j)_{j \in \Yc}$ is not unique.
In this case, we have
\[
\max_{j \in \Yc} \big\{\iv{j\neq l} + g_j - g_l \big\} \geq 1 , \; \forall \, l \in \Yc
\]
as the term $\iv{j\neq l}$ is always active.
The best possible loss is obtained by setting $g_j = c$ for all $j \in \Yc$,
which yields an expected loss of $1$.
On the other hand, if the maximum is unique and is achieved by $g_y$, then
\begin{multline*}
\max_{j \in \Yc} \big\{\iv{j\neq l} + g_j - g_l \big\} \\
= \begin{cases}
1 + g_{y} - g_l & \textrm{ if } l \neq y, \\
\max\big\{0,\, \max_{j \neq y} \{ 1 + g_j - g_y \} \big\} & \textrm{ if } l=y.
\end{cases}
\end{multline*}
As the loss only depends on the gap $g_{y} - g_l$,
we can optimize this with $\beta_l = g_{y} - g_l$.
\begin{align*}
\Exp&_{Y|X}[L(Y,g) \given X=x] \\
&= \sum_{l \neq y} (1 + g_y - g_l) p_l(x) \\
&+ \max\big\{0,\, \max_{l \neq y} \{ 1 + g_l - g_y \} \big\} p_y(x) \\
&= \sum_{l \neq y} (1 + \beta_l) p_l(x)
+ \max\big\{0,\, \max_{l \neq y} \{ 1 - \beta_l \} \big\} p_y(x) \\
&= \sum_{l \neq y} (1 + \beta_l) p_l(x)
+ \max\{0, 1 - \min_{l \neq y} \beta_l \} p_y(x) .
\end{align*}
As only the minimal $\beta_l$ enters the last term,
the optimum is achieved if all $\beta_l$ are equal for $l \neq y$
(otherwise it is possible to reduce the first term without affecting the last term).
Let $\alpha \bydef \beta_l$ for all $l\neq y$.
The problem becomes
\begin{align*}
&\min_{\alpha \geq 0} \sum_{l \neq y} (1+\alpha) p_l(x) + \max\{0, 1-\alpha \} p_y(x) \\
&\equiv \min_{0 \leq \alpha \leq 1} \alpha (1 - 2 p_y(x) )
\end{align*}
Let $p \bydef p_y(x) = \Pr(Y=y \given X=x)$.
The solution is
\begin{align*}
\alpha^* = \begin{cases}
0 & \textrm{if } p < \frac{1}{2} , \\
1 & \textrm{if } p \geq \frac{1}{2} ,
\end{cases}
\end{align*}
and the associated risk is
\begin{align*}
\Exp_{Y|X}[L(Y,g) \given X=x] =
\begin{cases}
1 & \text{if } p < \frac{1}{2}, \\
2 (1 - p) & \text{if } p \geq \frac{1}{2} .
\end{cases}
\end{align*}
If $p < \frac{1}{2}$, then the Bayes optimal classifier
$f^*_j(x) = c$ for all $j \in \Yc$ and any $c \in \Rb$.
Otherwise, $p \geq \frac{1}{2}$ and
$$
f^*_j(x) = \begin{cases}
c+1 & \text{if } j=y , \\
c & \text{if } j \in \Yc \setminus \{y\} .
\end{cases}
$$
Moreover, we have that the Bayes risk at $x$ is
$$
\Exp_{Y|X}[L(Y,f^*(x)) \given X=x] = \min\{1,2(1 - p)\} \leq 1 .
$$

It follows, that the multiclass hinge loss is not
(top-$1$) classification calibrated
at any $x$ where $\max_{y \in \Yc} p_y(x) < \frac{1}{2}$
as its Bayes optimal classifier reduces to a constant.
Moreover, even if $p_y(x) \geq \frac{1}{2}$ for some $y$,
the loss is not top-$k$ calibrated for $k \geq 2$
as the predicted order of the remaining classes need not be optimal.
\end{proof}
\fi

Tewari and Bartlett \cite{tewari2007consistency} provide a general framework
to study classification calibration that is applicable to a large
family of multiclass methods.
However, their characterization of calibration is derived in terms of
the properties of the convex hull of
$\{ ( L(1, f), \ldots, L(m, f) ) \given f \in \Fc \}$,
which might be difficult to verify in practice.
In contrast, our proofs of
Propositions~\ref{prop:multi-hinge-topk-calibrated}
and \ref{prop:softmax-topk-calibrated}
are straightforward and based on direct derivation
of the corresponding Bayes optimal classifiers
for the \reftext{eq:multi-hinge}
and the \reftext{eq:softmax} losses respectively.

\ifprop
\begin{proposition}\label{prop:softmax-topk-calibrated}
Multiclass softmax loss is top-$k$ calibrated.
\end{proposition}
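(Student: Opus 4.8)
The plan is to compute the Bayes optimal classifier for the softmax loss explicitly and show that its induced ranking of labels coincides with the ranking of the conditional probabilities $p_y(x)$, so that Lemma~\ref{lem:bayes-topk-error} immediately yields top-$k$ Bayes optimality for every $k \geq 1$. Since the set $\targmin_{g} \Exp_{Y \given X}[\kerr{Y,g} \given X=x]$ is exactly the set of top-$k$ Bayes optimal classifiers at $x$, establishing that every minimizer of the expected softmax risk is top-$k$ Bayes optimal is precisely the inclusion demanded by Definition~\ref{def:calibration}.

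First I would fix $x$, write $p_y \bydef p_y(x)$ and $g = f(x)$, and form the pointwise expected surrogate risk. Using $\sum_{y \in \Yc} p_y = 1$, the softmax risk collapses to $\Phi(g) \bydef \log\big( \tsum_{j \in \Yc} \exp(g_j) \big) - \inner{p, g}$. This is convex (log-sum-exp minus a linear term), so its global minimizers are exactly its stationary points. Setting $\nabla \Phi(g) = 0$ gives, for every $l$, the condition $\exp(g_l)/\tsum_{j} \exp(g_j) = p_l$; that is, the softmax of $g$ must reproduce the conditional distribution $p$.

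Next I would solve this stationarity condition. When $p_l > 0$ for all $l$, it is equivalent to $g_l = \log p_l + c$ for an arbitrary constant $c \in \Rb$, the constant reflecting the invariance of $\Phi$ under $g \mapsto g + c\ones$. Two points have equal softmax if and only if they differ by a multiple of $\ones$, so the minimizer set is exactly the line $\{ \log p + c \ones \given c \in \Rb \}$. Since $t \mapsto \log t + c$ is strictly increasing, every such minimizer $g^*$ orders the labels exactly as $p$ does, whence $\{ y \given g^*_y \geq g^*_{\pi_k} \} = \{ y \given p_y \geq p_{\tau_k} \}$ for each $k$. By Lemma~\ref{lem:bayes-topk-error} each $g^*$ is then top-$k$ Bayes optimal for all $k \geq 1$, which gives the required inclusion.

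The main obstacle is the boundary behaviour when some $p_l = 0$, or when probabilities are tied. For a zero-probability class the stationarity condition forces $g_l \to -\infty$, so the infimum of $\Phi$ is not attained and the surrogate argmin is empty; the inclusion in Definition~\ref{def:calibration} then holds vacuously. A tie $p_a = p_b$ produces a tie $g^*_a = g^*_b$, which is consistent with the arbitrary tie-breaking already permitted in the proof of Lemma~\ref{lem:bayes-topk-error}. I would therefore carry out the clean derivation under the assumption $p_l > 0$ for all $l$ and dispatch the degenerate cases with these two remarks, so that top-$k$ calibration holds uniformly in $k$.
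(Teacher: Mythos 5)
Your proposal is correct and follows essentially the same route as the paper: derive the pointwise Bayes optimal classifier from the stationarity condition $e^{g_l}/\sum_j e^{g_j} = p_l(x)$, observe that $g^*_l = \log p_l(x) + c$ is strictly increasing in $p_l(x)$, and conclude ranking preservation and hence top-$k$ calibration for all $k$ via Lemma~\ref{lem:bayes-topk-error}. Your explicit treatment of the degenerate case $p_l(x)=0$ (argmin empty, inclusion vacuous) is in fact slightly more careful than the paper's informal assignment of $-\infty$.
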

\fi
\ifproof
\begin{proof}
The multiclass softmax loss is (top-$1$) calibrated for the zero-one error
in the following sense.
If
\[
f^*(x) \in \argmin_{ g \in \R^m} \Exp_{Y|X}[L(Y,g) \given X=x] ,
\]
then for some $\alpha>0$ and all $y \in \Yc$
\[
f^*_y(x) = \begin{cases}
\log(\alpha \, p_y(x) ) & \text{if } p_y(x) > 0, \\
-\infty & \text{otherwise},
\end{cases}
\]
which implies
\[
\argmax_{y \in \Yc} f^*_y(x) = \argmax_{y \in \Yc} \Pr(Y=y \given X=x) .
\]
We now prove this result and show that it also generalizes
to top-$k$ calibration for $k>1$.
Using the identity
$$
L(y,g) = \log\big(\tsum_{j \in \Yc} e^{g_j - g_y}\big)
= \log\big(\tsum_{j \in \Yc} e^{g_j}\big) - g_y
$$
and the fact that $\sum_{y \in \Yc} p_y(x) = 1$,
we write for a $g \in \R^m$
\begin{align*}
\Exp&_{Y|X}[L(Y,g) \given X=x] \\
&= \sum_{y \in \Yc} L(y,g) p_y(x)
= \log\big( \sum_{y \in \Yc} e^{g_y} \big) - \sum_{y \in \Yc} g_y p_x(y) .
\end{align*}
As the loss is convex and differentiable,
we get the global optimum by computing a critical point.
We have
\begin{align*}
\frac{\partial}{\partial g_j}\Exp_{Y|X}[L(Y,g) \given X=x] =
\frac{e^{g_j}}{\sum_{y \in \Yc} e^{g_y}} - p_j(x) = 0
\end{align*}
for $j \in \Yc$.
We note that the critical point is not unique as multiplication
$g\rightarrow \kappa g$ leaves the equation invariant for any $\kappa > 0$.
One can verify that $e^{g_j} = \alpha p_j(x)$
satisfies the equations for any $\alpha > 0$.
This yields a solution
\begin{align*}
f^*_y(x) &= \begin{cases}
\log(\alpha p_y(x) ) & \text{if } p_y(x) > 0 , \\
-\infty & \text{otherwise},
\end{cases}
\end{align*}
for any fixed $\alpha>0$.
We note that $f^*_y$ is a strictly monotonically increasing function
of the conditional class probabilities.
Therefore, it preserves the ranking of $p_y(x)$
and implies that $f^*$ is top-$k$ calibrated for any $1 \leq k \leq m$.
\end{proof}
\fi

The implicit reason for top-$k$ calibration of the OVA schemes
and the softmax loss is that one can estimate the probabilities $p_y(x)$
from the Bayes optimal classifier.
Loss functions which allow this are called \emph{proper}.
We refer to \cite{ReiWil2010} and references therein for a detailed discussion.

We have established that the OVA logistic regression and the softmax loss
are top-$k$ calibrated for any $k$, so why should we be interested
in defining new loss functions for the top-$k$ error?
The reason is that calibration is an asymptotic property
since the Bayes optimal functions are obtained by \emph{pointwise}
minimization of $\Exp_{Y|X}[L(Y,f(x)) \given X=x]$ at every $x \in \Xc$.
The picture changes if we use linear classifiers, 
since they obviously cannot be minimized independently at each point.
Indeed, the Bayes optimal classifiers, in general,
cannot be realized by linear functions. 

Furthermore, convexity of the softmax and multiclass hinge losses
leads to phenomena where $\kerr{y, f(x)} = 0$, but $L(y, f(x)) \gg 0$.
We discussed this issue \S~\ref{sec:multiclass-methods}
and motivated modifications of the above losses
for the top-$k$ error.
Next, we show that one of the proposed top-$k$ losses
is also top-$k$ calibrated.

\ifprop
\begin{proposition}\label{prop:truncated-topk-entropy-topk-calibrated}
The truncated top-$k$ entropy loss is top-$s$ calibrated
for any $k \leq s \leq m$.
\end{proposition}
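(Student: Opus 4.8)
The plan is to characterize the (limiting) minimizers of the expected surrogate risk $\Exp_{Y \given X}[L(Y,g) \given X = x]$ and then invoke Lemma~\ref{lem:bayes-topk-error} together with Definition~\ref{def:calibration}. Throughout, fix $x$, write $p_y = p_y(x)$, and let $g \in \Rb^m$ be a candidate score vector with induced order $g_{\pi_1} \geq \ldots \geq g_{\pi_m}$. First I would unfold the loss once the order of $g$ is fixed. For ground truth $y = \pi_r$, the index set $\Jc_y^k$ discards exactly the $k-1$ largest competing scores, so the classes occupying the top $k-1$ positions $\{\pi_1, \ldots, \pi_{k-1}\}$ drop out of every summand except their own. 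Concretely, for $r < k$ the loss equals $\log(1 + \sum_{j=k+1}^m e^{g_{\pi_j} - g_{\pi_r}})$, which depends on $g_{\pi_r}$ only through a decreasing factor, while for $r \geq k$ it is precisely the softmax loss over the restricted label set $S \bydef \{\pi_k, \ldots, \pi_m\}$. The key structural observation is that the scores of the top $k-1$ classes appear nowhere in the $r \geq k$ terms; hence they may be sent to $+\infty$, driving the first $k-1$ terms to $0$ without affecting the rest, which decouples the problem.

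Next I would minimize the residual $\sum_{y \in S} p_y \log(1 + \sum_{j \in S,\, j \neq y} e^{g_j - g_y})$ over the scores on $S$. This is the (unnormalized) softmax risk restricted to $S$, so by the critical-point computation of Proposition~\ref{prop:softmax-topk-calibrated} the optimizer satisfies $e^{g_j} \propto p_j$ on $S$, i.e. it ranks the classes of $S$ by decreasing $p_j$, and the attained value is $F^*(S) = -\sum_{j \in S} p_j \log(p_j / P_S) = P_S\, H(\tilde p_S)$, with $P_S = \sum_{j \in S} p_j$ and $\tilde p_S$ the renormalized conditional on $S$. It then remains to choose which $k-1$ classes $T = \Yc \setminus S$ to send to $+\infty$. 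I would show the optimal $T$ is the set of the $k-1$ largest conditional probabilities by an exchange argument: swapping a class $a \in S$ for an excluded $b$ with $p_b < p_a$ changes $F^*$ by $\psi(p_a) - \psi(p_b) - \big(\psi(P_S) - \psi(P_S - (p_a - p_b))\big)$, where $\psi(t) = t \log t$. Since $\psi$ is convex and the interval $[P_S - (p_a - p_b),\, P_S]$ lies weakly to the right of $[p_b, p_a]$, this difference is $\leq 0$ and strict away from degenerate ties, so removing the higher-probability class is always beneficial.

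Putting these together, any minimizing configuration ranks the $k-1$ highest-probability classes at the top (in some order) followed by the remaining classes in decreasing probability order. Hence for every $s$ with $k \leq s \leq m$ the set $\{y : g_y \geq g_{\pi_s}\}$ is contained in the top-$s$ probability set $\{y : p_y \geq p_{\tau_s}\}$, which is exactly the optimality condition of Lemma~\ref{lem:bayes-topk-error}; by Definition~\ref{def:calibration} the loss is therefore top-$s$ calibrated.

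The main obstacle I anticipate is twofold. First, the non-convexity means the infimum over $g$ is not attained (the top $k-1$ scores must diverge), so the argument must be phrased in terms of limiting / near-minimizers rather than exact ones, and one must argue that the induced top-$s$ ranking stabilizes along any minimizing sequence. Second, establishing that the optimal truncated set $T$ is precisely the top $k-1$ probabilities — the step that actually forces the correct top-$s$ ordering — is where the convexity of $t \log t$ does the real work, and where ties at the $k$-th and $s$-th largest probabilities must be checked so that the subset inclusion of Lemma~\ref{lem:bayes-topk-error} (rather than equality) still holds.
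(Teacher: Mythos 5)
Your proof is correct, and after the shared opening move it takes a genuinely different route from the paper. Both arguments begin identically: split the expected loss according to whether the ground-truth class occupies one of the top $k-1$ positions of $g$, note that those $k-1$ scores appear in no other term, and send them to $+\infty$ so that only the softmax risk restricted to $S=\{\pi_k,\ldots,\pi_m\}$ remains. From there the paper does \emph{not} evaluate the optimum: it observes that the per-class losses satisfy $q_{\pi_1}=\cdots=q_{\pi_{k-1}}=0\leq q_{\pi_k}\leq\cdots\leq q_{\pi_m}$ and applies the rearrangement inequality to $\inner{p,q}$, which in a single stroke forces both the excluded set and the ordering within $S$ to agree with the ranking of $p_y(x)$; the explicit Bayes optimal classifier is then derived afterwards as in Proposition~\ref{prop:softmax-topk-calibrated}. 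You instead solve the inner problem exactly, obtaining $e^{g_j}\propto p_j$ on $S$ and the value $P_S\,H(\tilde p_S)=-\sum_{j\in S}\psi(p_j)+\psi(P_S)$ with $\psi(t)=t\log t$, and then determine the optimal excluded set by an exchange argument: your computation of $F^*(S')-F^*(S)$ is right, and since $b\in S'$ gives $p_b\leq P_{S'}$ and $a \in S$ gives $p_a \le P_S$, the increment of the convex $\psi$ over $[p_b,p_a]$ is indeed dominated by its increment over $[P_{S'},P_S]$, so excluding the larger probability is always at least as good. The paper's rearrangement argument is shorter and sidesteps the value computation entirely; yours is more explicit about \emph{why} the discarded classes must be the $k-1$ most probable ones and yields the optimal risk as a scaled conditional entropy, at the price of the extra convexity lemma. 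Both treatments share the same informal handling of the non-attained infimum (scores diverging to $\pm\infty$), which you correctly flag; the paper glosses over it in the same way, so this is not a gap relative to the published argument.
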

\fi
\ifproof
\begin{proof}
Given any $g = f(x) \in \Rb^m$, let $\pi$ be a permutation such that
$g_{\pi_1} \geq g_{\pi_2} \geq \ldots \geq g_{\pi_m}$.
Then, we have
\begin{align*}
\Jc_y = \begin{cases}
\{ \pi_{k+1}, \ldots, \pi_m \} &\text{if } y \in \{ \pi_1, \ldots, \pi_{k-1} \}, \\
\{ \pi_{k}, \ldots, \pi_m \} \setminus \{y\} &\text{if } y \in \{ \pi_{k}, \ldots, \pi_m \}.
\end{cases}
\end{align*}
Therefore, the expected loss at $x$ can be written as
\begin{align*}
\Exp_{Y|X}&[L(Y,g) \given X = x] = \tsum_{y \in \Yc} L(y,g) \, p_y(x) \\
&= \tsum_{r=1}^{k-1}
\log\big(1 + \tsum_{j=k+1}^m e^{g_{\pi_j} - g_{\pi_r}} \big) \, p_{\pi_r}(x) \\
&+
\tsum_{r=k}^{m}
\log\big(\tsum_{j=k}^m e^{g_{\pi_j} - g_{\pi_r}} \big) \, p_{\pi_r}(x) .
\end{align*}
Note that the sum inside the logarithm does not depend on $g_{\pi_r}$ for $r<k$.
Therefore, a Bayes optimal classifier will have $g_{\pi_r} = +\infty$ for all $r<k$
as then the first sum vanishes.

Let $p \bydef (p_y(x))_{y \in \Yc}$ and $q \bydef (L(y, g))_{y \in \Yc}$,
then
\begin{align*}
q_{\pi_1} = \ldots = q_{\pi_{k-1}} = 0 \leq q_{\pi_k} \leq \ldots \leq q_{\pi_m}
\end{align*}
and we can re-write the expected loss as
\begin{align*}
\Exp_{Y|X}&[L(Y,g) \given X = x] = \inner{p, q} 
= \inner{p_{\pi}, q_{\pi}} \geq \inner{p_{\tau}, q_{\pi}} ,
\end{align*}
where $p_{\tau_1} \geq p_{\tau_2} \geq \ldots \geq p_{\tau_m}$
and we used the rearrangement inequality.
Therefore, the expected loss is minimized when $\pi$ and $\tau$ coincide
(up to a permutation of the first $k-1$ elements),
which already establishes top-$s$ calibration for all $s \geq k$.

We can also derive a Bayes optimal classifier following the proof of
Proposition~\ref{prop:softmax-topk-calibrated}.
We have
\begin{align*}
\Exp_{Y|X}&[L(Y,g) \given X = x] \\
&=\tsum_{r=k}^{m}
\log\big(\tsum_{j=k}^m e^{g_{\tau_j} - g_{\tau_r}} \big) \, p_{\tau_r}(x) \\
&=\tsum_{r=k}^{m} \Big(
\log\big(\tsum_{j=k}^m e^{g_{\tau_j}} \big) - g_{\tau_r} \Big) \, p_{\tau_r}(x) .
\end{align*}
A critical point is found by setting partial derivatives to zero
for all $y \in \{\tau_k, \ldots, \tau_m\}$, which leads to
\begin{align*}
\frac{e^{g_y}}{\tsum_{j=k}^m e^{g_{\tau_j}}}
\tsum_{r=k}^{m} p_{\tau_r}(x) = p_y(x) .
\end{align*}
We let $g_y = -\infty$ if $p_y(x) = 0$, and obtain finally
\begin{align*}
g^*_{\tau_j} =
\begin{cases}
+\infty &\text{if } j < k, \\
\log\big(\alpha p_{\tau_j}(x)\big)
&\text{if } j \geq k \text{ and } p_{\tau_j}(x) > 0, \\
-\infty &\text{if } j \geq k \text{ and } p_{\tau_j}(x) = 0,
\end{cases}
\end{align*}
as a Bayes optimal classifier for any $\alpha > 0$.

Note that $g^*$ preserves the ranking of
$p_y(x)$ for all $y$ in $\{\tau_k, \ldots, \tau_m\}$,
hence, it is top-$s$ calibrated for all $s \geq k$.
\end{proof}
\fi

Top-$k$ calibration of the remaining top-$k$ losses is an open problem,
which is complicated by the absence of a closed-form expression for most of them.

\section{Optimization Framework}%
\label{sec:optimization}
This section is mainly devoted to efficient optimization
of the multiclass and multilabel methods from \S~\ref{sec:loss-functions}
within the stochastic dual coordinate ascent (SDCA)
framework of Shalev-Shwartz and Zhang \cite{shalev2013stochastic}.
The core reason for efficiency of the optimization scheme
is the ability to formulate variable updates in terms of
projections onto the effective domain of the conjugate loss,
which, in turn, can be solved in time $O(m \log m)$ or faster.
These projections fall into a broad area of nonlinear resource allocation
\cite{patriksson2015algorithms},
where we already have a large selection of specialized algorithms.
For example, we use an algorithm of Kiwiel \cite{kiwiel2008variable}
for \reftext{eq:multi-hinge} and \reftext{eq:topk-hinge-beta},
and contribute analogous algorithms for the remaining losses.
In particular, we propose an entropic projection algorithm
based on the Lambert $W$ function for the \reftext{eq:softmax} loss,
and a variable fixing algorithm
for projecting onto the bipartite simplex \eqref{eq:bipartite-simplex}
for the \reftext{eq:svm-ml}.
We also discuss how the proposed loss functions that do not have
a closed-form expression can be evaluated efficiently,
and perform a runtime comparison against FISTA \cite{beck2009fast}
using the SPAMS optimization toolbox \cite{mairal2010network}.

In \S~\ref{sec:optimization-background},
we state the primal and Fenchel dual optimization problems,
and introduce the Lambert $W$ function.
In \S~\ref{sec:optimization-multiclass},
we consider SDCA update steps and loss computation for multiclass methods,
as well as present our runtime evaluation experiments.
In \S~\ref{sec:optimization-multilabel},
we cover multilabel optimization and present our algorithm for
the Euclidean projection onto the bipartite simplex.

\subsection{Technical Background}\label{sec:optimization-background}

We briefly recall the main facts about the SDCA framework
\cite{shalev2013stochastic},
Fenchel duality \cite{borwein2000convex},
and the Lambert $W$ function \cite{corless1996lambertw}.

\textbf{The primal and dual problems.}
Let $X \in \Rb^{d \times n}$ be the matrix of training examples $x_i \in \Rb^d$,
$K = \tra{X}\!X$ the corresponding Gram matrix,
$W \in \Rb^{d \times m}$ the matrix of primal variables,
$A \in \Rb^{m \times n}$ the matrix of dual variables,
and
$\lambda > 0$ the regularization parameter.
The primal and Fenchel dual \cite{borwein2000convex}
objective functions are given as
\begin{equation}\label{eq:primal-dual}
\begin{aligned}
P(W) &=
+\frac{1}{n} \sum_{i=1}^n
L \left( y_i, \tra{W}x_i \right)
+ \frac{\lambda}{2} \tr\left(\tra{W} W \right) ,
\\
D(A) &= 
-\frac{1}{n} \sum_{i=1}^n
L^* \left( y_i, - \lambda n a_i \right)
- \frac{\lambda}{2} \tr\left( A K \tra{A} \right) ,
\end{aligned}
\end{equation}
where $L^*$ is the convex conjugate of $L$
and $y_i$ is interpreted as a set $Y_i$ if $L$ is a multilabel loss.

SDCA proceeds by sampling a dual variable $a_i \in \Rb^m$,
which corresponds to a training example $x_i \in \Rb^d$,
and modifying it to achieve maximal increase in the dual objective $D(A)$
while keeping other dual variables fixed.
Several sampling strategies can be used, \eg \cite{qu2015quartz},
but we use a simple scheme where the set of indexes is randomly
shuffled before every epoch and then all $a_i$'s
are updated sequentially.
The algorithm terminates when the relative duality gap
$(P(W) - D(A)) / P(W)$ falls below a pre-defined $\varepsilon > 0$,
or the computational budget is exhausted,
in which case we still have an estimate of suboptimality
via the duality gap.

Since the algorithm operates entirely on the dual variables and
the prediction scores $f(x_i)$, it is directly applicable to training
both linear $f(x_i) = \tra{W} x_i$ as well as nonlinear $f(x_i) = A K_i$
classifiers ($K_i$ being the $i$-th column of the Gram matrix $K$).
When $d \ll n$, which is often the case in our experiments,
and we are training a linear classifier, then it is less expensive
to maintain the primal variables $W = X \tra{A}$ \cite{lapin2015topk}
and compute the dot products $\tra{W} x_i$ in $\Rb^d$.
In that case, whenever $a_i$ is updated,
we perform a rank-$1$ update of $W$.

It turns out that every update step $\max_{a_i} D(A)$
is equivalent to the proximal operator\footnote{
The \textbf{proximal operator}, or the \textbf{proximal map},
of a function $f$ is defined as
$\prox_{f}(v) =
\argmin\limits_x \big( f(x) + \tfrac{1}{2} \norms{x - v} \big)$.
}
of a certain function, which can be seen as a projection
onto the effective domain of $L^*$.

\textbf{Lambert $W$ function.}
The Lambert $W$ function is defined as the inverse of the mapping
$w \mapsto w e^w$.
It is widely used in many fields of computer science
\cite{corless1996lambertw, Fukushima201377, Veberic20122622},
and can often be recognized in nonlinear equations involving
the $\exp$ and the $\log$ functions.
Taking logarithms on both sides of the defining equation $z = W e^W$,
we get $\log z = W(z) + \log W(z)$.
Therefore, if we are given an equation of the form
$x + \log x = t$ for some $t \in \Rb$,
we can directly ``solve'' it in closed-form as
$x = W(e^t)$.
The crux of the problem is that the function
$V(t) \bydef W(e^t)$ is transcendental \cite{Fukushima201377}
just like the logarithm and the exponent.
There exist highly optimized implementations for the latter 
and we argue that the same can be done for the Lambert $W$ function.
In fact, there is already some work on this topic
\cite{Fukushima201377, Veberic20122622}, which we also employ in our
implementation.

\begin{figure}[t]\small\centering%
\begin{subfigure}[t]{0.49\columnwidth}\centering
\includegraphics[width=\columnwidth]{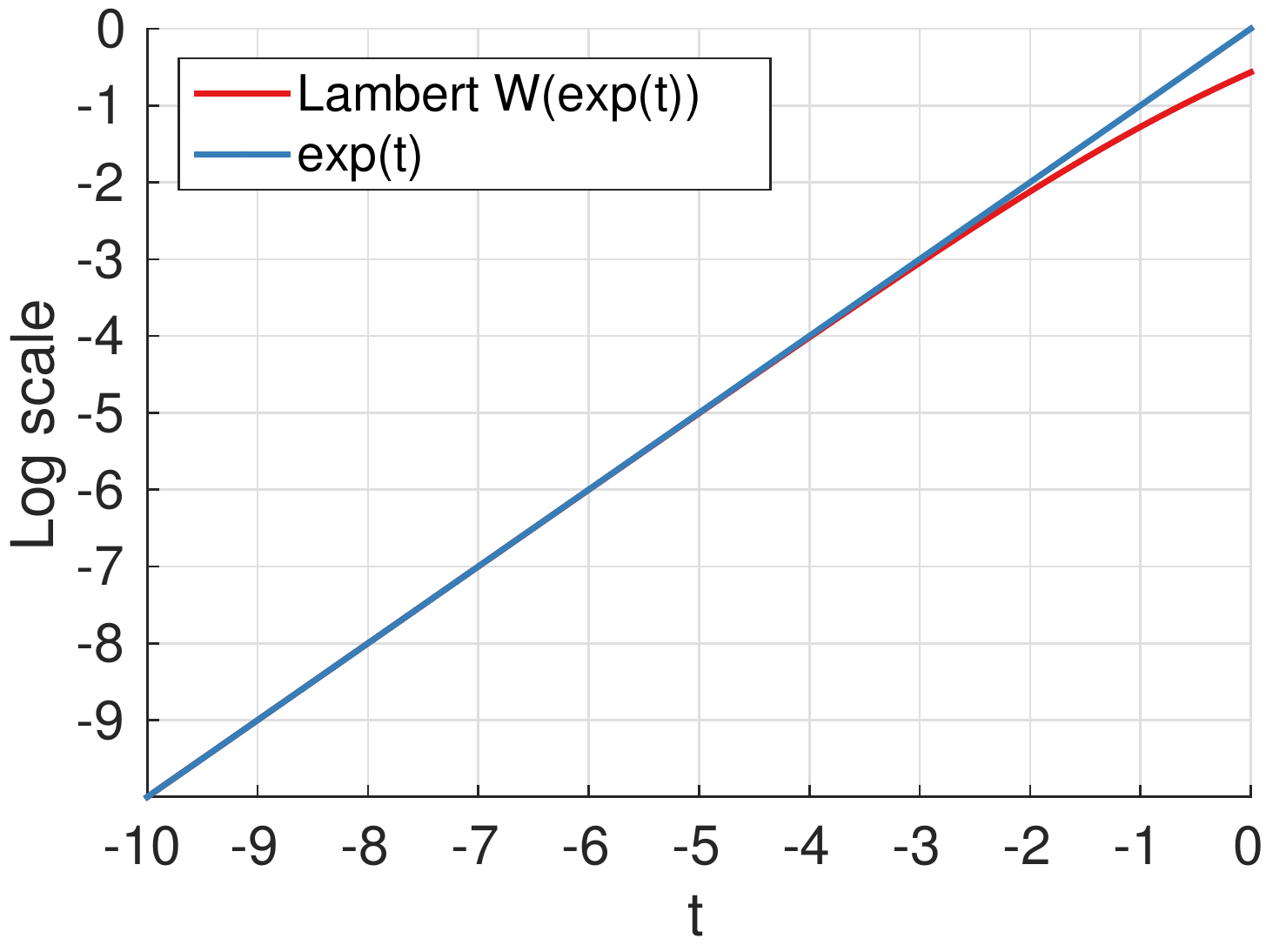}%
\caption{$V(t) \approx e^t$ for $t \ll 0$.}\label{fig:lambert:a}
\end{subfigure}
\begin{subfigure}[t]{0.49\columnwidth}\centering
\includegraphics[width=\columnwidth]{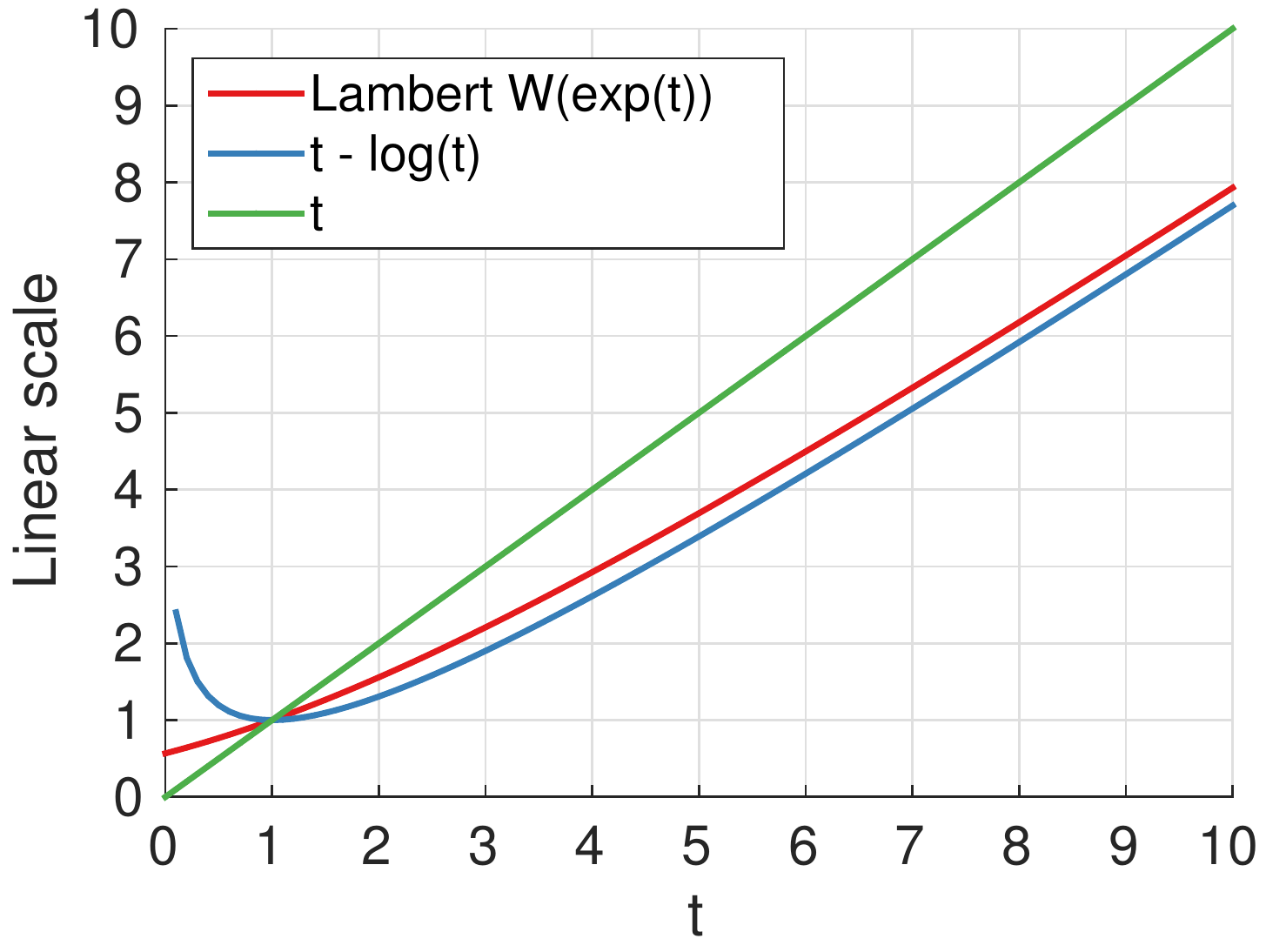}%
\caption{$V(t) \approx t - \log t$ for $t \gg 0$.}\label{fig:lambert:b}
\end{subfigure}
\caption{%
Behavior of the Lambert $W$ function of the exponent ($V(t) = W(e^t)$).
{\bfseries (a)} Log scale plot with $t \in (-10,0)$.
{\bfseries (b)} Linear scale plot with $t \in (0,10)$.
}\label{fig:lambert}%
\end{figure}

To develop intuition about the function $V(t) = W(e^t)$,
which is the Lambert $W$ function of the exponent,
we look at how it behaves for different values of $t$.
An illustration is provided in Figure~\ref{fig:lambert}.
One can see directly from the equation
$x + \log x = t$ that the behavior of $x=V(t)$
changes dramatically depending on whether
$t$ is a large positive or a large negative number.
In the first case, the linear part dominates the logarithm
and the function is approximately linear;
a better approximation is $x(t) \approx t - \log t$,
when $t \gg 1$.
In the second case, the function behaves like an exponent $e^t$.
To see this, we write $x = e^t e^{-x}$ and note that
$e^{-x} \approx 1$ when $t \ll 0$,
therefore, $x(t) \approx e^t$, if $t \ll 0$.

To compute $V(t)$, we use these approximations as initial points
in a $5$-th order Householder method \cite{householder1970numerical}.
A \emph{single} iteration of that method
is already sufficient to get full float precision
and at most two iterations are needed for double,
which makes the function $V(t)$ an attractive
tool for computing entropic projections.

\subsection{Multiclass Methods}\label{sec:optimization-multiclass}

In this section, we cover optimization of the multiclass methods
from \S~\ref{sec:multiclass-methods} within the SDCA framework.
We discuss how to efficiently compute the smoothed losses
that were introduced via conjugation and do not have a closed-form expression.
Finally, we evaluate SDCA convergence in terms of runtime
and show that smoothing with Moreau-Yosida regularization
leads to significant improvements in speed.

As mentioned in \S~\ref{sec:optimization-background} above,
the core of the SDCA algorithm is the update step
$a_i \leftarrow \argmax_{a_i} D(A)$.
Even the primal objective $P(W)$ is only computed
for the duality gap and could conceivably be omitted
if the certificate of optimality is not required.
Next, we focus on how the updates are computed
for the different multiclass methods.

\textbf{SDCA update:
\reftext{eq:ova-hinge}, \reftext{eq:ova-lr}.}
SDCA updates for the binary hinge and logistic losses
are covered in \cite{hsieh2008dual} and \cite{shalev2014accelerated}.
We highlight that the \reftext{eq:ova-hinge} update has a closed-form
expression that leads to scalable training of linear SVMs \cite{hsieh2008dual},
and is implemented in LibLinear \cite{REF08a}.

\textbf{SDCA update:
\reftext{eq:multi-hinge}, \reftext{eq:softmax}, \SvmMultig{\gamma}.}
Although \reftext{eq:multi-hinge} is also covered in
\cite{shalev2014accelerated}, they use a different algorithm
based on sorting, while we do a case distinction \cite{lapin2015topk}.
First, we solve an easier continuous quadratic knapsack problem
using a variable fixing algorithm of Kiwiel \cite{kiwiel2008variable}
which does not require sorting.
This corresponds to enforcing the equality constraint in the simplex
and generally already gives the optimal solution.
The computation is also fast: we observe linear time complexity in practice,
as shown in Figure~\ref{fig:time:a}.
For the remaining hard cases, however, we fall back to sorting
and use a scheme similar to \cite{shalev2014accelerated}.
In our experience, performing the case distinction seemed to
offer significant time savings.

For the \reftext{eq:multi-hinge} and \SvmMultig{\gamma}, we note that they
are special cases of \reftext{eq:smooth-topk-alpha}
and \SvmTopKbg{k}{\gamma} with $k=1$,
as well as \reftext{eq:softmax} is a special case of \reftext{eq:topk-entropy}.

\textbf{SDCA update:
\SvmTopKab{k}, \SvmTopKabg{k}{\gamma}.}
Here, we consider the update step for the smooth
\reftext{eq:smooth-topk-alpha} loss.
The nonsmooth version is directly recovered by setting $\gamma = 0$,
while the update for \SvmTopKbg{k}{\gamma}
is derived similarly using the set $\Skb$ in \eqref{eq:smooth-hinge-update}
instead of $\Ska$.

We show that performing the update step is equivalent to
projecting a certain vector $b$,
computed from the prediction scores $f(x_i) = \tra{W}x_i$,
onto the effective domain of $L^*$,
the top-$k$ simplex,
with an added regularization $\rho \inner{\ones, x}^2$,
which biases the solution to be orthogonal to $\ones$.

\ifprop
\begin{proposition}\label{prop:smooth-topk-hinge-update}
Let $L$ and $L^*$ in \eqref{eq:primal-dual} be respectively
the \SvmTopKag{k}{\gamma} loss and its conjugate as in
Proposition~\ref{prop:topk-smooth}.
The dual variables $a_i$ corresponding to $(x_i,y_i)$ are updated as:
\begin{align}\label{eq:smooth-hinge-update}
\begin{cases}
\wo{a_i}{y_i} \hspace*{-.75em}
&= - \argmin_{x \in \Ska(1/ (\lambda n))}
\big\{ \norms{x - b} + \rho \inner{\ones, x}^2 \big\} , \\
a_{y_i,i} \hspace*{-.75em}
&= - \tsum_{j \neq y_i} a_{j,i} ,
\end{cases}\raisetag{\baselineskip}
\end{align}
where
$b = \frac{1}{\inner{x_i,x_i} + \gamma \lambda n}
\left( \wo{q}{y_i} + (1 - q_{y_i})\ones \right)$,\\
$q = \tra{W} x_i - \inner{x_i,x_i} a_i$,
and
$\rho = \frac{\inner{x_i,x_i}}{\inner{x_i,x_i} + \gamma \lambda n}$.
\end{proposition}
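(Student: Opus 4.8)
The plan is to carry out the single-coordinate maximization $\max_{a_i} D(A)$ explicitly and show that, after completing the square, it reduces to the stated regularized projection. First I would isolate the part of the dual objective \eqref{eq:primal-dual} that depends on $a_i$, keeping all $a_j$ with $j \neq i$ fixed. Writing $K = \tra{X}X$, the quadratic term $\tr(A K \tra{A})$ contributes $\inner{x_i,x_i}\norms{a_i} + 2\inner{a_i,\,\tsum_{j\neq i}K_{ij}a_j}$; since the maintained primal matrix satisfies $W = X\tra{A}$ we have $\tra{W}x_i = \tsum_j K_{ij}a_j$, so the cross term equals $2\inner{a_i,q}$ with $q = \tra{W}x_i - \inner{x_i,x_i}a_i$ exactly as in the statement. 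Hence, up to an additive constant, the update maximizes
\begin{align*}
-\tfrac{1}{n}L_{\gamma}^*(y_i,-\lambda n a_i)
-\tfrac{\lambda}{2}\inner{x_i,x_i}\norms{a_i}
-\lambda\inner{a_i,q}.
\end{align*}

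Next I would substitute the closed form of $L_{\gamma}^*$ from Proposition~\ref{prop:topk-smooth}. Its effective domain forces $\inner{\ones,-\lambda n a_i}=0$, i.e.\ $\inner{\ones,a_i}=0$, and $-\lambda n\,\wo{a_i}{y_i}\in\Ska(1)$; by positive homogeneity of the top-$k$ simplex the latter is equivalent to $x\bydef-\wo{a_i}{y_i}\in\Ska(1/(\lambda n))$, which is where the scaled radius $1/(\lambda n)$ enters. The equality constraint then lets me eliminate the $y_i$-th coordinate via $a_{y_i,i}=-\tsum_{j\neq y_i}a_{j,i}=\inner{\ones,x}$, reducing everything to the $(m-1)$-dimensional variable $x$. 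I would expand $\norms{a_i}=\norms{x}+\inner{\ones,x}^2$ and $\inner{a_i,q}=-\inner{x,\wo{q}{y_i}}+q_{y_i}\inner{\ones,x}$, and use the identity $\wo{c}{y_i}=\ones$ to simplify the linear part coming from the conjugate.

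Collecting terms and multiplying by $-1/\lambda$ to switch to minimization, the objective takes the form $\tfrac{\gamma\lambda n+\inner{x_i,x_i}}{2}\norms{x} + \tfrac{\inner{x_i,x_i}}{2}\inner{\ones,x}^2 - \inner{x,\wo{q}{y_i}+(1-q_{y_i})\ones}$. Setting $\kappa\bydef(\gamma\lambda n+\inner{x_i,x_i})/2$, dividing by $\kappa$, and completing the square in the quadratic-plus-linear part identifies $b=\tfrac{1}{2\kappa}\big(\wo{q}{y_i}+(1-q_{y_i})\ones\big)=\tfrac{1}{\inner{x_i,x_i}+\gamma\lambda n}\big(\wo{q}{y_i}+(1-q_{y_i})\ones\big)$ and $\rho=\inner{x_i,x_i}/(2\kappa)=\inner{x_i,x_i}/(\inner{x_i,x_i}+\gamma\lambda n)$, precisely the expressions in the statement. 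Thus the minimizer $x$ is exactly $\argmin$ over $\Ska(1/(\lambda n))$ of $\norms{x-b}+\rho\inner{\ones,x}^2$, and setting $\wo{a_i}{y_i}=-x$ together with $a_{y_i,i}=-\tsum_{j\neq y_i}a_{j,i}$ yields \eqref{eq:smooth-hinge-update}. The nonsmooth case follows by taking $\gamma=0$, and the $\beta$ variant by repeating the computation with $\Skb$ in place of $\Ska$. I expect the main obstacle to be purely bookkeeping: tracking the $\lambda n$ scaling through the conjugate's effective domain and correctly producing the penalty $\rho\inner{\ones,x}^2$, which originates entirely from the eliminated coordinate $a_{y_i,i}^2=\inner{\ones,x}^2$ inside $\norms{a_i}$.
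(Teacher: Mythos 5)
Your proposal is correct and follows essentially the same route as the paper's proof: isolate the $a_i$-dependent part of $D(A)$, substitute the conjugate from Proposition~\ref{prop:topk-smooth} (whose effective domain yields $\inner{\ones,a_i}=0$ and, by homogeneity, $-\wo{a_i}{y_i}\in\Ska(1/(\lambda n))$), eliminate the $y_i$-th coordinate so that $\norms{a_i}=\norms{x}+\inner{\ones,x}^2$ produces the $\rho\inner{\ones,x}^2$ penalty, and collect terms and complete the square to identify $b$ and $\rho$. The bookkeeping of the $\lambda n$ scaling and the signs all checks out.
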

\fi
\ifproof
\begin{proof}
We follow the proof of \cite[Proposition~4]{lapin2015topk}.
Choose an $i \in \{1, \ldots, n\}$ and update $a_i$ to maximize
$$
-\tfrac{1}{n} L^* \left(y_i, - \lambda n a_i \right)
- \tfrac{\lambda}{2} \tr\left( A K \tra{A} \right) .
$$
For the nonsmooth top-$k$ hinge loss,
it was shown \cite{lapin2015topk} that
$$
L^* \left(y_i, - \lambda n a_i \right) =
\inner{c, \lambda n ( a_i - a_{y_i,i} e_{y_i}) }
$$
if $- \lambda n ( a_i - a_{y_i,i} e_{y_i}) \in \Ska$ and
$+ \infty$ otherwise.
Now, for the smoothed loss,
we add regularization and obtain
\begin{align*}
-\tfrac{1}{n} \left(
\tfrac{\gamma}{2} \norms{- \lambda n ( a_i - a_{y_i,i} e_{y_i})}
+ \inner{c, \lambda n ( a_i - a_{y_i,i} e_{y_i}) } \right)
\end{align*}
with $- \lambda n ( a_i - a_{y_i,i} e_{y_i}) \in \Ska$.
Using $c = \ones - e_{y_i}$ and $\inner{\ones, a_i} = 0$,
one can simplify it to
\begin{align*}
- \frac{\gamma n \lambda^2}{2} \normsb{\wo{a_i}{y_i}}
+ \lambda a_{y_i,i} ,
\end{align*}
and the feasibility constraint can be re-written as
\begin{align*}
-\wo{a_i}{y_i} &\in \Ska(\tfrac{1}{\lambda n}) , &
a_{y_i,i} &= \innern{\ones, -\wo{a_i}{y_i}} .
\end{align*}
For the regularization term $\tr\left( A K \tra{A} \right)$, we have
\begin{align*}
\tr\left( A K \tra{A} \right) =
K_{ii} \inner{a_i,a_i} + 2 \sum_{j \neq i} K_{ij} \inner{a_i,a_j}
+ {\rm const} .
\end{align*}
We let
$q = \sum_{j \neq i} K_{ij} a_j = A K_i - K_{ii} a_i$
and $x = -\wo{a_i}{y_i}$:
\begin{align*}
\inner{a_i,a_i} &= \inner{\ones,x}^2 + \inner{x,x}, \\
\inner{q,a_i} &= q_{y_i} \inner{\ones,x} - \innern{\wo{q}{y_i},x}.
\end{align*}
Now, we plug everything together and multiply with $-2/\lambda$.
\begin{align*}
\min_{x \in \Ska(\frac{1}{\lambda n})}
& \gamma \lambda n \norms{x}
- 2 \inner{\ones, x}
+ 2 \big(
q_{y_i} \inner{\ones,x} - \innern{\wo{q}{y_i},x} \big) \\
&+ K_{ii} \big(\inner{\ones,x}^2 + \inner{x,x} \big) .
\end{align*}
Collecting the corresponding terms finishes the proof.
\end{proof}
\fi

We solve \eqref{eq:smooth-hinge-update}
using the algorithm for computing
a (biased) projection onto the top-$k$ simplex,
which we introduced in \cite{lapin2015topk},
with a minor modification of $b$ and $\rho$.
Similarly, the update step for the \SvmTopKbg{k}{\gamma} loss
is solved using a (biased) continuous quadratic knapsack problem,
which we discuss in the supplement of \cite{lapin2015topk}.

Smooth top-$k$ hinge losses converge significantly faster 
than their nonsmooth variants as we show in the scaling experiments below. 
This can be explained by the theoretical results of  \cite{shalev2014accelerated} 
on the convergence rate of SDCA.
They also had similar observations for the smoothed binary hinge loss.

\textbf{SDCA update:
\reftext{eq:topk-entropy}.}
Finally, we derive an optimization problem
for the proposed top-$k$ entropy loss.

\ifprop
\begin{proposition}\label{prop:topk-entropy-update}
Let $L$ in \eqref{eq:primal-dual} be the \reftext{eq:topk-entropy} loss
and $L^*$ be its convex conjugate as in \eqref{eq:softmax-conjugate}
with $\Sm$ replaced by $\Ska$.
The dual variables $a_i$ corresponding to $(x_i,y_i)$ are updated as:
\begin{align}\label{eq:topk-entropy-update}
\begin{cases}
\wo{a_i}{y_i} \hspace*{-.75em}
&= - \tfrac{1}{\lambda n} \argmin\limits_{x \in \Ska} \big\{
\tfrac{\alpha}{2} ( \inner{x,x} + s^2 )
- \inner{b, x} + \inner{x, \log x} \\[-.75em]
& \qquad\qquad\qquad\quad\;
+ (1 - s) \log(1 - s) \given s = \inner{\ones, x} \big\} , \\[-.5em]
a_{y_i,i} \hspace*{-.75em}
&= - \tsum_{j \neq y_i} a_{j,i} ,
\end{cases}\raisetag{.5\baselineskip}
\end{align}
where
$\alpha = \frac{\inner{x_i, x_i}}{\lambda n}$,
$b = \wo{q}{y_i} - q_{y_i}\ones$,
$q = \tra{W} x_i - \inner{x_i,x_i} a_i$.
\end{proposition}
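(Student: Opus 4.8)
The plan is to mirror the derivation of Proposition~\ref{prop:smooth-topk-hinge-update}: the SDCA step fixes all dual variables except $a_i$ and maximizes the two $a_i$-dependent terms of the dual objective in \eqref{eq:primal-dual}, namely $-\tfrac{1}{n} L^*(y_i, -\lambda n a_i) - \tfrac{\lambda}{2}\tr(A K \tra{A})$. First I would substitute the conjugate from \eqref{eq:softmax-conjugate} (with $\Sm$ replaced by $\Ska$, as in the \reftext{eq:topk-entropy} loss) evaluated at $v = -\lambda n a_i$, then expand the Gram-matrix term, and finally collect everything into a single constrained minimization over the reduced vector $\wo{a_i}{y_i}$.

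The first key step is the change of variables. I would set $x \bydef -\lambda n \wo{a_i}{y_i}$, so that the feasibility condition $\wo{v}{y_i} \in \Ska$ becomes simply $x \in \Ska$. The remaining conjugate constraint $\inner{\ones, v} = 0$ forces $\inner{\ones, a_i} = 0$, i.e.\ $a_{y_i,i} = -\tsum_{j \neq y_i} a_{j,i}$, which in turn gives $-\lambda n\, a_{y_i,i} = -\inner{\ones, x} = -s$ with $s \bydef \inner{\ones, x}$. Substituting $v_j = x_j$ for $j \neq y_i$ and $v_{y_i} = -s$ into $L^*$, the term $\tsum_{j \neq y_i} v_j \log v_j$ becomes $\inner{x, \log x}$ while $(1 + v_{y_i})\log(1 + v_{y_i})$ becomes $(1-s)\log(1-s)$, recovering exactly the entropic objective appearing in \eqref{eq:topk-entropy-update}.

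Next I would expand the regularizer as $\tr(A K \tra{A}) = K_{ii}\inner{a_i, a_i} + 2\inner{a_i, q} + \mathrm{const}$, where $q = \tra{W}x_i - \inner{x_i,x_i}\,a_i = \tsum_{j \neq i} K_{ij} a_j$ collects the fixed contributions and $K_{ii} = \inner{x_i,x_i}$. Using $a_{y_i,i} = s/(\lambda n)$ and $\wo{a_i}{y_i} = -x/(\lambda n)$, a short computation gives $\inner{a_i,a_i} = (s^2 + \inner{x,x})/(\lambda n)^2$ and $\inner{a_i,q} = (s\,q_{y_i} - \innern{x, \wo{q}{y_i}})/(\lambda n)$. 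Multiplying the objective by $n$, negating to pass from maximization to minimization, and setting $\alpha = \inner{x_i,x_i}/(\lambda n)$ turns the quadratic part into $\tfrac{\alpha}{2}(\inner{x,x} + s^2)$ and the linear part into $-\inner{b,x}$ with $b = \wo{q}{y_i} - q_{y_i}\ones$, since $s\,q_{y_i} - \innern{x, \wo{q}{y_i}} = \innern{x, q_{y_i}\ones - \wo{q}{y_i}} = -\inner{b,x}$. Combining with the entropic terms yields precisely \eqref{eq:topk-entropy-update}.

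The derivation is largely bookkeeping, so the main obstacle is keeping the scaling factors $\lambda n$ consistent between the conjugate and the quadratic term, and correctly transforming $(1 + v_{y_i})\log(1 + v_{y_i})$ into $(1-s)\log(1-s)$ under the equality constraint. Unlike the hinge case of Proposition~\ref{prop:smooth-topk-hinge-update}, the quadratic $\tfrac{\alpha}{2}\inner{x,x}$ cannot be folded into a plain Euclidean projection because of the $\inner{x, \log x}$ term; I would note that the resulting objective is nonetheless strictly convex on $\Ska$ (a sum of a strictly convex entropy, a convex quadratic, and a linear term), so its minimizer is unique and the update is well defined. Its efficient evaluation via an entropic projection and the Lambert $W$ function is deferred to the optimization section.
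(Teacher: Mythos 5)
Your proposal is correct and follows essentially the same route as the paper: substitute $v=-\lambda n a_i$ into the conjugate \eqref{eq:softmax-conjugate} with domain $\Ska$, set $x=\wo{v}{y_i}$ and $s=-v_{y_i}=\inner{\ones,x}$, expand $\tr(AK\tra{A})$ into $K_{ii}(\inner{x,x}+s^2)/(\lambda n)^2-2\inner{b,x}/(\lambda n)$, and collect terms. All sign and scaling computations ($a_{y_i,i}=s/(\lambda n)$, $\inner{a_i,q}=(sq_{y_i}-\innern{x,\wo{q}{y_i}})/(\lambda n)$, $\alpha=\inner{x_i,x_i}/(\lambda n)$) check out against the paper's (terser) argument.
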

\fi
\ifproof
\begin{proof}
Let $v \bydef - \lambda n a_i$ and $y = y_i$.
Using Proposition~\ref{prop:softmax-conjugate},
\begin{align*}
L^*(v) =
\tsum_{j \neq y} v_j \log v_j + (1 + v_y) \log(1 + v_y) ,
\end{align*}
where $\inner{\ones, v} = 0$ and $\wo{v}{y} \in \Ska$.
Let $x \bydef \wo{v}{y}$ and $s \bydef - v_y$.
We have $s = \inner{\ones, x}$ and from $\tr\left( A K \tra{A} \right)$ we get
\begin{align*}
K_{ii} (\inner{x,x} + s^2) / (\lambda n)^2 
- 2 \innerb{\wo{q}{y} - q_{y} \ones, x} / (\lambda n) ,
\end{align*}
where
$q = \sum_{j \neq i} K_{ij} a_j = A K_i - K_{ii} a_i$.
Finally, we plug everything together as in
Proposition~\ref{prop:smooth-topk-hinge-update}.
\end{proof}
\fi

Problems \eqref{eq:smooth-hinge-update} and \eqref{eq:topk-entropy-update}
have similar structure, but the latter is considerably
more difficult to solve due to the presence of logarithms.
We propose to tackle this problem using the function $V(t)$
introduced in \S~\ref{sec:optimization-background} above.

Our algorithm is an instance of the variable fixing scheme
with the following steps:
(i) partition the variables into disjoint sets
and compute an auxiliary variable $t$ from the optimality conditions;
(ii) compute the values of the variables using $t$ and verify them
against a set of constraints (\eg an upper bound in the top-$k$ simplex);
(iii) if there are no violated constraints, we have computed the solution,
and otherwise examine the next partitioning.

\begin{figure*}[t]\small\centering%
\begin{subfigure}[t]{0.49\columnwidth}\centering
\includegraphics[width=\columnwidth]{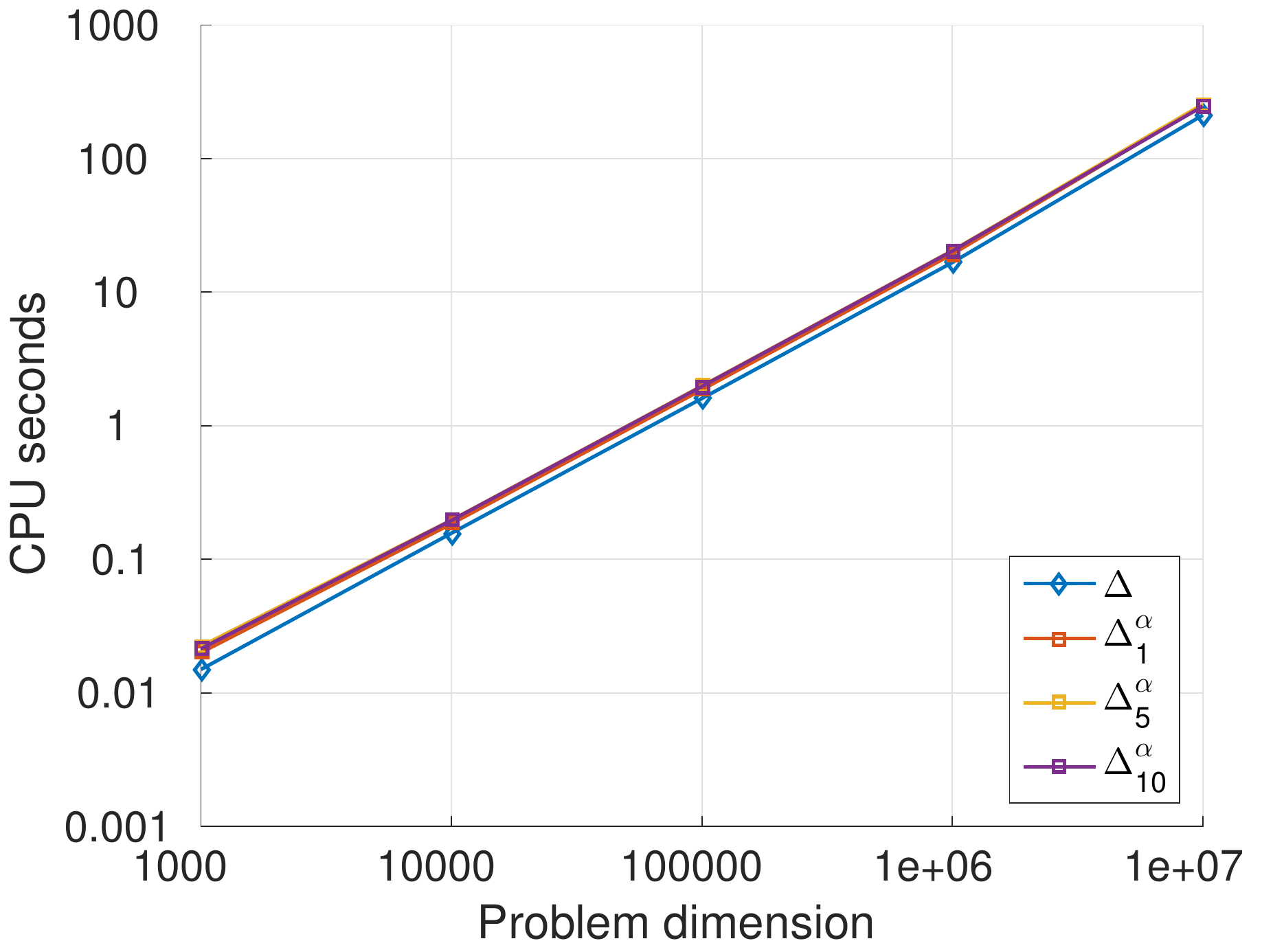}%
\caption{\centering Projection algorithms:\newline
$\Sm$ (Kiwiel \cite{kiwiel2008variable}),
$\Ska$ (ours \cite{lapin2015topk}).}\label{fig:time:a}
\end{subfigure}
\begin{subfigure}[t]{0.49\columnwidth}\centering
\includegraphics[width=\columnwidth]{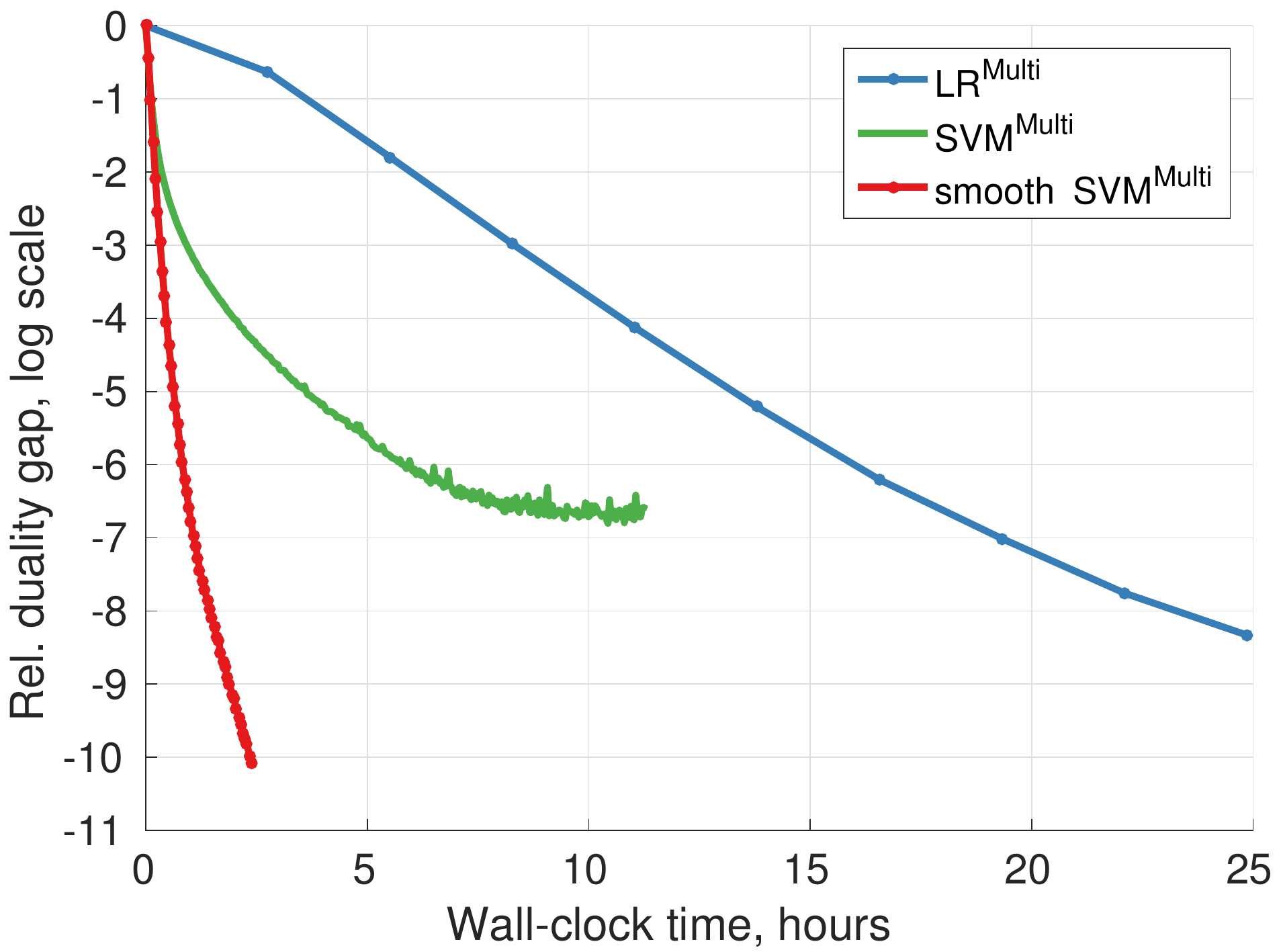}%
\caption{\centering Relative duality gap \V time
(our SDCA solvers).}\label{fig:time:b}
\end{subfigure}
\begin{subfigure}[t]{0.49\columnwidth}\centering
\includegraphics[width=\columnwidth]{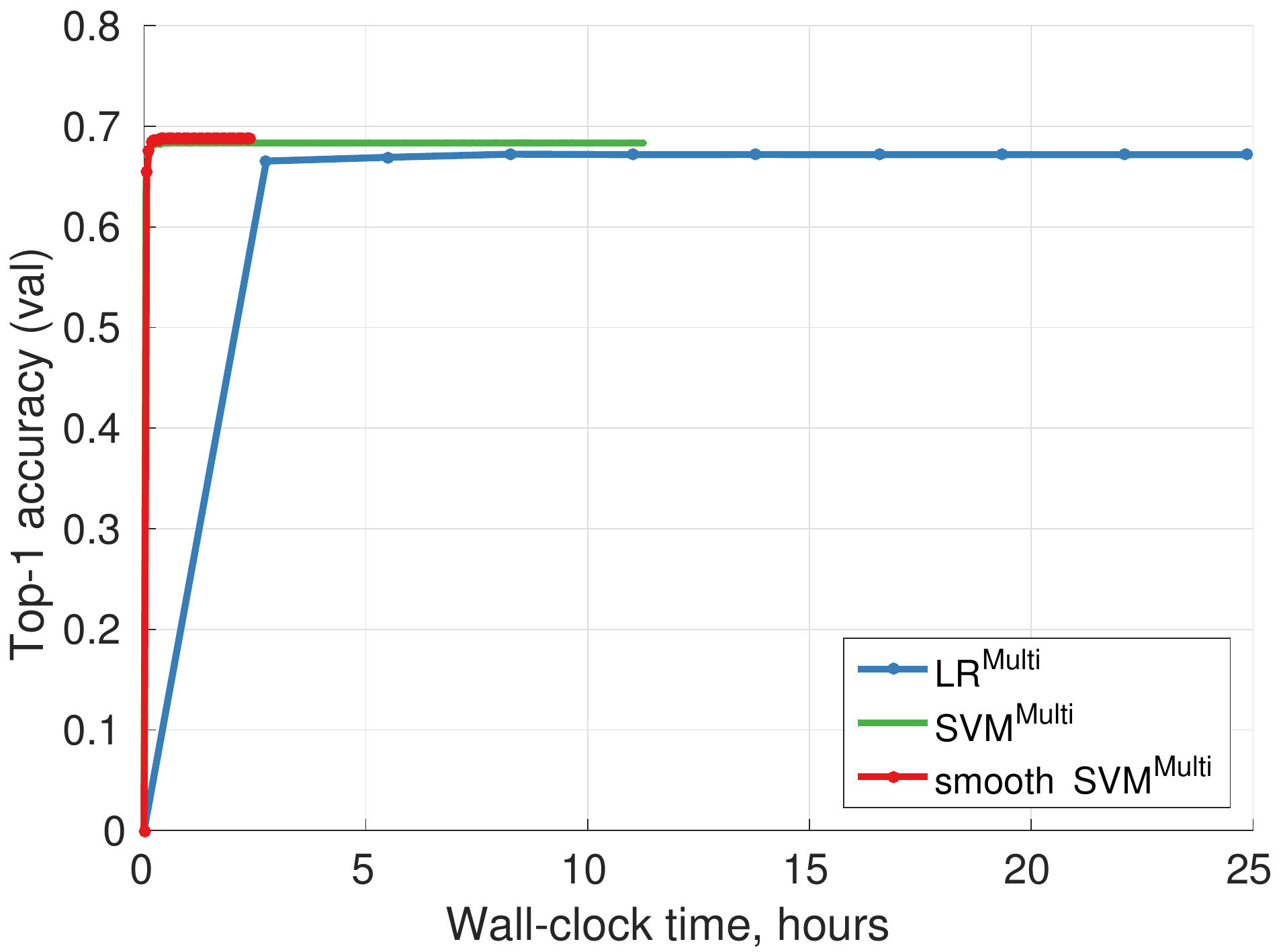}%
\caption{\centering Top-$1$ accuracy \V time\newline
(our SDCA solvers).}\label{fig:time:c}
\end{subfigure}
\begin{subfigure}[t]{0.49\columnwidth}\centering
\includegraphics[width=\columnwidth]{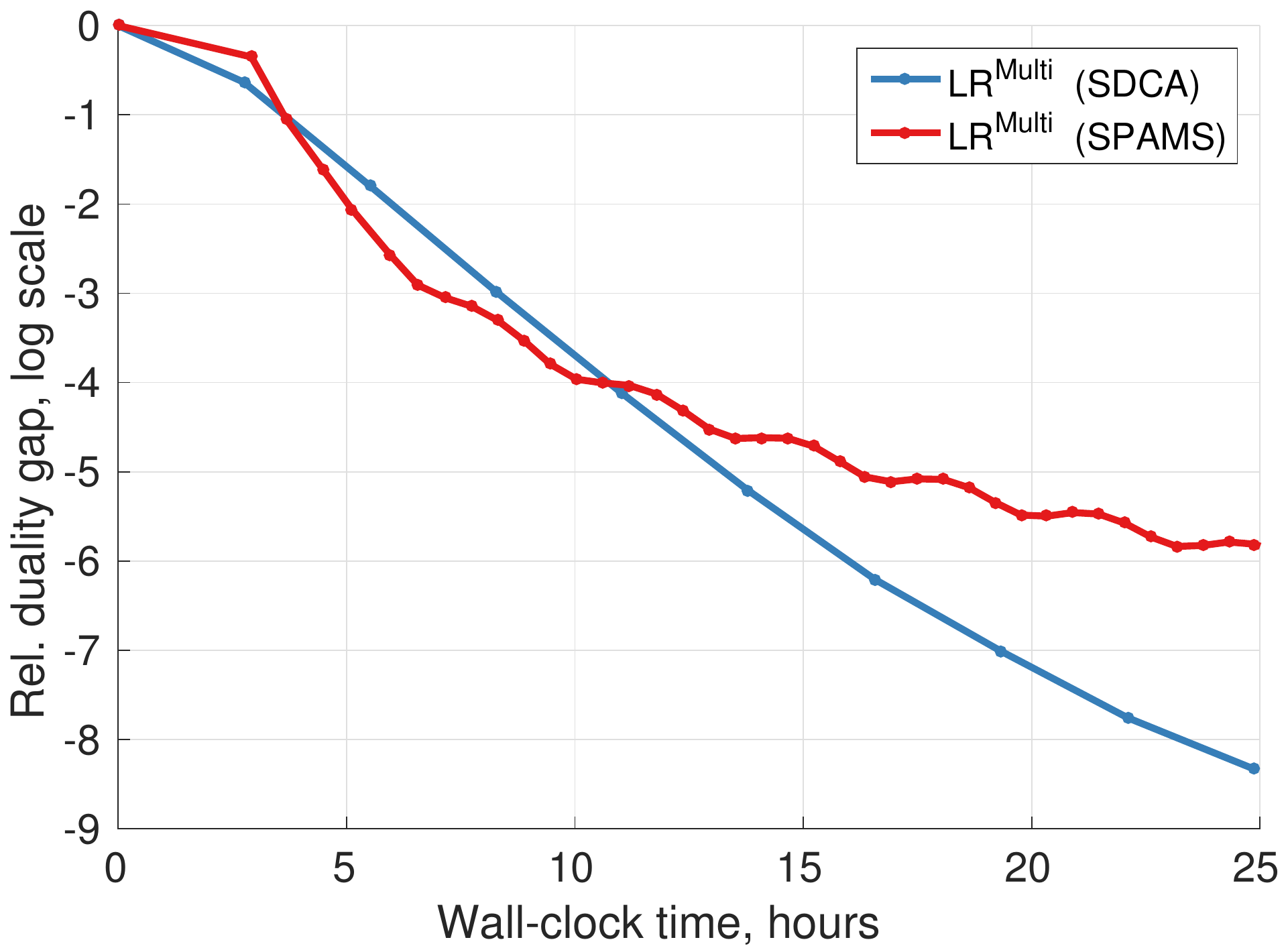}%
\caption{\centering Relative duality gap \V time
(ours and SPAMS \cite{mairal2010network}).}\label{fig:time:d}
\end{subfigure}
\caption{%
\textbf{(a)}
Scaling of the projection algorithms used in SDCA optimization.
\textbf{(b-c)}
SDCA convergence of the
\LrMulti, \SvmMulti, and smooth \SvmMultig{\gamma} methods
on the ImageNet 2012 dataset.
\textbf{(d)}
SDCA \V FISTA as implemented in the SPAMS toolbox.
}\label{fig:time}
\end{figure*}

As we discuss in \cite{lapin2015topk}, there can be at most $k$ partitionings
that we need to consider for $\Ska$ and $\Skb$.
To see this, let $x \in \Ska$ be a feasible point for
\eqref{eq:topk-entropy-update}, and define the subsets
\begin{align}\label{eq:sets-U-M}
U &\bydef \{ j \given x_j = \tfrac{s}{k} \} , &
M &\bydef \{ j \given x_j < \tfrac{s}{k} \} .
\end{align}
Clearly, $\abs{U} \leq k$ must hold,
and $\abs{U} = k$ we consider as a degenerate fall back case.
Therefore, we are primarily interested in the $k$ partitions when
$0 \leq \abs{U} < k$.
Due to monotonicity in the optimality conditions,
one can show that $U$ always corresponds to the largest elements $b_j$
of the vector being projected.
Hence, we start with an empty $U$ and
add indexes of the largest $b_j$'s until the solution is found.

Next, we show how to actually compute $t$ and $x$,
given a candidate partition into $U$ and $M$.

\ifprop
\begin{proposition}\label{prop:solve-topk-entropy-update}
Let $x^*$ be the solution of \eqref{eq:topk-entropy-update}
and let the sets $U$ and $M$ be defined for the given $x^*$
as in \eqref{eq:sets-U-M}, then
\begin{align*}
x_j^* &= \min \big\{ \tfrac{1}{\alpha} V(b_j - t) , \tfrac{s}{k} \big\} ,
\quad \forall\, j,
\end{align*}
and the variables $s$, $t$ satisfy the nonlinear system
\begin{align}\label{eq:solve-topk-entropy-update-system}
\begin{cases}
\alpha (1 - \rho) s - \tsum_{j \in M} V(b_j - t) = 0 , \\
(1-\rho)t + V^{-1}\big(\alpha (1-s) \big)
- \rho V^{-1}(\tfrac{\alpha s}{k}) + A - \alpha = 0 ,
\end{cases}\raisetag{1.5\baselineskip}
\end{align}
where $\rho \bydef \tfrac{\abs{U}}{k}$,
$A \bydef \tfrac{1}{k} \tsum_{j \in U} b_j$,
$V^{-1}$ is the inverse of $V$.

Moreover, if $U$ is empty, then
$x_j^* = \tfrac{1}{\alpha} V(b_j - t)$ for all $j$,
and $t$ can be found from
\begin{align}\label{eq:solve-topk-entropy-update-eq}
V(\alpha - t) + \tsum_j V(b_j - t) = \alpha .
\end{align}
\end{proposition}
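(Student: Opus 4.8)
The plan is to solve \eqref{eq:topk-entropy-update} via its Lagrangian, exploiting the fact that the objective is strictly convex (a positive-definite quadratic plus the negative entropy $\inner{x,\log x}+(1-s)\log(1-s)$), so the minimizer is unique and the KKT conditions are both necessary and sufficient. First I would introduce a multiplier $t$ for the defining equality $s=\inner{\ones,x}$, a multiplier for $s\le 1$, and multipliers $\mu_j\ge 0$, $\nu_j\ge 0$ for the box constraints $x_j\ge 0$ and $x_j\le\tfrac{s}{k}$ that cut out $\Ska$. Setting $\partial_{x_j}\Lc=0$ gives $\alpha x_j+\log x_j+1+t-\mu_j+\nu_j=b_j$, and $\partial_s\Lc=0$ gives an analogous relation involving $\log(1-s)$ and $\tfrac{1}{k}\tsum_j\nu_j$.

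Because $\log x_j\to-\infty$ as $x_j\to 0$ and $\log(1-s)\to-\infty$ as $s\to 1$, no stationary point can have $x_j=0$ or $s=1$; hence by complementary slackness $\mu_j=0$ for all $j$ and the multiplier of $s\le 1$ vanishes. Following \eqref{eq:sets-U-M}, I split the indices into $U=\{j:x_j=\tfrac{s}{k}\}$ and $M=\{j:x_j<\tfrac{s}{k}\}$, so that $\nu_j=0$ on $M$. On $M$ the stationarity condition reads $\alpha x_j+\log x_j=b_j-1-t$; writing $z_j=\alpha x_j$ turns this into $z_j+\log z_j=b_j-1+\log\alpha-t$, which is exactly the equation inverted by $V(t)=W(e^t)$. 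Absorbing the constant $1-\log\alpha$ into a redefinition of $t$ yields $x_j=\tfrac{1}{\alpha}V(b_j-t)$ on $M$, while $x_j=\tfrac{s}{k}$ on $U$; since $\nu_j\ge 0$ on $U$ forces the unconstrained value to exceed the cap there, the two cases combine into $x_j^*=\min\{\tfrac{1}{\alpha}V(b_j-t),\tfrac{s}{k}\}$.

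The first equation of \eqref{eq:solve-topk-entropy-update-system} then follows by summing $x_j^*$ and imposing $\inner{\ones,x}=s$, using $\rho=\tfrac{\abs{U}}{k}$ and $\abs{U}=\rho k$. The second equation is the substantive step: I would solve the $U$-stationarity condition for $\nu_j=b_j-V^{-1}(\tfrac{\alpha s}{k})-t$, relying on the key identity $V^{-1}(z)=z+\log z$ to package $\alpha\tfrac{s}{k}+\log\tfrac{s}{k}=V^{-1}(\tfrac{\alpha s}{k})-\log\alpha$; summing over $U$ gives $\tfrac{1}{k}\tsum_{j\in U}\nu_j=A-\rho V^{-1}(\tfrac{\alpha s}{k})-\rho t$, which I substitute into $\partial_s\Lc=0$. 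Rewriting the surviving $\log(1-s)$ via $V^{-1}(\alpha(1-s))=\alpha(1-s)+\log\alpha+\log(1-s)$ cancels the bare $\alpha s$ terms and reproduces $(1-\rho)t+V^{-1}(\alpha(1-s))-\rho V^{-1}(\tfrac{\alpha s}{k})+A-\alpha=0$. Finally, specializing to $U=\emptyset$ (so $\rho=0$, $A=0$, and $M$ is all of the indices) collapses the first equation to $\alpha s=\tsum_j V(b_j-t)$ and the second to $V^{-1}(\alpha(1-s))=\alpha-t$, i.e. $\alpha(1-s)=V(\alpha-t)$; adding these two relations gives \eqref{eq:solve-topk-entropy-update-eq}. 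The main obstacle is the careful bookkeeping of the $t$-reparametrization together with the repeated use of $V^{-1}(z)=z+\log z$ to collapse the logarithmic terms of the $s$-stationarity condition into the stated $V^{-1}$ form; everything else is routine KKT manipulation.
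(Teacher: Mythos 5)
Your proposal is correct and follows essentially the same route as the paper's proof: form the Lagrangian, use stationarity plus the interiority of $x_j>0$ and $s<1$ to kill $\mu$ and the multiplier of $s\le 1$, recognize the $z+\log z$ structure to bring in $V$ after absorbing $1-\log\alpha$ into $t$, split over $U$ and $M$, eliminate $\tsum_{j\in U}\nu_j$ via $V^{-1}(z)=z+\log z$ to obtain the two-equation system, and collapse it to the single equation when $U=\emptyset$. No gaps.
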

\fi
\ifproof
\begin{proof}
The Lagrangian of \eqref{eq:topk-entropy-update} is given by
\begin{gather*}
\Lc(x, s, t, \lambda, \mu, \nu) =
\tfrac{\alpha}{2} ( \inner{x,x} + s^2 )
- \inner{b, x} + \inner{x, \log x} \\
+ (1-s) \log(1-s) + t(\inner{\ones, x} - s) \\
+ \lambda (s-1) - \inner{\mu, x}
+ \inner{\nu, x - \tfrac{s}{k}\ones} ,
\end{gather*}
where $t \in \Rb$, $\lambda, \mu, \nu \geq 0$ are the dual variables.
Computing partial derivatives of $\Lc$ \wrt $x_j$ and $s$,
and setting them to zero, we obtain
\begin{align*}
\alpha x_j + \log x_j &= b_j - 1 - t + \mu_j - \nu_j, \quad \forall j, \\
\alpha (1-s) + \log(1-s) &= \alpha - 1 - t
- \lambda - \tfrac{1}{k}\inner{\ones, \nu},
\quad \forall j .
\end{align*}
Note that only $x_j > 0$ and $s < 1$ satisfy the above constraints,
which implies $\mu_j = 0$ and $\lambda = 0$.
We re-write the above as
\begin{align*}
&\alpha x_j + \log(\alpha x_j) =
b_j - 1 - t + \log \alpha - \nu_j , \\
&\alpha (1-s) + \log\big(\alpha (1-s)\big) =
\alpha - 1 - t + \log \alpha - \tfrac{\inner{\ones, \nu}}{k} .
\end{align*}
These equations correspond to the Lambert $W$ function
of the exponent, $V(t) = W(e^t)$,
discussed in \S~\ref{sec:optimization-background}.
Let $p \bydef \inner{\ones, \nu}$ and
re-define $t \leftarrow 1 + t - \log \alpha$.
\begin{align*}
\alpha x_j &= W\big( \exp(b_j - t - \nu_j) \big) , \\
\alpha (1-s) &= W\big( \exp(\alpha - t - \tfrac{p}{k}) \big) .
\end{align*}
Finally, we obtain the following system:
\begin{align*}
\alpha x_j &= V(b_j - t - \nu_j) , \quad \forall j  \\
\alpha(1 -  s) &= V(\alpha - t - \tfrac{p}{k}) , \\
s &= \inner{\ones, x} , \quad
p = \inner{\ones, \nu} .
\end{align*}
Note that $V(t)$ is a strictly monotonically increasing function,
therefore, it is invertible and we can write
\begin{align*}
b_j -t - \nu_j &= V^{-1}(\alpha x_j) , \\
\alpha - t - \tfrac{p}{k} &= V^{-1}\big(\alpha (1-s) \big) .
\end{align*}
Next, we use the definition of the sets $U$ and $M$,
\begin{align*}
s &= \inner{\ones, x}
= \tsum_U \tfrac{s}{k} + \tsum_M \tfrac{1}{\alpha} V(b_j - t) , \\
p &= \inner{\ones, \nu}
= \tsum_U b_j - \abs{U}\big( t + V^{-1}(\tfrac{\alpha s}{k}) \big) .
\end{align*}
Let $\rho \bydef \tfrac{\abs{U}}{k}$
and $A \bydef \tfrac{1}{k} \tsum_U b_j$, we get
\begin{align*}
(1-\rho)s &= \tfrac{1}{\alpha} \tsum_M V(b_j - t) , \\
\tfrac{p}{k} &= A - \rho\big( t + V^{-1}(\tfrac{\alpha s}{k}) \big) .
\end{align*}
Finally, we eliminate $p$ and obtain the system:
\begin{align*}
&\alpha (1-\rho)s - \tsum_M V(b_j - t) = 0 , \\
&(1-\rho)t + V^{-1}\big(\alpha (1-s) \big)
- \rho V^{-1}(\tfrac{\alpha s}{k}) + A - \alpha = 0 .
\end{align*}
Moreover, when $U$ is empty, it simplifies into a single equation
\begin{align*}
V(\alpha - t) + \tsum_M V(b_j - t) = \alpha .
\end{align*}
\end{proof}
\fi

We solve \eqref{eq:solve-topk-entropy-update-system}
using the Newton's method \cite{nocedal2006numerical},
while for the Eq.~\eqref{eq:solve-topk-entropy-update-eq}
we use a $4$-th order Householder's method \cite{householder1970numerical}
with a faster convergence rate.
The latter is particularly attractive,
since the set $U$ can always be assumed empty for $k=1$,
\ie for the \reftext{eq:softmax} loss,
and is often also empty for the general \reftext{eq:topk-entropy} loss.
As both methods require the derivatives of $V(t)$, we note that
$\partial_t V(t) = V(t) / (1 + V(t))$ \cite{corless1996lambertw},
which means that the derivatives come at no additional cost.
Finally, we note that $V^{-1}(v) = v + \log v$ by definition.

\textbf{Loss computation:
\SvmMultig{\gamma}, \SvmTopKabg{k}{\gamma}.}
Here, we discuss how to evaluate smoothed losses that do not have
a closed-form expression for the primal loss.
Recall that the smooth \reftext{eq:smooth-topk-alpha} loss
is given by
\begin{align*}
L_{\gamma}(a) &= \tfrac{1}{\gamma} \big(
\innern{\wo{(a + c)}{y}, p} - \tfrac{1}{2} \normn{p}^2 \big) ,
\end{align*}
where
$a_j = f_j(x) - f_y(x)$, $c_j = 1 - \iv{y = j}$
for all $j \in \Yc$,
and $p = \proj_{\Ska(\gamma)}\wo{(a + c)}{y}$
is the Euclidean projection of $\wo{(a + c)}{y}$ onto $\Ska(\gamma)$.
We describe an $O(m \log m)$ algorithm to compute the projection $p$
in \cite{lapin2015topk}.
For the special case $k=1$, \ie the \SvmMultig{\gamma} loss,
the algorithm is particularly efficient and
exhibits essentially linear scaling in practice.
Moreover, since we only need the dot products with $p$ in $L_{\gamma}(a)$,
we exploit its special structure,
$p = \min \{ \max \{l, b - t \}, u \}$
with $b = \wo{(a + c)}{y}$,
and avoid explicit computation of $p$.
The same procedure is done for the \SvmTopKbg{k}{\gamma} loss.

\textbf{Loss computation:
\reftext{eq:topk-entropy}.}
Next, we discuss how to evaluate the \reftext{eq:topk-entropy}
loss that was defined via the conjugate of the softmax loss as
\begin{align}\label{eq:topk-entropy-primal}
\max_{x \in \Ska, \, s = \inner{\ones, x}} \big\{
\innern{\wo{a}{y},x} - (1-s) \log(1-s) - \inner{x, \log x} \big\} .
\raisetag{.4\baselineskip}
\end{align}
Note that \eqref{eq:topk-entropy-primal}
is similar to \eqref{eq:topk-entropy-update}
and we use a similar variable fixing scheme, as described above.
However, this problem is much easier: the auxiliary variables $s$ and $t$
are computed directly without having to solve a nonlinear system,
and their computation does not involve the $V(t)$ function.

\ifprop
\begin{proposition}\label{prop:solve-topk-entropy-primal}
Let $x^*$ be the solution of \eqref{eq:topk-entropy-primal}
and let the sets $U$ and $M$ be defined for the given $x^*$
as in \eqref{eq:sets-U-M}, then
\begin{align*}
x_j^* &= \min \big\{ \exp(a_j - t), \tfrac{s}{k} \big\} ,
\quad \forall\, j,
\end{align*}
and the variables $s$, $t$ are computed from
\begin{align}\label{eq:solve-topk-entropy-primal-system}
\begin{cases}
s = 1 / (1 + Q) , \\
t = \log Z + \log(1 + Q) - \log(1 - \rho) ,
\end{cases}%
\end{align}
where $\rho \bydef \tfrac{\abs{U}}{k}$,
$A \bydef \tfrac{1}{k} \tsum_{j \in U} a_j$,
$Z \bydef \tsum_{j \in M} \exp a_j$,
and
\begin{align*}
Q \bydef (1 - \rho)^{(1 - \rho)} / (k^\rho Z^{(1-\rho)} \exp A) .
\end{align*}
The \reftext{eq:topk-entropy} loss is then computed as
\begin{align*}
L(a) =
(A + (1 - \rho) t - \rho \log(\tfrac{s}{k})) s - (1-s)\log(1-s).
\end{align*}

Moreover, if $U$ is empty, then
$x_j^* = \exp(a_j - t)$ for all $j$,
and we recover the softmax loss \reftext{eq:softmax} as
\begin{align*}
L(a) = t = \log(1 + Z) = \log(1 + \tsum_j \exp a_j).
\end{align*}
\end{proposition}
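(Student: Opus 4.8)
The plan is to mirror the Lagrangian/KKT derivation used in the proof of Proposition~\ref{prop:topk-entropy-primal}, since \eqref{eq:topk-entropy-primal} is exactly the concave maximization whose value defines the \reftext{eq:topk-entropy} loss. Writing $a \bydef \wo{a}{y}$ and treating $s = \inner{\ones, x}$ as an auxiliary variable tied by an equality constraint, I would form
\begin{align*}
\Lc(x,s,t,\lambda,\mu,\nu) &= \inner{x, \log x} + (1-s)\log(1-s) - \inner{a, x} \\
&\quad + t(\inner{\ones, x} - s) + \lambda(s-1) - \inner{\mu, x} + \inner{\nu, x - \tfrac{s}{k}\ones}
\end{align*}
with $t \in \Rb$, $\lambda, \mu, \nu \geq 0$. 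Setting $\partial_{x_j}\Lc = 0$ and $\partial_s \Lc = 0$ gives $\log x_j = a_j - 1 - t + \mu_j - \nu_j$ and $\log(1-s) = -1 - t - \tfrac{1}{k}\inner{\ones, \nu} + \lambda$. As in the earlier proof, $x_j > 0$ and $s < 1$ force $\mu = 0$ and $\lambda = 0$, so the only constraint that can be active is the upper bound $x_j \leq s/k$; absorbing the $-1$ into $t$ leaves the clean relations $\log x_j = a_j - t - \nu_j$ and $\log(1-s) = -t - \tfrac{1}{k}\inner{\ones, \nu}$.

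Next I would split the coordinates according to \eqref{eq:sets-U-M}: on $M$ the bound is inactive ($\nu_j = 0$, $x_j = \exp(a_j - t)$), and on $U$ it is active ($x_j = s/k$, $\nu_j = a_j - t - \log(s/k) \geq 0$). Combining the two cases yields $x_j^* = \min\{\exp(a_j - t), s/k\}$, the claimed formula. To pin down $s$ and $t$ I would evaluate the two defining identities using this partition: $s = \inner{\ones, x}$ gives $(1-\rho)s = e^{-t} Z$ with $\rho = \abs{U}/k$ and $Z = \tsum_{j\in M}\exp a_j$, while substituting $\inner{\ones, \nu} = \tsum_{j\in U}\nu_j = kA - \rho k\,t - \rho k\log(s/k)$ (with $A = \tfrac{1}{k}\tsum_{j\in U} a_j$) into the second relation gives $\log(1-s) = -(1-\rho)t - A + \rho\log(s/k)$.

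The crux is then purely algebraic: eliminating $t = \log Z - \log(1-\rho) - \log s$ from the first identity and feeding it into the second collapses everything into $\log\tfrac{1-s}{s} = \log Q$ with $Q$ exactly as defined, whence $s = 1/(1+Q)$ and $t = \log Z + \log(1+Q) - \log(1-\rho)$. I expect this log-bookkeeping --- keeping the $(1-\rho)^{1-\rho}$, $k^\rho$, $\exp A$, and $Z^{1-\rho}$ factors straight --- to be the only delicate part; the decisive simplification relative to \eqref{eq:solve-topk-entropy-update-system} is the absence of the quadratic $\tfrac{\alpha}{2}(\inner{x,x}+s^2)$ term, which removes the Lambert $W$ function and makes the $s$--$t$ system explicitly solvable. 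Finally I would substitute $x^*$ back into the objective, using $a_j - \log x_j = t$ on $M$ and $x_j = s/k$ on $U$, to obtain
\begin{align*}
\inner{a, x^*} - \inner{x^*, \log x^*} = s\big(A + (1-\rho)t - \rho\log(\tfrac{s}{k})\big),
\end{align*}
so that $L(a) = s\big(A + (1-\rho)t - \rho\log(s/k)\big) - (1-s)\log(1-s)$. Specializing to $U$ empty (so $\rho = 0$, $A = 0$, $Q = 1/Z$) then gives $s = Z/(1+Z)$, $t = \log(1+Z)$, and $L(a) = t$, recovering \reftext{eq:softmax}.
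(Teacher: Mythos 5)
Your proposal is correct and follows essentially the same route as the paper: the paper's proof likewise continues from the Lagrangian/KKT conditions of Proposition~\ref{prop:topk-entropy-primal}, partitions the coordinates into $U$ and $M$, eliminates $p=\inner{\ones,\nu}$ to get the two log-linear relations in $s$ and $t$, solves them to obtain $\log\tfrac{1-s}{s}=\log Q$, and substitutes back to evaluate the loss. The algebraic details you outline (including the $(1-\rho)s=e^{-t}Z$ identity, the expression for $\inner{\ones,\nu}$, and the $U=\emptyset$ specialization) all match the paper's computation.
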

\fi
\ifproof
\begin{proof}
We continue the derivation started in the proof of
Propostion~\ref{prop:topk-entropy-primal}.
First, we write the system that follows directly
from the KKT \cite{boyd2004convex} optimality conditions.
\begin{equation}\label{eq:topk-entropy-kkt}
\begin{aligned}
x_j &= \min\{ \exp(a_j - t), \tfrac{s}{k} \} , \quad \forall j, \\
\nu_j &= \max\{ 0, a_j - t - \log(\tfrac{s}{k}) \} , \quad \forall j, \\
1-s &= \exp(-t - \tfrac{p}{k}) , \\
s &= \inner{\ones, x} , \quad
p = \inner{\ones, \nu} .
\end{aligned}
\end{equation}
Next, we define the two index sets $U$ and $M$ as follows
\begin{align*}
U &\bydef \{ j \given x_j = \tfrac{s}{k} \}, &
M &\bydef \{ j \given x_j < \tfrac{s}{k} \} .
\end{align*}
Note that the set $U$ contains at most $k$ indexes corresponding
to the largest components of $a_j$.
Now, we proceed with finding a $t$ that solves (\ref{eq:topk-entropy-kkt}).
Let $\rho \bydef \frac{\abs{U}}{k}$.
We eliminate $p$ as
\begin{align*}
p &= \sum_j \nu_j
= \sum_U a_j - \abs{U} \big(t + \log(\tfrac{s}{k}) \big)
\quad \Longrightarrow \\
\tfrac{p}{k}
&= \tfrac{1}{k} \sum_U a_j - \rho \big(t + \log(\tfrac{s}{k})\big) .
\end{align*}
Let $Z \bydef \sum_M \exp a_j$, we write for $s$
\begin{align*}
s &= \sum_j x_j
= \sum_U \tfrac{s}{k} + \sum_M \exp(a_j - t) \\
&= \rho s + \exp(-t) \sum_M \exp a_j
= \rho s + \exp(-t) Z .
\end{align*}
We conclude that
\begin{align*}
(1 - \rho) s &= \exp(-t) Z \quad \Longrightarrow \\
t &= \log Z - \log\big( (1-\rho)s \big) .
\end{align*}
Let $A \bydef \tfrac{1}{k} \sum_U a_j$.
We further write
\begin{gather*}
\log(1-s) = -t - \tfrac{p}{k}
= -t - A + \rho \big(t + \log(\tfrac{s}{k})\big) \\
= \rho \log(\tfrac{s}{k}) - A
- (1 - \rho) \big[ \log Z - \log\big( (1-\rho)s \big) \big] ,
\end{gather*}
which yields the following equation for $s$
\begin{align*}
&\log(1-s) - \rho ( \log s - \log k)  + A \\
&+ (1 - \rho) \big[ \log Z - \log(1-\rho) - \log s \big] = 0.
\end{align*}
Therefore,
\begin{align*}
&\log(1-s) - \log s + \rho \log k + A \\
&+ (1-\rho) \log Z
- (1-\rho)\log(1-\rho) = 0 , \\
&\log\left( \frac{1-s}{s} \right) =
\log\left(
\frac{(1-\rho)^{(1-\rho)} \exp(-A)}{k^\rho Z^{(1-\rho)}}
\right) .
\end{align*}
We finally get
\begin{equation*}
\begin{aligned}
s &= 1 / (1 + Q), \\
Q &\bydef (1-\rho)^{(1-\rho)} / (k^\rho Z^{(1-\rho)} e^A) .
\end{aligned}
\end{equation*}
We note that:
\emph{a)} $Q$ is readily computable once the sets $U$ and $M$ are fixed;
and
\emph{b)} $Q = 1/Z$ if $k=1$ since $\rho = A = 0$ in that case.
This yields the formula for $t$ as
\begin{align*}
t = \log Z + \log(1 + Q) - \log(1-\rho).
\end{align*}
As a sanity check, we note that we again recover the softmax loss
for $k=1$, since
$t = \log Z + \log(1 + 1/Z) = \log(1 + Z) = \log(1 + \sum_j \exp a_j)$.

To verify that the computed $s$ and $t$ are compatible with the choice
of the sets $U$ and $M$, we check if this holds:
\begin{align*}
\exp(a_j - t) &\geq \tfrac{s}{k}, \quad \forall j \in U , \\
\exp(a_j - t) &\leq \tfrac{s}{k}, \quad \forall j \in M ,
\end{align*}
which is equivalent to
\begin{align*}
\max_M a_j \leq \log(\tfrac{s}{k}) + t \leq \min_U a_j .
\end{align*}

To compute the actual loss \eqref{eq:topk-entropy-primal}, we have
\begin{align*}
&\inner{a,x} - \inner{x, \log x} - (1-s)\log(1-s) \\
&= \sum_U a_j \tfrac{s}{k} + \sum_M a_j \exp(a_j - t)
- \sum_U \tfrac{s}{k} \log(\tfrac{s}{k}) \\
&- \sum_M (a_j - t) \exp(a_j - t)
- (1-s)\log(1-s) \\
&= A s - \rho s \log(\tfrac{s}{k})
+ t \exp(-t) Z - (1-s)\log(1-s) \\
&= A s - \rho s \log(\tfrac{s}{k})
+ (1-\rho) s t - (1-s)\log(1-s) .
\end{align*}
\end{proof}
\fi

As before, we only need to examine at most $k$ partitions $U$,
adding the next maximal $a_j$ to $U$ until there are no violated constraints.
Therefore, the overall complexity of the procedure
to compute the \reftext{eq:topk-entropy} loss is $O(km)$.

The efficiency of the outlined approach for optimizing
the \reftext{eq:topk-entropy} loss
crucially depends on fast computation of $V(t)$ in the SDCA update.
Our implementation was able to scale to large datasets as we show next.

\textbf{Runtime evaluation.}
First, we highlight the efficiency of our algorithm from \cite{lapin2015topk}
for computing the Euclidean projection onto the top-$k$ simplex,
which is used, in particular, for optimization of the \SvmMultig{\gamma} loss.
The scaling plot is given in Figure~\ref{fig:time:a}
and shows results of an experiment following \cite{liu2009efficient}.
We sample $1000$ points from the normal distribution $\Nc(0,1)$
and solve the projection problems using
the algorithm of Kiwiel \cite{kiwiel2008variable} (denoted as $\Sm$)
and using our method of projecting onto
the set $\Ska$ for different values of $k=1,5,10$.
We report the total CPU time taken on a single
Intel(R) Xeon(R) E5-2680 2.70GHz processor.
As was also observed by \cite{kiwiel2008variable},
we see that the scaling is essentially linear in the problem dimension
and makes the method applicable to problems with a large number of classes.

Next in Figure~\ref{fig:time}, we compare the wall-clock training time
of \reftext{eq:multi-hinge} with a smoothed \SvmMultig{\gamma}
and the \reftext{eq:softmax} objectives.
We plot the relative duality gap (\ref{fig:time:b})
and the validation accuracy (\ref{fig:time:c}) versus time
for the best performing models on the ImageNet 2012 benchmark.
We obtain substantial improvement of the convergence rate
for the smooth \SvmMultig{\gamma} compared to the nonsmooth baseline.
Moreover, we see that the top-$1$ accuracy saturates
after a few passes over the training data,
which justifies the use of a fairly loose stopping criterion
(we use $\varepsilon = 10^{-3}$).
For the \reftext{eq:softmax} loss,
the cost of each epoch is significantly higher
compared to \reftext{eq:multi-hinge},
which is due to the difficulty of solving \eqref{eq:topk-entropy-update}.
This suggests that the smooth \SvmTopKag{1}{1} loss
can offer competitive performance (see \S~\ref{sec:experiments})
at a lower training cost.

Finally, we also compare our implementation of \reftext{eq:softmax}
(marked SDCA in \ref{fig:time:d})
with the SPAMS optimization toolbox \cite{mairal2010network},
which provides an efficient implementation of FISTA \cite{beck2009fast}.
We note that the rate of convergence of SDCA is competitive
with FISTA for $\epsilon \geq 10^{-4}$
and is noticeably better for $\epsilon < 10^{-4}$.
We conclude that our approach for training the \reftext{eq:softmax} model
is competitive with the state-of-the-art,
and faster computation of $V(t)$ can lead to a further speedup.

\subsection{Multilabel Methods}\label{sec:optimization-multilabel}

This section covers optimization of the multilabel objectives
introduced in \S~\ref{sec:multilabel-methods}.
First, we reduce computation of the SDCA update step and evaluation of the smoothed loss
\reftext{eq:smooth-svm-ml} to the problem of computing the Euclidean projection
onto what we called the bipartite simplex $B(r)$, see Eq.~\eqref{eq:bipartite-simplex}.
Next, we contribute a novel variable fixing algorithm for computing that projection.
Finally, we discuss SDCA optimization of the multilabel cross-entropy loss
\reftext{eq:lr-ml}.

\textbf{SDCA update:
\reftext{eq:svm-ml}, \reftext{eq:smooth-svm-ml}.}
Here, we discuss optimization of the smoothed \reftext{eq:smooth-svm-ml} loss.
The update step for the nonsmooth counterpart is recovered
by setting $\gamma = 0$.

\ifprop
\begin{proposition}\label{prop:smooth-svm-ml-update}
Let $L$ and $L^*$ in \eqref{eq:primal-dual} be respectively
the \reftext{eq:smooth-svm-ml} loss and its conjugate as in
Proposition~\ref{prop:smooth-svm-ml}.
The dual variables $a \bydef a_i$ corresponding to
the training pair $(x_i,Y_i)$ are updated as
$(a_y)_{y \in Y_i} = p$ and
$(a_j)_{j \in \bar{Y}_i} = -\bar{p}$,
where
\begin{align*}
(p,\bar{p}) = \proj_{B(1 / \lambda n)}(b,\bar{b}) ,
\end{align*}
$b = \rho \big( \tfrac{1}{2} - q_y \big)_{y \in Y_i}$,
$\bar{b} = \rho \big( \tfrac{1}{2} + q_j \big)_{j \in \bar{Y}_i}$,
$q = \tra{W} x_i - \inner{x_i,x_i} a_i$,
and
$\rho = \tfrac{1}{\inner{x_i,x_i} + \gamma \lambda n}$.
\end{proposition}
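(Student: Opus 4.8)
The plan is to follow the same template used for the smooth top-$k$ SVM update in Proposition~\ref{prop:smooth-topk-hinge-update}: isolate the per-example contribution to the dual objective $D(A)$, substitute the explicit form of the smoothed conjugate $L_\gamma^*$ from Proposition~\ref{prop:smooth-svm-ml}, expand the quadratic regularizer $\tr(AK\tra{A})$ to separate the terms depending on the updated variable $a \bydef a_i$, and finally recognize the resulting problem as a Euclidean projection onto the bipartite simplex $B(1/\lambda n)$.

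First I would fix an index $i$ and write the update step $\max_{a_i} D(A)$, keeping only the terms that depend on $a_i$. These are $-\tfrac{1}{n} L^*(y_i, -\lambda n a_i)$ from the conjugate and the quadratic coupling term. Using the smoothed conjugate, the argument of $L_\gamma^*$ is $v = -\lambda n a_i$, so the constraint $v \in S_Y$ translates into a feasibility condition on $a_i$ (in particular the sign conditions $a_y \geq 0$ for $y \in Y_i$ and $a_j \leq 0$ for $j \in \bar{Y}_i$, together with $\inner{\ones, a_i}=0$ over the split). Substituting $v = -\lambda n a_i$ into $L_\gamma^*(v) = \tfrac{1}{2}(\sum_{y \in Y} v_y - \sum_{j \in \bar{Y}} v_j) + \tfrac{\gamma}{2}\norms{v}$ gives a linear term in $a_i$ plus a $\tfrac{\gamma}{2}(\lambda n)^2 \norms{a_i}$ contribution, exactly as in the top-$k$ case.

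Next I would expand the Gram term. Writing $\tr(AK\tra{A}) = K_{ii}\inner{a_i,a_i} + 2\sum_{j\neq i} K_{ij}\inner{a_i,a_j} + \mathrm{const}$ and setting $q = \tra{W}x_i - \inner{x_i,x_i}a_i = \sum_{j \neq i} K_{ij} a_j$ collects the linear-in-$a_i$ coefficient. After multiplying the objective by a suitable constant (e.g.\ $-2/\lambda$ to match the earlier proof) and completing the square, the $\norms{a_i}$ terms from both the smoothed conjugate and the Gram term combine into a single quadratic $\tfrac{1}{2}(\inner{x_i,x_i}+\gamma\lambda n)\norms{a_i}$, which after rescaling yields the factor $\rho = 1/(\inner{x_i,x_i}+\gamma\lambda n)$. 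The linear coefficients produce the shifted targets $b = \rho(\tfrac{1}{2}-q_y)_{y\in Y_i}$ and $\bar{b} = \rho(\tfrac{1}{2}+q_j)_{j\in\bar{Y}_i}$, matching the shape $\tfrac{1}{2}\mp u$ already seen in the primal formula of Proposition~\ref{prop:smooth-svm-ml}.

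The final step is to identify the minimization as the projection problem. Setting $p_y = a_y$ for $y \in Y_i$ and $\bar{p}_j = -a_j$ for $j \in \bar{Y}_i$ makes both blocks nonnegative, and the constraint $\sum_{y}a_y = -\sum_j a_j$ with the scaled radius $\inner{\ones,a_i}\leq 1/\lambda n$ becomes precisely $\inner{\ones,p}=\inner{\ones,\bar{p}}\leq 1/\lambda n$, i.e.\ $(p,\bar{p})\in B(1/\lambda n)$. The objective reduces to $\normn{p-b}^2+\normn{\bar{p}-\bar{b}}^2$ up to constants, so the minimizer is $\proj_{B(1/\lambda n)}(b,\bar{b})$. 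The main obstacle I anticipate is purely bookkeeping: correctly tracking the sign flips between the dual variables $a_i$ and the projection variables $(p,\bar{p})$, and ensuring that the feasible set $S_Y$ for $v=-\lambda n a_i$ is faithfully mapped onto $B(1/\lambda n)$ with the right radius after the $-\lambda n$ rescaling. This is the same structural transformation as in Proposition~\ref{prop:smooth-topk-hinge-update}, now carried out symmetrically over the two label blocks $Y_i$ and $\bar{Y}_i$.
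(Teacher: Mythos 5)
Your proposal follows essentially the same route as the paper's proof: isolate the terms of $D(A)$ depending on $a_i$, substitute the smoothed conjugate from Proposition~\ref{prop:smooth-svm-ml} with $v=-\lambda n a_i$, expand $\tr(AK\tra{A})$ via $q = A K_i - K_{ii}a_i$, combine the two quadratic contributions into the factor $\rho = 1/(\inner{x_i,x_i}+\gamma\lambda n)$, and flip signs on the $\bar{Y}_i$ block so that the constraint set $S_{Y_i}$ rescaled by $-\lambda n$ becomes the bipartite simplex $B(1/\lambda n)$. The sign bookkeeping and the identification of $b$, $\bar{b}$, and the radius all match the paper's derivation, so the argument is correct as outlined.
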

\fi
\ifproof
\begin{proof}
We update the dual variables $a \bydef a_i \in \Rb^m$
corresponding to the training example $(x_i, Y_i)$ by
solving the following optimization problem.
\begin{align*}
\max_{a \in \Rb^m} \;
& -\frac{1}{n} L_{\gamma}^*(Y_i, - \lambda n a) - \frac{\lambda}{2}
\tr(A K \tra{A}) ,
\end{align*}
where $\lambda > 0$ is a regularization parameter.
Equivalently, we can divide both the primal and the dual objectives
by $\lambda$ and use $C \bydef \frac{1}{\lambda n} > 0$
as the regularization parameter instead.
The optimization problem becomes
\begin{align}\label{eq:update-problem}
\max_{a \in \Rb^m} \;
& -C L^* \Big( Y_i, - \frac{1}{C} a \Big) - \frac{1}{2} \tr(A K \tra{A}) .
\end{align}
Note that
\begin{align*}
\tr(A K \tra{A}) = K_{ii} \inner{a, a}
+ 2 \sum_{j \neq i} K_{ij} \inner{a_j, a} + {\rm const} ,
\end{align*}
where the ${\rm const}$ does not depend on $a$.
We ignore that constant in the following derivation
and also define an auxiliary vector
$q \bydef \sum_{j \neq i} K_{ij} a_j = A K_i - K_{ii} a_i$ .
Plugging the conjugate from Proposition~\ref{prop:smooth-svm-ml}
into \eqref{eq:update-problem}, we obtain
\begin{align*}
\max_{a \in \Rb^m} \;
& -C \Big(
\frac{1}{2 C} \big( - \tsum_{y \in Y_i} a_y + \tsum_{j \in \bar{Y}_i} a_j \big)
+ \frac{\gamma}{2 C^2} \norms{a} \Big) \\
&- (1/2) \big( K_{ii} \norms{a} + 2 \inner{q, a} \big) \\
\st \; & - \tfrac{1}{C} a \in S_{Y_i}
\end{align*}
We re-write the constraint $- \frac{1}{C} a \in S_{Y_i}$ as
\begin{align*}
&\tsum_{y \in Y_i} a_y = - \tsum_{j \in \bar{Y}_i} a_j \leq C \\
& a_y \geq 0, \;  \forall \, y \in Y_i ; \;\;
a_j \leq 0, \; \forall \, j \in \bar{Y}_i;
\end{align*}
and switch to the equivalent minimization problem below.
\begin{align*}
\min_{a \in \Rb^m} \;\;
& \tfrac{1}{2} \big( K_{ii} + \tfrac{\gamma}{C} \big) \norms{a}
- \tfrac{1}{2} \tsum_{y \in Y_i} a_y
- \tfrac{1}{2} \tsum_{j \in \bar{Y}_i} (- a_j) \\
& + \inner{q, a} \\
& \tsum_{y \in Y_i} a_y = \tsum_{j \in \bar{Y}_i} - a_j \leq C \\
& a_y \geq 0, \;  \forall \, y \in Y_i ; \;\;
- a_j \geq 0, \; \forall \, j \in \bar{Y}_i .
\end{align*}
Note that
\begin{gather*}
- \tfrac{1}{2} \tsum_{y \in Y_i} a_y
- \tfrac{1}{2} \tsum_{j \in \bar{Y}_i} (- a_j)
+ \inner{q, a} \\
= - \tsum_{y \in Y_i} ( \tfrac{1}{2} - q_y ) a_y
- \tsum_{j \in \bar{Y}_i} ( \tfrac{1}{2} + q_j ) (- a_j) ,
\end{gather*}
and let us define
\begin{align*}
x &\bydef (a_y)_{y \in Y_i} \in \Rb^{\abs{Y_i}} , &
b &\bydef \tfrac{1}{K_{ii} + \gamma / C}
(\tfrac{1}{2} - q_y)_{y \in Y_i} \in \Rb^{\abs{Y_i}} , \\
y &\bydef (-a_j)_{j \in \bar{Y}_i} \in \Rb^{\abs{\bar{Y}_i}} , &
\bar{b} &\bydef \tfrac{1}{K_{ii} + \gamma / C}
(\tfrac{1}{2} + q_j)_{j \in \bar{Y}_i} \in \Rb^{\abs{\bar{Y}_i}} .
\end{align*}
The final projection problem for the update step is
\begin{equation}\label{eq:proj_problem}
\begin{aligned}
\min_{x, y} \;\;
& \tfrac{1}{2} \norms{x - b} + \tfrac{1}{2} \norms{y - \bar{b}} \\
& \inner{\ones, x} = \inner{\ones, y} \leq C \\
& x \geq 0, \quad y \geq 0 .
\end{aligned}
\end{equation}
\end{proof}
\fi

Let us make two remarks regarding optimization of the multilabel SVM.
First, we see that the update step involves exactly the same projection
that was used in Proposition~\ref{prop:smooth-svm-ml}
to define the smoothed \reftext{eq:smooth-svm-ml} loss,
with the difference in the vectors being projected and the radius of
the bipartite simplex.
Therefore, we can use the same projection algorithm both during optimization
as well as when computing the loss.
And second, even though \reftext{eq:svm-ml} reduces to \reftext{eq:multi-hinge}
when $Y_i$ is singleton, the derivation of the smoothed loss and the projection
algorithm proposed below for the bipartite simplex are substantially different
from what we proposed in the multiclass setting.
Most notably, the treatment of the dimensions in $Y_i$ and $\bar{Y}_i$ is now symmetric.

\textbf{Loss computation:
\reftext{eq:smooth-svm-ml}.}
The smooth multilabel SVM loss \reftext{eq:smooth-svm-ml} is given by
\begin{align*}
L_{\gamma}(u) &= \tfrac{1}{\gamma}\big(
\inner{b,p} - \tfrac{1}{2}\norms{p} +
\inner{\bar{b},\bar{p}} - \tfrac{1}{2}\norms{\bar{p}} \big),
\end{align*}
where
$b = \big( \tfrac{1}{2} - u_y \big)_{y \in Y}$,
$\bar{b} = \big( \tfrac{1}{2} + u_j \big)_{j \in \bar{Y}}$,
$u = f(x)$,
and
$(p,\bar{p}) = \proj_{B(\gamma)}(b,\bar{b})$.
Below, we propose an efficient variable fixing algorithm
to compute the Euclidean projection onto $B(\gamma)$.
We also note that we can use the same trick that we used for
\reftext{eq:smooth-topk-alpha} and
exploit the special form of the projection to avoid explicit computation
of $p$ and $\bar{p}$.

\textbf{Euclidean projection onto the bipartite simplex $B(\rho)$.}
The optimization problem that we seek to solve is:
\begin{equation}\label{eq:proj-bipartite-simplex}
\begin{aligned}
(p, \bar{p}) = \argmin_{x \in \Rb^m_+, \, y \in \Rb^n_+} \;\;
& \tfrac{1}{2} \norms{x - b} + \tfrac{1}{2} \norms{y - \bar{b}} \\[-.5em]
& \inner{\ones, x} = \inner{\ones, y} \leq \rho .
\end{aligned}
\end{equation}
This problem has been considered by
Shalev-Shwartz and Singer \cite{shalev2006efficient},
who proposed a breakpoint searching algorithm based on sorting,
as well as by Liu and Ye \cite{liu2009efficient},
who formulated it as a root finding problem that is solved via bisection.
Next, we contribute a novel variable fixing algorithm
that is inspired by the algorithm of Kiwiel \cite{kiwiel2008variable}
for the continuous quadratic knapsack problem
(\aka projection onto simplex).

\begin{enumerate}[leftmargin=1.25em]
\item \emph{Initialization.}
Define the sets $I_x = \{1, \ldots, m\}$, $L_x = \{\}$,
$I_y = \{1, \ldots, n\}$, $L_y = \{\}$,
and solve the independent subproblems below using the algorithm of
\cite{kiwiel2008variable}.
\begin{align*}
p &= \argmin_{x \in \Rb^m_+} \big\{
\tfrac{1}{2} \norms{x - b} \given \inner{\ones, x} = \rho \big\} , \\
\bar{p} &= \argmin_{y \in \Rb^n_+} \big\{
\tfrac{1}{2} \norms{x - \bar{b}} \given \inner{\ones, y} = \rho \big\} .
\end{align*}
Let $t'$ and $s'$ be the resulting optimal thresholds, such that
$p = \max\{ 0, b - t' \}$ and $\bar{p} = \max\{ 0, \bar{b} - s' \}$.
If $t' + s' \geq 0$, then $(p,\bar{p})$ is the solution to
\eqref{eq:proj-bipartite-simplex}; stop.

\item\label{alg:step1}
\emph{Restricted subproblem.}
Compute $t$ as
\begin{align*}
t &= \big( \tsum_{I_x} b_j - \tsum_{I_y} \bar{b}_j \big)
/ (\abs{I_x} + \abs{I_y}) ,
\end{align*}
and let $x_j(t) = b_j - t$, $y_j(t) = \bar{b}_j + t$.

\item \emph{Feasibility check.}
Compute
\begin{align*}
\Delta_x &= \tsum_{I_x^L} (b_j - t),
\text{ where } I_x^L = \{ j \in I_x \given x_j(t) \leq 0 \} , \\
\Delta_y &= \tsum_{I_y^L} (\bar{b}_j + t),
\text{ where } I_y^L = \{ j \in I_y \given y_j(t) \leq 0 \} .
\end{align*}

\item \emph{Stopping criterion.}
If $\Delta_x = \Delta_y$, then the solution to
\eqref{eq:proj-bipartite-simplex} is given by
$p = \max\{ 0, b - t \}$ and $\bar{p} = \max\{ 0, \bar{b} + t \}$;
stop.

\item \emph{Variable fixing.}
If $\Delta_x > \Delta_y$, update
$I_x \leftarrow I_x \setminus I_x^L$,
$L_x \leftarrow L_x \cup I_x^L$.
If $\Delta_x < \Delta_y$, update
$I_y \leftarrow I_y \setminus I_y^L$,
$L_y \leftarrow L_y \cup I_y^L$.
Go to step~\ref{alg:step1}.
\end{enumerate}

\ifprop
\begin{proposition}\label{prop:proj-bipartite-simplex}
The algorithm above solves \eqref{eq:proj-bipartite-simplex}.
\end{proposition}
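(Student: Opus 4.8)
The plan is to characterise the solution of \eqref{eq:proj-bipartite-simplex} through its KKT conditions and then verify that the algorithm returns a KKT point. Since the objective $\tfrac12\norms{x-b}+\tfrac12\norms{y-\bar b}$ is strictly convex and the feasible set is nonempty, closed and convex (with $x=y=0$ giving strict feasibility of the inequality when $\rho>0$), the minimiser is unique and the KKT system is necessary and sufficient. First I would attach a multiplier $t\in\Rb$ to the equality $\inner{\ones,x}=\inner{\ones,y}$, a multiplier $\lambda\ge0$ to $\inner{\ones,x}\le\rho$, and nonnegativity multipliers $\mu,\nu\ge0$. Stationarity gives $x_j=b_j-t-\lambda+\mu_j$ and $y_j=\bar b_j+t+\nu_j$, and combining this with complementary slackness $\mu_j x_j=\nu_j y_j=0$ forces $p_j=\max\{0,\,b_j-t-\lambda\}$ and $\bar p_j=\max\{0,\,\bar b_j+t\}$, subject to $\lambda(\inner{\ones,p}-\rho)=0$. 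The whole task thus reduces to pinning down the two scalars $t$ and $\lambda$.

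Next I would split on whether the inequality is active. If $\inner{\ones,p}=\rho$, then $p$ is the Euclidean projection of $b$ onto the radius-$\rho$ simplex with threshold $t+\lambda$, and $\bar p$ the projection of $\bar b$ with threshold $-t$; these are exactly the two independent subproblems of the initialisation step, whose thresholds are $t'=t+\lambda$ and $s'=-t$. Hence $\lambda=t'+s'$, so the admissibility requirement $\lambda\ge0$ is precisely the test $t'+s'\ge0$, and when it passes the returned pair $(\max\{0,b-t'\},\max\{0,\bar b-s'\})$ satisfies every KKT relation and is optimal. If instead $t'+s'<0$, an active optimum would require $\lambda<0$; this is impossible, so the constraint is inactive, $\lambda=0$, and the optimum is the unique single-threshold pair $p=\max\{0,b-t\}$, $\bar p=\max\{0,\bar b+t\}$ with $\tsum_j\max\{0,b_j-t\}=\tsum_j\max\{0,\bar b_j+t\}<\rho$ --- which is what the main loop computes.

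For the loop I would maintain the invariant that $L_x,L_y$ hold coordinates already committed to zero while the formula of Step~\ref{alg:step1} is the unique threshold balancing the \emph{unclamped} active sums $\tsum_{I_x}(b_j-t)=\tsum_{I_y}(\bar b_j+t)$. The key identity is that the genuine clamped sums differ from this common value $S$ only by the violating mass: $\tsum_{I_x}\max\{0,b_j-t\}=S-\Delta_x$ and $\tsum_{I_y}\max\{0,\bar b_j+t\}=S-\Delta_y$. Consequently the true balance $\tsum_j\max\{0,b_j-t\}=\tsum_j\max\{0,\bar b_j+t\}$ holds exactly when $\Delta_x=\Delta_y$, which is the stopping test; at that point the $\max\{0,\cdot\}$ operation realises the clamping automatically and, since every coordinate of $L_x$ (resp.\ $L_y$) satisfies $b_j-t\le0$ (resp.\ $\bar b_j+t\le0$), the returned $(p,\bar p)$ is a KKT point with $\lambda=0$, hence the optimum.

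The hard part will be the monotonicity and termination argument underlying the variable-fixing rule of Step~5. I would show that when $\Delta_x\neq\Delta_y$ the side whose clamped sum currently overshoots the other fixes the direction in which the root of $\phi(t)\bydef\tsum_j\max\{0,b_j-t\}$ against $\psi(t)\bydef\tsum_j\max\{0,\bar b_j+t\}$ must lie; because $\phi$ is nonincreasing and $\psi$ nondecreasing in $t$, the coordinates just found to be violated on that side remain violated as $t$ is driven monotonically toward the root, so moving them permanently into $L_x$ or $L_y$ never has to be undone, while the still-free coordinates stay free. Each pass therefore fixes at least one index of a finite pool, giving termination in at most $m+n$ iterations, after which $\Delta_x=\Delta_y$ and the identity above certifies optimality. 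This bookkeeping --- essentially Kiwiel's monotonicity analysis \cite{kiwiel2008variable} adapted to two coupled index sets whose thresholds move in opposite senses --- is where the real care is required.
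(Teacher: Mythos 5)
Your proposal follows essentially the same route as the paper's proof: a KKT characterization of \eqref{eq:proj-bipartite-simplex}, a case split on whether the coupling constraint $\inner{\ones,x}\leq\rho$ is active (with the initialization test $t'+s'\geq 0$ arising exactly as the sign condition on the multiplier of that constraint), and a Kiwiel-style monotonicity argument showing that coordinates fixed to zero never need to be released, which gives termination. The only difference is cosmetic — you parameterize the multipliers as $(t,\lambda)$ where the paper uses $(t,s)$ with $\lambda=t+s$ — and your explicit identity relating $\Delta_x,\Delta_y$ to the clamped sums is a welcome filling-in of a step the paper leaves implicit.
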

\fi
\ifproof
\begin{proof}
We sketch the main parts of the proof that show correctness of the algorithm.
A complete and formal derivation would follow the proof given in
\cite{kiwiel2008variable}.

The Lagrangian for the optimization problem
\eqref{eq:proj-bipartite-simplex} is
\begin{gather*}
\Lc(x,y,t,s,\lambda,\mu,\nu) 
= \tfrac{1}{2} \norms{x - b} + \tfrac{1}{2} \norms{y - \bar{b}} \\
+ t (\inner{\ones, x} - r)
+ s (\inner{\ones, y} - r)
+ \lambda (r - \rho)
- \inner{\mu, x} - \inner{\nu, y} ,
\end{gather*}
and it leads to the following KKT conditions
\begin{equation}\label{eq:proj_kkt}
\begin{aligned}
x_j &= b_j - t + \mu_j, & \mu_j x_j &= 0 , & \mu_j &\geq 0 , \\
y_k &= \bar{b}_k - s + \nu_k, & \nu_k y_k &= 0 , & \nu_k &\geq 0 , \\
\lambda &= t + s, & \lambda (r - \rho) &= 0 , & \lambda &\geq 0 .
\end{aligned}
\end{equation}

If $\rho = 0$, the solution is trivial. Assume $\rho > 0$ and let
\begin{align*}
x(t) &= \max\{ 0, b - t \}, &
y(s) &= \max\{ 0, \bar{b} - s \},
\end{align*}
where $t$, $s$
are the dual variables from \eqref{eq:proj_kkt} and we have
\begin{align*}
(t + s) (r - \rho) &= 0, &
t + s &\geq 0, &
0 \leq r &\leq \rho .
\end{align*}
We define index sets for $x$ as
\begin{align*}
I_x &= \{ j \given b_j - t > 0 \}, &
L_x &= \{ j \given b_j - t \leq 0 \}, &
m_x &= \abs{I_x},
\end{align*}
and similar sets $I_y$, $L_y$ for $y$.
Solving a reduced subproblem
\begin{align*}
\min \{ \tfrac{1}{2} \norms{x - b} \given \inner{\ones, x} = r \} ,
\end{align*}
for $t$ and a similar problem for $s$, yields
\begin{align}\label{eq:solve_ts}
t &= \tfrac{1}{m_x} \big( \tsum_{j \in I_x} b_j - r \big), &
s &= \tfrac{1}{m_y} \big( \tsum_{j \in I_y} \bar{b}_j - r \big) .
\end{align}

We consider two cases: $r = \rho$ and $r < \rho$.
If $r = \rho$, then we have two variables $t$ and $s$ to optimize over,
but the optimization problem \eqref{eq:proj-bipartite-simplex}
decouples into two simplex projection problems which can
be solved independently.
\begin{equation}\label{eq:solve_xy}
\begin{aligned}
&\min \{
\tfrac{1}{2} \norms{x - b} \given
\inner{\ones, x} = \rho, \; x_j \geq 0 \} , \\
&\min \{
\tfrac{1}{2} \norms{y - \bar{b}} \given
\inner{\ones, y} = \rho, \; y_j \geq 0 \} .
\end{aligned}
\end{equation}
Let $t'$ and $s'$ be solutions to the independent problems
\eqref{eq:solve_xy}.
If $t' + s' \geq 0$, we have that the KKT conditions \eqref{eq:proj_kkt}
are fulfilled and we have, therefore, the solution to the original problem
\eqref{eq:proj-bipartite-simplex}.
Otherwise, we have that the optimal $t^* + s^* > t' + s'$ and so
at least one of the two variables must increase.
Let $t^* > t'$, then $\inner{\ones, x(t^*)} < \inner{\ones, x(t')} = \rho$,
therefore $r^* < \rho$.

If $r < \rho$, then $t + s = 0$. We eliminate $s$, which leads to
\begin{align*}
\tfrac{1}{m_x} \big( \tsum_{j \in I_x} b_j - r \big)
= -
\tfrac{1}{m_y} \big( \tsum_{j \in I_y} \bar{b}_j - r \big) .
\end{align*}
This can now be solved for $r$ as
\begin{align}\label{eq:solve_r}
r = \big(
m_y \tsum_{I_x} b_j + m_x \tsum_{I_y} \bar{b}_j \big) / (m_x + m_y) .
\end{align}
One can verify that $r < \rho$ if $r$ is computed by \eqref{eq:solve_r}
and $t' + s' < 0$.
Plugging \eqref{eq:solve_r} into \eqref{eq:solve_ts}, we get
\begin{align}\label{eq:solve_t}
t &= \big(
\tsum_{I_x} b_j - \tsum_{I_y} \bar{b}_j
\big) / (m_x + m_y) .
\end{align}
One can further verify that $t > t'$ and $-t > s'$,
where $t$ is computed by \eqref{eq:solve_t},
$t'$, $s'$ are computed by \eqref{eq:solve_ts} with $r = \rho$,
and $t' + s' < 0$.
Therefore, if $x_j(t') = 0$ for some $j \in L_x(t')$,
then $x_j(t) = 0$, and so $L_x(t') \subset L_x(t)$.
The variables that were fixed to the lower bound while solving
\eqref{eq:solve_xy} with $r = \rho$ remain fixed
when considering $r < \rho$.
\end{proof}
\fi

The proposed algorithm is easy to implement, does not require sorting,
and scales well in practice, as demonstrated by our experiments
on VOC 2007 and MS COCO.

\textbf{Runtime evaluation.}
We also compare the runtime of the proposed variable fixing algorithm and
the sorting based algorithm of \cite{shalev2006efficient}.
We perform no comparison to \cite{liu2009efficient}
as their code is not available.
Furthermore, the algorithms that we consider are exact,
while the method of \cite{liu2009efficient} is approximate
and its runtime is dependent on the required precision.
The experimental setup is the same as in \S~\ref{sec:optimization-multiclass}
above, and our results are reported in Table~\ref{tbl:time-ml}.

\begin{table}[H]\footnotesize\centering\setlength{\tabcolsep}{.85em}
\begin{tabular}{l|ccccc}
\toprule
Dimension $d$ & $10^3$ & $10^4$ & $10^5$ & $10^6$ & $10^7$ \\
\midrule
\midrule
Sorting based \cite{shalev2006efficient} &
$0.07$ & $0.56$ & $6.92$ & $85.56$ & $1364.94$ \\
Variable fixing (ours) &
$0.02$ & $0.15$ & $1.48$ & $16.46$ & $169.81$ \\
\midrule
Improvement factor &
$3.07$ & $3.79$ & $4.69$ & $5.20$ & $8.04$ \\
\bottomrule
\end{tabular}
\caption{Runtime (in seconds) for solving
$1000$ projection problems onto $B(\rho)$ with $\rho = 10$ and $m = n = d/2$,
see Eq.~\eqref{eq:proj-bipartite-simplex}.
The data is generated \iid from $\Norm(0,1)$.}
\label{tbl:time-ml}
\end{table}

We observe consistent improvement in runtime over the sorting based implementation,
and we use our algorithm to train \reftext{eq:smooth-svm-ml}
in further experiments.

\textbf{SDCA update:
\reftext{eq:lr-ml}.}
Finally, we discuss optimization of the multilabel cross-entropy loss
\reftext{eq:lr-ml}.
We show that the corresponding SDCA update step
is equivalent to a certain entropic projection problem,
which we propose to tackle using the $V(t)$ function introduced above.

\ifprop
\begin{proposition}\label{prop:lr-ml-update}
Let $L$ and $L^*$ in \eqref{eq:primal-dual} be respectively
the \reftext{eq:lr-ml} loss and its conjugate from
Proposition~\ref{prop:lr-ml-conj}.
The dual variables $a \bydef a_i$ corresponding to
the training pair $(x_i,Y_i)$ are updated as
$(a_y)_{y \in Y_i} = -\tfrac{1}{\lambda n} \big( p - \tfrac{1}{k} \big)$ and
$(a_j)_{j \in \bar{Y}_i} = -\tfrac{1}{\lambda n} \bar{p}$,
where
\begin{equation}\label{eq:ce_problem}
\begin{aligned}
(p,\bar{p}) = \argmin_{x \geq 0 , \, y \geq 0 } \;
& \tfrac{\alpha}{2} \normn{x - b}^2 + \inner{x, \log x} + \\[-.75em]
& \tfrac{\alpha}{2} \normn{y - \bar{b}}^2 + \inner{y, \log y} , \\[0.25em]
\st \; & \inner{\ones, x} + \inner{\ones, y} = 1 ,
\end{aligned}
\end{equation}
$k = \abs{Y_i}$,
$\alpha = \tfrac{\inner{x_i, x_i}}{\lambda n}$,
$b = \big( \tfrac{1}{\alpha} q_j + \tfrac{1}{k} \big)_{j \in Y_i}$,
$\bar{b} = \big( \tfrac{1}{\alpha} q_j \big)_{j \in \bar{Y}_i}$,
and $q = \tra{W} x_i - \inner{x_i,x_i} a_i$.

Moreover, the solution of \eqref{eq:ce_problem} is given by
\begin{align*}
p_j &= \tfrac{1}{\alpha} V(\alpha b_j - t) , \; \forall\, j, &
\bar{p}_j &= \tfrac{1}{\alpha} V(\alpha \bar{b}_j - t) , \; \forall\, j,
\end{align*}
where $t$ is computed from
\begin{align}\label{eq:lr-ml-update-eq}
\tsum_{j \in Y_i} V(q_j + \tfrac{\alpha}{k} - t)
+ \tsum_{j \in \bar{Y}_i} V(q_j - t) = \alpha .
\end{align}
\end{proposition}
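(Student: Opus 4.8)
The plan is to follow the template of the SDCA update derivations in Propositions~\ref{prop:smooth-topk-hinge-update} and~\ref{prop:topk-entropy-update}. The dual subproblem is $\max_{a_i}\{ -\tfrac{1}{n}L^*(y_i,-\lambda n a_i) - \tfrac{\lambda}{2}\tr(AK\tra{A})\}$, so I first substitute $v \bydef -\lambda n a_i$ and insert the conjugate $L^*$ from Proposition~\ref{prop:lr-ml-conj}. The natural change of variables $p_y = v_y + \tfrac{1}{k}$ for $y \in Y_i$ and $\bar{p}_j = v_j$ for $j \in \bar{Y}_i$ (with $k = \abs{Y_i}$) turns the entropic conjugate into $\inner{p,\log p} + \inner{\bar{p},\log\bar{p}}$ and converts the effective domain $D_Y$ into the single affine constraint $\inner{\ones, p} + \inner{\ones, \bar{p}} = 1$ together with $p,\bar{p}\geq 0$; reading off $a_i$ from this substitution gives exactly the stated update formulas $(a_y) = -\tfrac{1}{\lambda n}(p - \tfrac{1}{k})$ and $(a_j) = -\tfrac{1}{\lambda n}\bar{p}$.

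Next I would expand the regularizer as $\tr(AK\tra{A}) = K_{ii}\inner{a_i,a_i} + 2\inner{q,a_i} + {\rm const}$ with $q = AK_i - K_{ii}a_i = \tra{W}x_i - \inner{x_i,x_i}a_i$, keeping only the $a_i$-dependent part, and rewrite everything in the $(p,\bar{p})$ variables. Dividing through by $\lambda$ and completing the square in the quadratic-plus-linear terms collapses them into $\tfrac{\alpha}{2}\normn{p - b}^2 + \tfrac{\alpha}{2}\normn{\bar{p} - \bar{b}}^2$, where $\alpha = \inner{x_i,x_i}/(\lambda n)$ and the centers are $b = (\tfrac{1}{\alpha}q_j + \tfrac{1}{k})_{j\in Y_i}$, $\bar{b} = (\tfrac{1}{\alpha}q_j)_{j\in\bar{Y}_i}$ (the $\tfrac{1}{k}$ shift in $b$ comes from the corresponding shift in the change of variables for $y\in Y_i$). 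This is precisely the entropic projection \eqref{eq:ce_problem}, which establishes the first half of the claim.

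For the explicit solution I would form the Lagrangian of \eqref{eq:ce_problem} with a single multiplier $t$ for the equality constraint and multipliers $\mu,\nu \geq 0$ for the nonnegativity constraints. Stationarity in $x_j$ gives $\alpha x_j + \log x_j = \alpha b_j - 1 - t + \mu_j$; since the logarithm forces $x_j > 0$ at the optimum, complementary slackness yields $\mu_j = 0$, and after the reparametrization $t \leftarrow 1 + t - \log\alpha$ this becomes $\alpha x_j + \log(\alpha x_j) = \alpha b_j - t$, i.e. $\alpha x_j = W(e^{\alpha b_j - t}) = V(\alpha b_j - t)$, and likewise $\alpha\bar{p}_j = V(\alpha\bar{b}_j - t)$. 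Substituting $\alpha b_j = q_j + \tfrac{\alpha}{k}$ and $\alpha\bar{b}_j = q_j$ recovers the stated forms of $p_j$ and $\bar{p}_j$, and multiplying the equality constraint $\inner{\ones,p}+\inner{\ones,\bar{p}}=1$ by $\alpha$ produces exactly \eqref{eq:lr-ml-update-eq}.

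The main obstacle I anticipate is purely the bookkeeping: verifying that completing the square reproduces the precise constants $\alpha$, $b$, $\bar{b}$ as defined --- in particular that the $\tfrac{1}{k}$ offset propagates correctly through the substitution into $b$ but not $\bar{b}$. Conceptually there is a welcome simplification over the top-$k$ entropy case of Proposition~\ref{prop:solve-topk-entropy-update}: because $D_Y$ carries no per-coordinate upper bounds, there is no active-set / variable-fixing step and no sets $U$, $M$ --- the Lambert $W$ inversion applies uniformly to every coordinate, so the update reduces to the single scalar root-finding equation \eqref{eq:lr-ml-update-eq}, solvable by the Householder iterations on $V(t)$ discussed in \S~\ref{sec:optimization-background}.
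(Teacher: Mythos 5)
Your proposal is correct and follows essentially the same route as the paper's proof: the same shift of variables $p_y = v_y + \tfrac{1}{k}$, $\bar p_j = v_j$ that absorbs the offset in $D_Y$ into a single affine constraint, the same completion of the square yielding $\alpha$, $b$, $\bar b$, and the same Lagrangian/Lambert-$W$ inversion with the reparametrization $t \leftarrow 1 + t - \log\alpha$ leading to the scalar equation \eqref{eq:lr-ml-update-eq}. Your closing observation that the absence of per-coordinate upper bounds in $D_Y$ removes the active-set iteration is also exactly the simplification the paper exploits.
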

\fi
\ifproof
\begin{proof}
Let $q \bydef \sum_{j \neq i} K_{ij} a_j = A K_i - K_{ii} a_i$
and $C \bydef \frac{1}{\lambda n}$, as before.
We need to solve
\begin{align*}
\max_{a \in \Rb^m} \;\;
& - C L^* \Big( Y_i, -\frac{1}{C} a \Big)
- \tfrac{1}{2} \big( K_{ii} \norms{a} + 2 \inner{q, a} \big) .
\end{align*}
Let $x$ and $y$ be defined as
\begin{align*}
&\begin{cases}
x = \big( -\tfrac{1}{C} a_j + \tfrac{1}{k} \big)_{j \in Y_i} \\
y = \big( -\tfrac{1}{C} a_j \big)_{j \in \bar{Y}_i} ,
\end{cases}
&\Longrightarrow &
&\begin{cases}
a_j = -C \big( x_j - \tfrac{1}{k} \big) , \\
a_j = -C y_j .
\end{cases}
\end{align*}
We have that
\begin{align*}
K_{ii} \norms{a} + 2 \inner{q, a} &=
K_{ii} C^2 \big( \normb{x - \tfrac{1}{k}\ones}^2 + \norms{y} \big) \\
&- 2C \big( \innerb{q_{Y_i}, x - \tfrac{1}{k}\ones}
+ \innern{q_{\bar{Y}_i}, y} \big) .
\end{align*}
Ignoring the constant terms and switching the sign, we obtain
\begin{align*}
\min_{x \geq 0 , \, y \geq 0} \;
& \inner{x, \log x} + \tfrac{1}{2} K_{ii} C \norms{x}
- K_{ii} C \tfrac{1}{k} \inner{\ones, x} - \innerb{q_{Y_i}, x} \\[-.5em]
& \inner{y, \log y} + \tfrac{1}{2} K_{ii} C \norms{y}
- \innerb{q_{\bar{Y}_i}, y} \\
\st \; & \inner{\ones, x} + \inner{\ones, y} = 1 
\end{align*}
Let $\alpha \bydef K_{ii} C$ and define
\begin{align*}
b_j &= \tfrac{1}{\alpha} q_j + \tfrac{1}{k}, \; j \in Y_i, &
\bar{b}_j &= \tfrac{1}{\alpha} q_j , \; j \in \bar{Y}_i.
\end{align*}
The final proximal problem for the update step is given as
\begin{equation*}
\begin{aligned}
\min_{x \geq 0 , \, y \geq 0} \;
& \inner{x, \log x} + \tfrac{\alpha}{2} \norms{x - b}
+ \inner{y, \log y} + \tfrac{\alpha}{2} \norms{y - \bar{b}} \\[-.5em]
& \inner{\ones, x} + \inner{\ones, y} = 1 .
\end{aligned}
\end{equation*}

Next, we discuss how to solve \eqref{eq:ce_problem}.
The Lagrangian for this problem is given by
\begin{gather*}
\Lc(x,y,\lambda,\mu,\nu) = \inner{x, \log x} + \tfrac{\alpha}{2} \norms{x - b}
+ \inner{y, \log y} \\ + \tfrac{\alpha}{2} \norms{y - \bar{b}}
+ \lambda (\inner{\ones, x} + \inner{\ones, y} - 1)
- \inner{\mu, x} - \inner{\nu, y} .
\end{gather*}
Setting the partial derivatives to zero, we obtain
\begin{align*}
\log x_j + \alpha x_j &= \alpha b_j - \lambda - 1 + \mu_j , \\
\log y_j + \alpha y_j &= \alpha \bar{b}_j - \lambda - 1 + \nu_j .
\end{align*}
We $x_j > 0$ and $y_j > 0$,
which implies $\mu_j = 0$ and $\nu_j = 0$.
\begin{align*}
\log (\alpha x_j) + \alpha x_j &= \alpha b_j - \lambda - 1 + \log \alpha , \\
\log (\alpha y_j) + \alpha y_j &= \alpha \bar{b}_j - \lambda - 1 + \log \alpha .
\end{align*}
Let $t \bydef \lambda + 1 - \log \alpha$, we have
\begin{align*}
\alpha x_j &= W(\exp(\alpha b_j - t)) = V(\alpha b_j - t), \\
\alpha y_j &= W(\exp(\alpha \bar{b}_j - t)) = V(\alpha \bar{b}_j - t),
\end{align*}
where $W$ is the Lambert $W$ function.
Let
\begin{align*}
g(t) &= \sum_{j \in Y_i} V(b_j + \tfrac{\alpha}{k} - t)
+ \sum_{j \in \bar{Y}_i} V(b_j - t) - \alpha ,
\end{align*}
then the optimal $t^*$ is the root of $g(t) = 0$,
which corresponds to the constraint
$\inner{\ones, x} + \inner{\ones, y} = 1$.
\end{proof}
\fi

We use a $4$-th order Householder's method \cite{householder1970numerical}
to solve \eqref{eq:lr-ml-update-eq},
similar to the \reftext{eq:topk-entropy} loss above.
Solving the nonlinear equation in $t$ is the main
computational challenge when updating the dual variables.
However, as this procedure does not require iteration over
the index partitions, it is generally faster than optimization
of the \reftext{eq:topk-entropy} loss.

\section{Experiments}%
\label{sec:experiments}
This section provides a broad array of experiments
on $24$ different datasets comparing multiclass and multilabel
performance of the $13$ loss functions from \S~\ref{sec:loss-functions}.
We look at different aspects of empirical evaluation:
performance on synthetic and real data,
use of handcrafted features and the features extracted from a ConvNet,
targeting a specific performance measure
and being generally competitive over a range of metrics.

In \S~\ref{sec:synthetic},
we show on synthetic data that the \reftext{eq:truncated-topk-entropy} loss
targeting specifically the top-$2$ error
outperforms all competing methods by a large margin.
In \S~\ref{sec:multiclass-experiments},
we focus on evaluating top-$k$ performance of multiclass methods
on $11$ real-world benchmark datasets including ImageNet and Places.
In \S~\ref{sec:multilabel-experiments},
we cover multilabel classification in two groups of experiments:
(i) a comparative study following \cite{madjarov2012extensive}
on $10$ popular multilabel datasets;
(ii) image classification on Pascal VOC and MS COCO
in a novel setting contrasting multiclass, top-$k$, and multilabel methods.

\begin{figure}[ht]\small\centering%
\begin{subfigure}[t]{0.49\columnwidth}\centering
\includegraphics[width=\columnwidth]{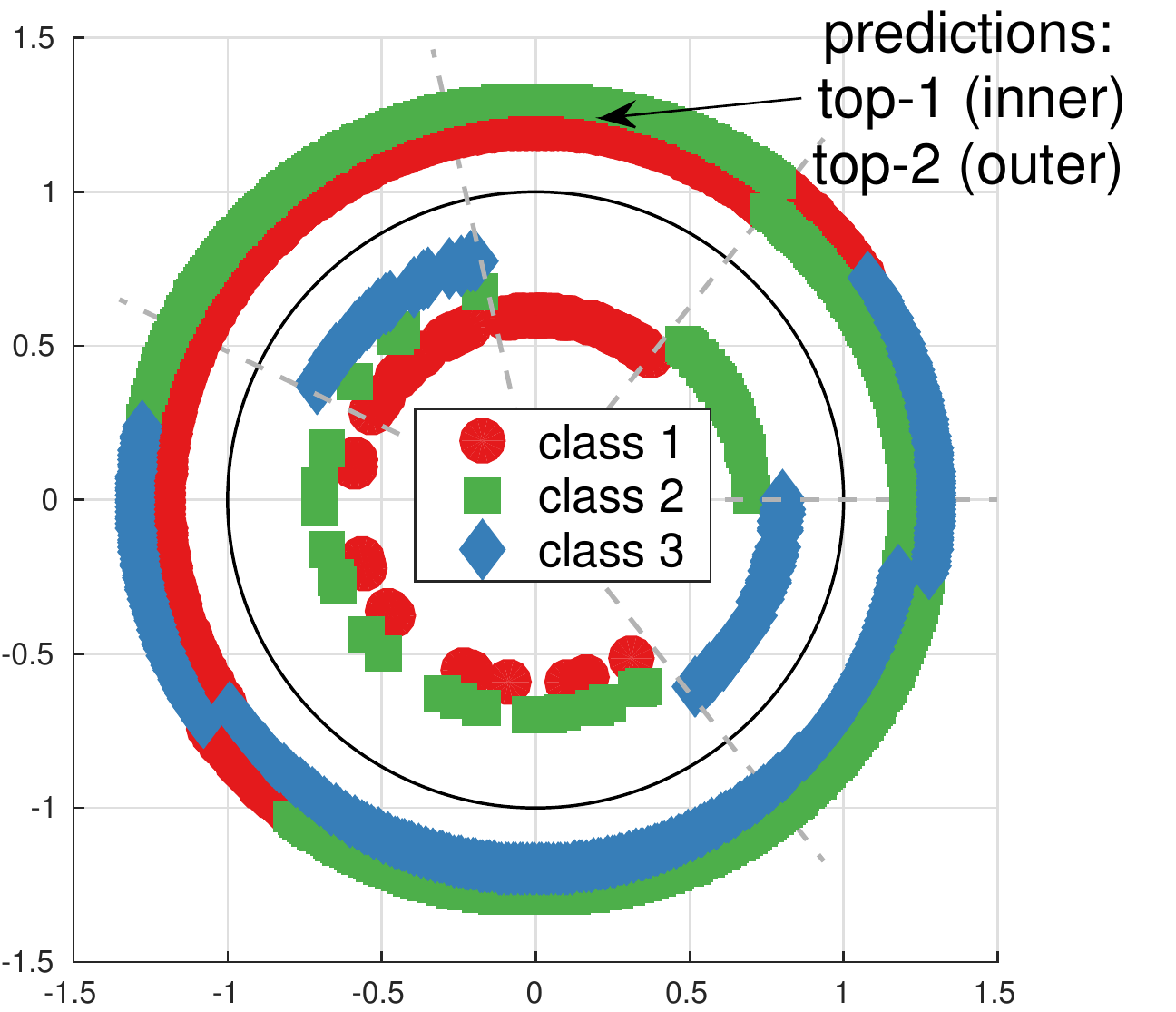}%
\caption{\centering \SvmTopKag{1}{1} test accuracy\newline
(top-$1$ / top-$2$): $65.7\%$ / $81.3\%$}\label{fig:toy:a}
\end{subfigure}
\begin{subfigure}[t]{0.49\columnwidth}\centering
\includegraphics[width=\columnwidth]{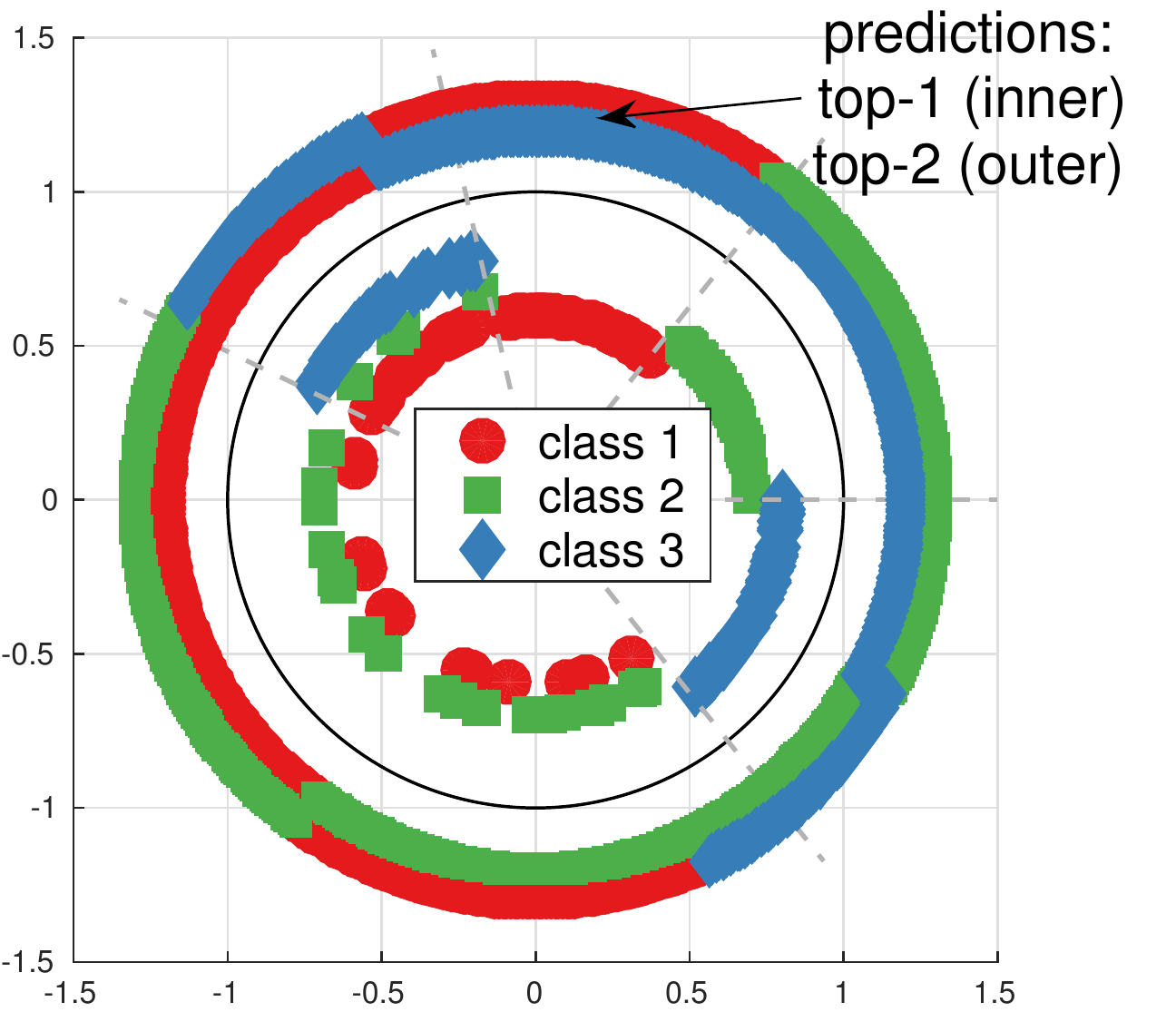}%
\caption{\centering  \LrTopKn{2}\ test accuracy\newline
(top-$1$ / top-$2$): $29.4\%$, $96.1\%$}\label{fig:toy:b}
\end{subfigure}
\caption{%
Synthetic data in $\Rb^2$ (color markers inside of the black circle)
and visualization of top-$1$ and top-$2$ predictions
(resp.\ outside of the circle).
{\bfseries (a)} \SvmTopKag{1}{1} optimizes
the top-$1$ error which increases its top-$2$ error.
{\bfseries (b)} \LrTopKn{2}\
ignores top-$1$ mistakes and optimizes directly the top-$2$ error.
}\label{fig:toy}
\end{figure}

\subsection{Synthetic Example}\label{sec:synthetic}
In this section, we demonstrate in a synthetic experiment 
that our proposed top-$2$ losses outperform
the top-$1$ losses when the aim is optimal top-$2$ performance.
The dataset with three classes is shown in the inner circle of
Figure~\ref{fig:toy}.

\textbf{Sampling scheme.}
First, we generate samples in $[0,7] \subset \Rb$ which is subdivided
into $5$ segments. All segments have unit length,
except for the $4$-th segment which has length $3$.
We sample in each of the $5$ segments
according to the following distribution:
$(0, 1, .4, .3, 0)$ for class $1$;
$(1, 0, .1, .7, 0)$ for class $2$;
$(0, 0, .5, 0, 1)$ for class $3$.
Finally, the data is rescaled to $[0,1]$ and mapped onto the unit circle.

Samples of different classes are plotted next to each other for better visibility
as there is significant class overlap. We visualize
top-$1/2$ predictions with two colored circles outside of the black circle.
We sample $200$/$200$/$200\K$ points for training/validation/test
and tune $C = 1/(\lambda n)$ in the range $2^{-18}$ to $2^{18}$.
Results are shown in Table~\ref{tbl:toy}.

\begin{table}[ht]\scriptsize\centering\setlength{\tabcolsep}{.4em}\intabletrue
\begin{tabular}{l|cc||l|cc}
\multicolumn{6}{c}{\small\textbf{Circle} (synthetic)} \\\toprule
Method & Top-1 & Top-2 & Method & Top-1 & Top-2\\
\midrule
\midrule
\SvmOva & $54.3$ & $85.8$ &
\SvmTopKg{1}{1} & $\mathbf{65.7}$ & $83.9$ \\
\LrOva & $54.7$ & $81.7$ &
\SvmTopKg{2}{0/1} & $54.4$ / $54.5$ & $87.1$ / $87.0$ \\
\SvmMulti & $58.9$ & $89.3$ &
\LrTopK{2} & $54.6$ & $87.6$ \\
\LrMulti & $54.7$ & $81.7$ &
\LrTopKn{2} & $58.4$ & $\mathbf{96.1}$ \\
\bottomrule
\end{tabular}
\caption{Top-$k$ accuracy (\%) on synthetic data.
{\bfseries Left:} Baseline methods.
{\bfseries Right:} Top-$k$ SVM (nonsmooth / smooth)
and top-$k$ softmax losses (convex and nonconvex).
}
\label{tbl:toy}\intablefalse
\end{table}

In each column, we provide the results for the model
(as determined by the hyperparameter $C$) that optimizes
the corresponding top-$k$ accuracy.
First, we note that all top-$1$ baselines perform similar in top-$1$
performance, except for \SvmMulti\ and \SvmTopKg{1}{1}
which show better results.
Next, we see that our top-$2$ losses improve
the top-$2$ accuracy and the improvement is most significant
for the nonconvex \LrTopKn{2} loss,
which is close to the optimal solution for this dataset.
This is because \LrTopKn{2} provides a tight bound on the top-$2$ error
and ignores the top-$1$ errors in the loss.
Unfortunately, similar significant improvements are not observed
on the real-world datasets that we tried.
This might be due to the high dimension of the feature spaces,
which yields well separable problems.

\subsection{Multiclass Experiments}\label{sec:multiclass-experiments}

The goal of this section is to provide an extensive empirical evaluation
of the loss functions from \S~\ref{sec:multiclass-methods}
in terms of top-$k$ performance.
To that end, we compare multiclass and top-$k$ methods
on $11$ datasets ranging in size
($500$ to $2.4\M$ training examples, $10$ to $1000$ classes),
problem domain (vision, non-vision),
and granularity (scene, object, and fine-grained classification).
The detailed statistics is given in Table~\ref{tbl:stats}.

\begin{table}[H]\scriptsize\centering\setlength{\tabcolsep}{.45em}
\begin{tabular}{l|ccc||l|ccc}
\toprule
Dataset & $m$ & $n$ & $d$ & Dataset & $m$ & $n$ & $d$ \\
\midrule
\midrule
ALOI \cite{rocha2014multiclass} &
$1\K$ & $54\K$ & $128$ &
Indoor 67 \cite{quattoni2009recognizing} &
$67$ & $5354$ & $4\K$ \\
Caltech 101 Sil \cite{swersky2012probabilistic} &
$101$ & $4100$ & $784$ &
Letter \cite{hsu2002comparison} &
$26$ & $10.5\K$ & $16$ \\
CUB \cite{wah2011caltech} &
$202$ & $5994$ & $4\K$ &
News 20 \cite{lang1995newsweeder} &
$20$ & $15.9\K$ & $16\K$ \\
Flowers \cite{nilsback2008automated} &
$102$ & $2040$ & $4\K$ &
Places 205 \cite{zhou2014learning} &
$205$ & $2.4\M$ & $4\K$ \\
FMD \cite{sharan2009material} &
$10$ & $500$ & $4\K$ &
SUN 397 \cite{xiao2010sun} &
$397$ & $19.9\K$ & $4\K$ \\
ImageNet 2012 \cite{ILSVRCarxiv14} &
$1\K$ & $1.3\M$ & $4\K$ &
& & & \\
\bottomrule
\end{tabular}
\caption{Statistics of multiclass classification benchmarks
($m$: \# classes, $n$: \# training examples, $d$: \# features).
}
\label{tbl:stats}
\end{table}

Please refer to Table~\ref{tbl:summary} for an overview of the methods
and our naming convention.
Further comparison with other established ranking based losses
can be found in \cite{lapin2015topk}.

\textbf{Solvers.}
We use LibLinear \cite{REF08a} for the one-vs-all baselines
\SvmOva\ and \LrOva; and our code from \cite{lapin2015topk}
for \SvmTopK{k}.
We extended the latter to support the smooth \SvmTopKg{k}{\gamma}
and the \LrTopK{k} losses.
The multiclass baselines \SvmMulti\ and \LrMulti\
correspond respectively to \SvmTopK{1} and \LrTopK{1}.
For the nonconvex \LrTopKn{k},
we use the \LrMulti\ solution as an initial point
and perform gradient descent with line search \cite{nocedal2006numerical}.
We cross-validate hyper-parameters in the range
$10^{-5}$ to $10^3$, extending it when the optimal value is
at the boundary.

\textbf{Features.}
For ALOI, Letter, and News20 datasets,
we use the features provided by the LibSVM \cite{chang2011libsvm} datasets.
For ALOI, we randomly split the data into equally sized 
training and test sets preserving class distributions.
The Letter dataset comes with a separate validation set,
which we use for model selection only.
For News20, we use PCA to reduce dimensionality
of sparse features
from $62060$ to $15478$
preserving all non-singular PCA components\footnote{
Our SDCA-based solvers are designed for dense inputs.}.

For Caltech101 Silhouettes, we use the features and the
train/val/test splits provided by \cite{swersky2012probabilistic}.

For CUB, Flowers, FMD, and ImageNet 2012,
we use MatConvNet \cite{vedaldi15matconvnet} to extract
the outputs of the last fully connected layer of
the VGGNet-16 model \cite{simonyan2014very}.

For Indoor 67, SUN 397, and Places 205,
we perform the same feature extraction,
but use the VGGNet-16 model of \cite{wang2015places}
which was pre-trained on Places 205.

\begin{table*}[tp]\scriptsize\centering\setlength{\tabcolsep}{.5em}\intabletrue
\begin{tabular}{l|cccc|cccc|cccc|cccc}
\multicolumn{17}{c}{\small\textbf{
\qquad\qquad\qquad\qquad\quad
ALOI \qquad\qquad\qquad\qquad\;
Letter \qquad\qquad\qquad\qquad
News 20 \qquad\qquad
Caltech 101 Silhouettes
}} \\
\toprule
Reference: &
\multicolumn{4}{c|}{Top-$1$: $93 \pm 1.2$ \cite{rocha2014multiclass}} &
\multicolumn{4}{c|}{Top-$1$: $97.98$ \cite{hsu2002comparison} (RBF)} &
\multicolumn{4}{c|}{Top-$1$: $86.9$ \cite{rennie2001improving}} &
$62.1$ & $79.6$ & $83.4$ & \cite{swersky2012probabilistic} \\
\midrule
\midrule
Method &
Top-$1$ & Top-$3$ & Top-$5$ & Top-$10$ &
Top-$1$ & Top-$3$ & Top-$5$ & Top-$10$ &
Top-$1$ & Top-$3$ & Top-$5$ & Top-$10$ &
Top-$1$ & Top-$3$ & Top-$5$ & Top-$10$ \\
\midrule
\midrule
\SvmOva &
$82.4$ & $89.5$ & $91.5$ & $93.7$ &
$63.0$ & $82.0$ & $88.1$ & $94.6$ &
$84.3$ & $95.4$ & $97.9$ & $\mathbf{99.5}$ &
$61.8$ & $76.5$ & $80.8$ & $86.6$ \\
\LrOva &
$86.1$ & $93.0$ & $94.8$ & $96.6$ &
$68.1$ & $86.1$ & $90.6$ & $96.2$ &
$84.9$ & $96.3$ & $97.8$ & $99.3$ &
$63.2$ & $80.4$ & $84.4$ & $89.4$ \\
\midrule
\SvmMulti &
$90.0$ & $95.1$ & $96.7$ & $98.1$ &
$76.5$ & $89.2$ & $93.1$ & $97.7$ &
$85.4$ & $94.9$ & $97.2$ & $99.1$ &
$62.8$ & $77.8$ & $82.0$ & $86.9$ \\
\LrMulti &
$89.8$ & $95.7$ & $97.1$ & $98.4$ &
$75.3$ & $90.3$ & $94.3$ & $98.0$ &
$84.5$ & $96.4$ & $98.1$ & $\mathbf{99.5}$ &
$63.2$ & $\mathbf{81.2}$ & $85.1$ & $89.7$ \\
\midrule
\SvmTopK{3} &
$89.2$ & $95.5$ & $97.2$ & $98.4$ &
$74.0$ & $91.0$ & $94.4$ & $97.8$ &
$85.1$ & $96.6$ & $98.2$ & $99.3$ &
$63.4$ & $79.7$ & $83.6$ & $88.3$ \\
\SvmTopK{5} & $87.3$ & $95.6$ & $97.4$ & $98.6$ &
$70.8$ & $\mathbf{91.5}$ & $95.1$ & $98.4$ &
$84.3$ & $96.7$ & $98.4$ & $99.3$ &
$63.3$ & $80.0$ & $84.3$ & $88.7$ \\
\SvmTopK{10} &
$85.0$ & $95.5$ & $97.3$ & $\mathbf{98.7}$ &
$61.6$ & $88.9$ & $96.0$ & $99.6$ &
$82.7$ & $96.5$ & $98.4$ & $99.3$ &
$63.0$ & $80.5$ & $84.6$ & $89.1$ \\
\midrule
\SvmTopKg{1}{1} &
$\mathbf{90.6}$ & $95.5$ & $96.7$ & $98.2$ &
$\mathbf{76.8}$ & $89.9$ & $93.6$ & $97.6$ &
$\mathbf{85.6}$ & $96.3$ & $98.0$ & $99.3$ &
$\mathbf{63.9}$ & $80.3$ & $84.0$ & $89.0$ \\
\SvmTopKg{3}{1} &
$89.6$ & $95.7$ & $97.3$ & $98.4$ &
$74.1$ & $90.9$ & $94.5$ & $97.9$ &
$85.1$ & $96.6$ & $98.4$ & $99.4$ &
$63.3$ & $80.1$ & $84.0$ & $89.2$ \\
\SvmTopKg{5}{1} &
$87.6$ & $95.7$ & $\mathbf{97.5}$ & $98.6$ &
$70.8$ & $\mathbf{91.5}$ & $95.2$ & $98.6$ &
$84.5$ & $96.7$ & $98.4$ & $99.4$ &
$63.3$ & $80.5$ & $84.5$ & $89.1$ \\
\SvmTopKg{10}{1} &
$85.2$ & $95.6$ & $97.4$ & $\mathbf{98.7}$ &
$61.7$ & $89.1$ & $95.9$ & $\mathbf{99.7}$ &
$82.9$ & $96.5$ & $98.4$ & $\mathbf{99.5}$ &
$63.1$ & $80.5$ & $84.8$ & $89.1$ \\
\midrule
\LrTopK{3} &
$89.0$ & $95.8$ & $97.2$ & $98.4$ &
$73.0$ & $90.8$ & $94.9$ & $98.5$ &
$84.7$ & $96.6$ & $98.3$ & $99.4$ &
$63.3$ & $81.1$ & $85.0$ & $89.9$ \\
\LrTopK{5} &
$87.9$ & $95.8$ & $97.2$ & $98.4$ &
$69.7$ & $90.9$ & $95.1$ & $98.8$ &
$84.3$ & $\mathbf{96.8}$ & $\mathbf{98.6}$ & $99.4$ &
$63.2$ & $80.9$ & $85.2$ & $89.9$ \\
\LrTopK{10} &
$86.0$ & $95.6$ & $97.3$ & $98.5$ &
$65.0$ & $89.7$ & $\mathbf{96.2}$ & $99.6$ &
$82.7$ & $96.4$ & $98.5$ & $99.4$ &
$62.5$ & $80.8$ & $\mathbf{85.4}$ & $90.1$ \\
\midrule
\LrTopKn{3} &
$89.3$ & $\mathbf{95.9}$ & $97.3$ & $98.5$ &
$63.6$ & $91.1$ & $95.6$ & $98.8$ &
$83.4$ & $96.4$ & $98.3$ & $99.4$ &
$60.7$ & $81.1$ & $85.2$ & $\mathbf{90.2}$ \\
\LrTopKn{5} &
$87.9$ & $95.7$ & $97.3$ & $98.6$ &
$50.3$ & $87.7$ & $96.1$ & $99.4$ &
$83.2$ & $96.0$ & $98.2$ & $99.4$ &
$58.3$ & $79.8$ & $85.2$ & $\mathbf{90.2}$ \\
\LrTopKn{10} &
$85.2$ & $94.8$ & $97.1$ & $98.5$ &
$46.5$ & $80.9$ & $93.7$ & $99.6$ &
$82.9$ & $95.7$ & $97.9$ & $99.4$ &
$51.9$ & $78.4$ & $84.6$ & $\mathbf{90.2}$ \\
\bottomrule
\end{tabular}
\\[.4em]\setlength{\tabcolsep}{.55em}
\begin{tabular}{l|cccc|cccc|cccc|ccc}
\multicolumn{16}{c}{\small\textbf{
\qquad\qquad\qquad\quad
Indoor 67 \qquad\qquad\qquad\quad
CUB \qquad\qquad\qquad\qquad\;\;
Flowers \qquad\qquad\qquad\quad
FMD
}} \\
\toprule
Reference (Top-$1$): &
\multicolumn{4}{c|}{$82.0$ \cite{wang2015places}} &
\multicolumn{4}{c|}{$62.8$ \cite{cimpoi15deep} / $76.37$ \cite{zhang2014part}} &
\multicolumn{4}{c|}{$86.8$ \cite{razavian2014cnn}} &
\multicolumn{3}{c}{$77.4$ \cite{cimpoi15deep} / $82.4$ \cite{cimpoi15deep}} \\
\midrule
\midrule
Method &
Top-$1$ & Top-$3$ & Top-$5$ & Top-$10$ &
Top-$1$ & Top-$3$ & Top-$5$ & Top-$10$ &
Top-$1$ & Top-$3$ & Top-$5$ & Top-$10$ &
Top-$1$ & Top-$3$ & Top-$5$ \\
\midrule
\midrule
\SvmOva &
$81.9$ & $94.3$ & $96.5$ & $98.0$ &
$60.6$ & $77.1$ & $83.4$ & $89.9$ &
$82.0$ & $91.7$ & $94.3$ & $96.8$ &
$77.4$ & $92.4$ & $96.4$ \\
\LrOva &
$82.0$ & $94.9$ & $97.2$ & $98.7$ &
$62.3$ & $80.5$ & $87.4$ & $93.5$ &
$82.6$ & $92.2$ & $94.8$ & $97.6$ &
$79.6$ & $94.2$ & $\mathbf{98.2}$ \\
\midrule
\SvmMulti &
$82.5$ & $\mathbf{95.4}$ & $97.3$ & $99.1$ &
$61.0$ & $79.2$ & $85.7$ & $92.3$ &
$82.5$ & $92.2$ & $94.8$ & $96.4$ &
$77.6$ & $93.8$ & $97.2$ \\
\LrMulti &
$82.4$ & $95.2$ & $\mathbf{98.0}$ & $99.1$ &
$62.3$ & $81.7$ & $87.9$ & $\mathbf{93.9}$ &
$82.9$ & $92.4$ & $95.1$ & $97.8$ &
$79.0$ & $94.6$ & $97.8$ \\
\midrule
\SvmTopK{3} &
$81.6$ & $95.1$ & $97.7$ & $99.0$ &
$61.3$ & $80.4$ & $86.3$ & $92.5$ &
$81.9$ & $92.2$ & $95.0$ & $96.1$ &
$78.8$ & $94.6$ & $97.8$ \\
\SvmTopK{5} &
$79.9$ & $95.0$ & $97.7$ & $99.0$ &
$60.9$ & $81.2$ & $87.2$ & $92.9$ &
$81.7$ & $92.4$ & $95.1$ & $97.8$ &
$78.4$ & $94.4$ & $97.6$ \\
\SvmTopK{10} &
$78.4$ & $95.1$ & $97.4$ & $99.0$ &
$59.6$ & $81.3$ & $87.7$ & $93.4$ &
$80.5$ & $91.9$ & $95.1$ & $97.7$ \\
\midrule
\SvmTopKg{1}{1} &
$\mathbf{82.6}$ & $95.2$ & $97.6$ & $99.0$ &
$61.9$ & $80.2$ & $86.9$ & $93.1$ &
$\mathbf{83.0}$ & $92.4$ & $95.1$ & $97.6$ &
$78.6$ & $93.8$ & $98.0$ \\
\SvmTopKg{3}{1} &
$81.6$ & $95.1$ & $97.8$ & $99.0$ &
$61.9$ & $81.1$ & $86.6$ & $93.2$ &
$82.5$ & $92.3$ & $95.2$ & $97.7$ &
$79.0$ & $94.4$ & $98.0$ \\
\SvmTopKg{5}{1} &
$80.4$ & $95.1$ & $97.8$ & $99.1$ &
$61.3$ & $81.3$ & $87.4$ & $92.9$ &
$82.0$ & $\mathbf{92.5}$ & $95.1$ & $97.8$ &
$79.4$ & $94.4$ & $97.6$ \\
\SvmTopKg{10}{1} &
$78.3$ & $95.1$ & $97.5$ & $99.0$ &
$59.8$ & $81.4$ & $87.8$ & $93.4$ &
$80.6$ & $91.9$ & $95.1$ & $97.7$ \\
\midrule
\LrTopK{3} &
$81.4$ & $\mathbf{95.4}$ & $97.6$ & $\mathbf{99.2}$ &
$\mathbf{62.5}$ & $81.8$ & $87.9$ & $\mathbf{93.9}$ &
$82.5$ & $92.0$ & $\mathbf{95.3}$ & $97.8$ &
$\mathbf{79.8}$ & $94.8$ & $98.0$ \\
\LrTopK{5} &
$80.3$ & $95.0$ & $97.7$ & $99.0$ &
$62.0$ & $\mathbf{81.9}$ & $88.1$ & $93.8$ &
$82.1$ & $92.2$ & $95.1$ & $\mathbf{97.9}$ &
$79.4$ & $94.4$ & $98.0$ \\
\LrTopK{10} &
$79.2$ & $95.1$ & $97.6$ & $99.0$ &
$61.2$ & $81.6$ & $\mathbf{88.2}$ & $93.8$ &
$80.9$ & $92.1$ & $95.0$ & $97.7$ \\
\midrule
\LrTopKn{3} &
$79.8$ & $95.0$ & $97.5$ & $99.1$ &
$62.0$ & $81.4$ & $87.6$ & $93.4$ &
$82.1$ & $92.2$ & $95.2$ & $97.6$ &
$78.4$ & $\mathbf{95.4}$ & $\mathbf{98.2}$ \\
\LrTopKn{5} &
$76.4$ & $94.3$ & $97.3$ & $99.0$ &
$61.4$ & $81.2$ & $87.7$ & $93.7$ &
$81.4$ & $92.0$ & $95.0$ & $97.7$ &
$77.2$ & $94.0$ & $97.8$ \\
\LrTopKn{10} &
$72.6$ & $92.8$ & $97.1$ & $98.9$ &
$59.7$ & $80.7$ & $87.2$ & $93.4$ &
$77.9$ & $91.1$ & $94.3$ & $97.3$ \\
\bottomrule
\end{tabular}
\\[.5em]\setlength{\tabcolsep}{.7em}
\begin{tabular}{l|cccc|cccc|cccc}
\multicolumn{13}{c}{\small{
\qquad\qquad\qquad\qquad\qquad
\textbf{SUN 397} (10 splits) \qquad\qquad\qquad\qquad\quad
\textbf{Places 205} (val) \qquad\qquad
\textbf{ImageNet 2012} (val)
}} \\
\toprule
Reference: &
\multicolumn{4}{c|}{Top-$1$: $66.9$ \cite{wang2015places}} &
$60.6$ & & $88.5$ & \cite{wang2015places} &
$76.3$ & & $93.2$ & \cite{simonyan2014very} \\
\midrule
\midrule
Method &
Top-$1$ & Top-$3$ & Top-$5$ & Top-$10$ &
Top-$1$ & Top-$3$ & Top-$5$ & Top-$10$ &
Top-$1$ & Top-$3$ & Top-$5$ & Top-$10$ \\
\midrule
\midrule
\SvmMulti &
$65.8 \pm 0.1$ & $85.1 \pm 0.2$ & $90.8 \pm 0.1$ & $95.3 \pm 0.1$ &
$58.4$ & $78.7$ & $84.7$ & $89.9$ &
$68.3$ & $82.9$ & $87.0$ & $91.1$ \\
\LrMulti &
$\mathbf{67.5 \pm 0.1}$ & $\mathbf{87.7 \pm 0.2}$ & $\mathbf{92.9 \pm 0.1}$ & $\mathbf{96.8 \pm 0.1}$ &
$59.0$ & $\mathbf{80.6}$ & $\mathbf{87.6}$ & $\mathbf{94.3}$ &
$67.2$ & $83.2$ & $87.7$ & $92.2$ \\
\midrule
\SvmTopK{3} &
$66.5 \pm 0.2$ & $86.5 \pm 0.1$ & $91.8 \pm 0.1$ & $95.9 \pm 0.1$ &
$58.6$ & $80.3$ & $87.3$ & $93.3$ &
$68.2$ & $84.0$ & $88.1$ & $92.1$ \\
\SvmTopK{5} &
$66.3 \pm 0.2$ & $87.0 \pm 0.2$ & $92.2 \pm 0.2$ & $96.3 \pm 0.1$ &
$58.4$ & $80.5$ & $87.4$ & $94.0$ &
$67.8$ & $\mathbf{84.1}$ & $88.2$ & $92.4$ \\
\SvmTopK{10} &
$64.8 \pm 0.3$ & $87.2 \pm 0.2$ & $92.6 \pm 0.1$ & $96.6 \pm 0.1$ &
$58.0$ & $80.4$ & $87.4$ & $\mathbf{94.3}$ &
$67.0$ & $83.8$ & $88.3$ & $\mathbf{92.6}$ \\
\midrule
\SvmTopKg{1}{1} &
$67.4 \pm 0.2$ & $86.8 \pm 0.1$ & $92.0 \pm 0.1$ & $96.1 \pm 0.1$ &
$\mathbf{59.2}$ & $80.5$ & $87.3$ & $93.8$ &
$\mathbf{68.7}$ & $83.9$ & $88.0$ & $92.1$ \\
\SvmTopKg{3}{1} &
$67.0 \pm 0.2$ & $87.0 \pm 0.1$ & $92.2 \pm 0.1$ & $96.2 \pm 0.0$ &
$58.9$ & $80.5$ & $\mathbf{87.6}$ & $93.9$ &
$68.2$ & $\mathbf{84.1}$ & $88.2$ & $92.3$ \\
\SvmTopKg{5}{1} &
$66.5 \pm 0.2$ & $87.2 \pm 0.1$ & $92.4 \pm 0.2$ & $96.3 \pm 0.0$ &
$58.5$ & $80.5$ & $87.5$ & $94.1$ &
$67.9$ & $\mathbf{84.1}$ & $\mathbf{88.4}$ & $92.5$ \\
\SvmTopKg{10}{1} &
$64.9 \pm 0.3$ & $87.3 \pm 0.2$ & $92.6 \pm 0.2$ & $96.6 \pm 0.1$ &
$58.0$ & $80.4$ & $87.5$ & $\mathbf{94.3}$ &
$67.1$ & $83.8$ & $88.3$ & $\mathbf{92.6}$ \\
\midrule
\LrTopK{3} &
$67.2 \pm 0.2$ & $\mathbf{87.7 \pm 0.2}$ & $\mathbf{92.9 \pm 0.1}$ & $\mathbf{96.8 \pm 0.1}$ &
$58.7$ & $\mathbf{80.6}$ & $\mathbf{87.6}$ & $94.2$ &
$66.8$ & $83.1$ & $87.8$ & $92.2$ \\
\LrTopK{5} &
$66.6 \pm 0.3$ & $\mathbf{87.7 \pm 0.2}$ & $\mathbf{92.9 \pm 0.1}$ & $\mathbf{96.8 \pm 0.1}$ &
$58.1$ & $80.4$ & $87.4$ & $94.2$ &
$66.5$ & $83.0$ & $87.7$ & $92.2$ \\
\LrTopK{10} &
$65.2 \pm 0.3$ & $87.4 \pm 0.1$ & $92.8 \pm 0.1$ & $\mathbf{96.8 \pm 0.1}$ &
$57.0$ & $80.0$ & $87.2$ & $94.1$ &
$65.8$ & $82.8$ & $87.6$ & $92.1$ \\
\bottomrule
\end{tabular}
\caption{Top-$k$ accuracy (\%) on various datasets.
The first line is a reference performance on each dataset
and reports top-$1$ accuracy except when the numbers are aligned with Top-$k$.
We compare the one-vs-all and multiclass baselines
with the \SvmTopKa{k} \cite{lapin2015topk} as well as the proposed
smooth \SvmTopKag{k}{\gamma}, \LrTopK{k}, and the nonconvex \LrTopKn{k}.
}
\label{tbl:all}\intablefalse
\end{table*}

\textbf{Discussion.}
The results are given in Table~\ref{tbl:all},
and we can make several interesting observations.
First, while the OVA schemes perform quite similar to 
the multiclass approaches
(OVA logistic regression \V softmax, OVA SVM \V multiclass SVM),
which confirms earlier observations in \cite{akata2014good,rifkin2004defense},
the OVA schemes performed worse on ALOI and Letter.
Thus, we generally recommend the multiclass losses
instead of the OVA schemes.

Comparing the softmax loss and multiclass SVM,
we see that there is no clear winner in top-$1$ performance,
but softmax consistently outperforms multiclass SVM
in top-$k$ performance for $k>1$.
This might be due
to the strong property of softmax being top-$k$ calibrated for all $k$.
Note that this trend is uniform across all datasets,
in particular, also for the ones where the features
are not coming from a ConvNet.
Both the smooth top-$k$ SVM and the top-$k$ entropy losses 
perform slightly better than softmax if one
compares specific top-$k$ errors.
However, the good performance of the truncated top-$k$ entropy loss
on synthetic data did not transfer to the real world datasets.

\begin{table}[ht]\scriptsize\centering\setlength{\tabcolsep}{1em}\intabletrue
\begin{tabular}{l|cccc}
\multicolumn{5}{c}{\small\textbf{Places 205} (val)} \\
\toprule
Method &
Top-$1$ & Top-$3$ & Top-$5$ & Top-$10$ \\
\midrule
\midrule
\LrMulti &
$59.97$ & $81.39$ & $88.17$ & $94.59$ \\
\midrule
\SvmTopKg{3}{1}\ (FT) &
$60.73$ & $82.09$ & $88.58$ & $94.56$ \\
\SvmTopKg{5}{1}\ (FT) &
$\mathbf{60.88}$ & $\mathbf{82.18}$ & $\mathbf{88.78}$ & $94.75$ \\
\midrule
\LrTopKn{3}\ (FT) &
$60.51$ & $81.86$ & $88.69$ & $94.78$ \\
\LrTopKn{5}\ (FT) &
$60.48$ & $81.66$ & $88.66$ & $94.80$ \\
\midrule
\LrMulti\ (FT) &
$60.73$ & $82.07$ & $88.71$ & $\mathbf{94.82}$ \\
\bottomrule
\end{tabular}

\vspace{1em}
\begin{tabular}{l|cccc}
\multicolumn{5}{c}{\small\textbf{ImageNet 2012} (val)} \\
\toprule
Method &
Top-$1$ & Top-$3$ & Top-$5$ & Top-$10$ \\
\midrule
\midrule
\LrMulti &
$68.60$ & $84.29$ & $88.66$ & $92.83$ \\
\midrule
\SvmTopKg{3}{1}\ (FT) &
$71.66$ & $86.63$ & $90.55$ & $94.17$ \\
\SvmTopKg{5}{1}\ (FT) &
$71.60$ & $86.67$ & $90.56$ & $94.23$ \\
\midrule
\LrTopKn{3}\ (FT) &
$71.41$ & $86.80$ & $90.77$ & $94.35$ \\
\LrTopKn{5}\ (FT) &
$71.20$ & $86.57$ & $90.75$ & $\mathbf{94.38}$ \\
\midrule
\LrMulti\ (FT) &
$\mathbf{72.11}$ & $\mathbf{87.08}$ & $\mathbf{90.88}$ & $\mathbf{94.38}$ \\
\bottomrule
\end{tabular}
\caption{Top-$k$ accuracy (\%), as reported by Caffe \cite{jia2014caffe},
on large scale datasets after fine-tuning (FT)
for approximately one epoch on Places and 3 epochs on ImageNet.
The first line (\LrMulti) is the reference performance w/o fine-tuning.
}
\label{tbl:finetune}\intablefalse
\end{table}

\textbf{Fine-tuning experiments.}
We also performed a number of fine-tuning experiments
where the original network was trained further for $1$-$3$ epochs
with the smooth top-$k$ hinge and
the truncated top-$k$ entropy losses\footnote{
Code: \url{https://github.com/mlapin/caffe/tree/topk}}.
The motivation was to see if the full end-to-end training would be more
beneficial compared to training just the classifier.
Results are reported in Table~\ref{tbl:finetune}.
We should note that the setting is now slightly different:
there is no feature extraction step with the MatConvNet
and there is a non-regularized bias term in Caffe \cite{jia2014caffe}.
We see that the top-$k$ specific losses are able to improve
the performance compared to the reference model,
and that, on Places 205,
the smooth \SvmTopKg{5}{1}\ loss achieves the best top-$1..5$ performance.
However, in this set of experiments,
we also observed similar improvements
when fine-tuning with the standard softmax loss,
which achieves the best performance on ImageNet 2012.
Further training beyond $3$ epochs did not change the results
significantly.

\begin{table*}[tp]\scriptsize\centering\setlength{\tabcolsep}{.275em}\intabletrue
\begin{tabular}{l|cc|cc|ccc||cc|cc|ccc||cc|cc|ccc}
\multicolumn{1}{c}{} &
\multicolumn{7}{c}{\small\textbf{bibtex}} &
\multicolumn{7}{c}{\small\textbf{bookmarks}} &
\multicolumn{7}{c}{\small\textbf{corel5k}}
\\\toprule
Method &
\RLoss & \HLoss & \Acc & \SAcc & \Fmicro & \Fmacro & \Finst &
\RLoss & \HLoss & \Acc & \SAcc & \Fmicro & \Fmacro & \Finst &
\RLoss & \HLoss & \Acc & \SAcc & \Fmicro & \Fmacro & \Finst \\
\midrule
\midrule
\RFPCT\ \cite{madjarov2012extensive} &
$0.093$ & $0.013$ & $16.6$ & $9.8$ & $23.0$ & $5.5$ & $21.2$ &
$0.104$ & $0.009$ & $20.4$ & $18.9$ & $23.6$ & $10.1$ & $21.3$ &
$0.117$ & $\mathbf{0.009}$ & $0.9$ & $0.0$ & $1.8$ & $0.4$ & $1.4$ \\
\HOMER\ \cite{madjarov2012extensive} &
$0.255$ & $0.014$ & $33.0$ & $16.5$ & $42.9$ & $26.6$ & $42.6$ &
- & - & - & - & - & - & - &
$0.352$ & $0.012$ & $17.9$ & $0.2$ & $27.5$ & $3.6$ & $28.0$ \\
\BR\ \cite{madjarov2012extensive} {\tiny(RBF)} &
$0.068$ & $\mathbf{0.012}$ & $34.8$ & $\mathbf{19.4}$ & $45.7$ & $30.7$ & $43.3$ &
- & - & - & - & - & - & - &
$0.117$ & $0.017$ & $3.0$ & $0.0$ & $5.9$ & $2.1$ & $4.7$ \\
\midrule
\LrML\ &
$\mathbf{0.053}$ & $0.013$ & $30.9$ & $14.2$ & $42.5$ & $35.0$ & $38.8$ &
$0.079$ & $0.009$ & $22.5$ & $16.5$ & $29.5$ & $21.7$ & $27.0$ &
$\mathbf{0.101}$ & $\mathbf{0.009}$ & $17.5$ & $0.0$ & $27.1$ & $6.4$ & $27.3$ \\
\SvmML\ &
$0.094$ & $0.013$ & $28.6$ & $13.2$ & $40.6$ & $31.5$ & $36.1$ &
$0.140$ & $0.009$ & $24.0$ & $19.8$ & $27.5$ & $18.4$ & $26.7$ &
$0.205$ & $\mathbf{0.009}$ & $9.9$ & $0.8$ & $18.5$ & $5.0$ & $17.5$ \\
\SvmMLg{\gamma} &
$0.073$ & $0.013$ & $31.4$ & $16.2$ & $43.5$ & $33.5$ & $39.3$ &
$0.091$ & $0.009$ & $28.0$ & $20.7$ & $34.0$ & $22.6$ & $32.4$ &
$0.174$ & $\mathbf{0.009}$ & $18.8$ & $1.0$ & $29.4$ & $5.9$ & $26.3$ \\
\midrule
\LrML\ {\tiny(RBF)} &
$0.054$ & $0.013$ & $33.8$ & $14.6$ & $45.4$ & $31.8$ & $42.0$ &
$\mathbf{0.072}$ & $0.009$ & $25.1$ & $19.7$ & $33.1$ & $24.6$ & $29.0$ &
$\mathbf{0.101}$ & $\mathbf{0.009}$ & $18.0$ & $1.0$ & $28.5$ & $6.0$ & $27.8$ \\
\SvmML\ {\tiny(RBF)} &
$0.067$ & $0.013$ & $36.2$ & $19.0$ & $\mathbf{46.5}$ & $37.1$ & $\mathbf{44.6}$ &
$0.103$ & $0.009$ & $30.3$ & $22.9$ & $35.8$ & $26.0$ & $34.9$ &
$0.107$ & $\mathbf{0.009}$ & $18.1$ & $\mathbf{1.8}$ & $28.8$ & $6.7$ & $27.2$ \\
\SvmMLg{\gamma} {\tiny(RBF)} &
$0.067$ & $\mathbf{0.012}$ & $\mathbf{36.6}$ & $18.4$ & $\mathbf{46.5}$ & $\mathbf{37.2}$ & $\mathbf{44.6}$ &
$0.079$ & $\mathbf{0.008}$ & $\mathbf{31.8}$ & $\mathbf{23.0}$ & $\mathbf{38.0}$ & $\mathbf{28.0}$ & $\mathbf{36.7}$ &
$0.105$ & $\mathbf{0.009}$ & $\mathbf{19.3}$ & $\mathbf{1.8}$ & $\mathbf{30.2}$ & $\mathbf{6.8}$ & $\mathbf{28.8}$ \\
\bottomrule

\multicolumn{22}{c}{}\\
\multicolumn{1}{c}{} &
\multicolumn{7}{c}{\small\textbf{delicious}} &
\multicolumn{7}{c}{\small\textbf{enron}} &
\multicolumn{7}{c}{\small\textbf{mediamill}}
\\\toprule
Method &
\RLoss & \HLoss & \Acc & \SAcc & \Fmicro & \Fmacro & \Finst &
\RLoss & \HLoss & \Acc & \SAcc & \Fmicro & \Fmacro & \Finst &
\RLoss & \HLoss & \Acc & \SAcc & \Fmicro & \Fmacro & \Finst \\
\midrule
\midrule
\RFPCT\ \cite{madjarov2012extensive} &
$0.106$ & $\mathbf{0.018}$ & $14.6$ & $0.7$ & $24.8$ & $8.3$ & $24.4$ &
$0.079$ & $0.046$ & $41.6$ & $13.1$ & $53.7$ & $12.2$ & $55.2$ &
$0.047$ & $\mathbf{0.029}$ & $44.1$ & $12.2$ & $56.3$ & $11.2$ & $\mathbf{58.9}$ \\
\HOMER\ \cite{madjarov2012extensive} &
$0.379$ & $0.022$ & $20.7$ & $0.1$ & $33.9$ & $10.3$ & $34.3$ &
$0.183$ & $0.051$ & $47.8$ & $14.5$ & $59.1$ & $16.7$ & $\mathbf{61.3}$ &
$0.177$ & $0.038$ & $41.3$ & $5.3$ & $55.3$ & $7.3$ & $57.9$ \\
\BR\ \cite{madjarov2012extensive} {\tiny(RBF)} &
$0.114$ & $\mathbf{0.018}$ & $13.6$ & $0.4$ & $23.4$ & $9.6$ & $23.0$ &
$0.084$ & $\mathbf{0.045}$ & $44.6$ & $14.9$ & $56.4$ & $14.3$ & $58.2$ &
$0.061$ & $0.032$ & $40.3$ & $8.0$ & $53.3$ & $5.6$ & $55.7$ \\
\midrule
\LrML\ &
$0.123$ & $0.019$ & $11.6$ & $0.3$ & $21.4$ & $10.9$ & $19.5$ &
$0.074$ & $0.055$ & $38.5$ & $7.8$ & $53.0$ & $21.9$ & $50.4$ &
$\mathbf{0.042}$ & $0.033$ & $41.2$ & $7.8$ & $54.8$ & $17.1$ & $54.4$ \\
\SvmML\ &
$0.184$ & $0.019$ & $6.9$ & $0.2$ & $11.1$ & $6.6$ & $12.2$ &
$0.136$ & $0.055$ & $38.9$ & $10.5$ & $50.3$ & $21.6$ & $50.9$ &
$0.102$ & $0.034$ & $35.6$ & $7.9$ & $47.2$ & $16.5$ & $49.2$ \\
\SvmMLg{\gamma} &
$0.163$ & $0.019$ & $14.9$ & $0.3$ & $27.1$ & $12.1$ & $23.6$ &
$0.095$ & $0.050$ & $42.8$ & $10.5$ & $56.2$ & $23.2$ & $54.9$ &
$0.058$ & $0.032$ & $41.8$ & $8.4$ & $56.1$ & $17.7$ & $54.7$ \\
\midrule
\LrML\ {\tiny(RBF)} &
$\mathbf{0.096}$ & $0.019$ & $22.1$ & $1.5$ & $37.2$ & $12.4$ & $34.5$ &
$\mathbf{0.070}$ & $0.047$ & $46.3$ & $13.0$ & $58.4$ & $20.3$ & $57.9$ &
$\mathbf{0.042}$ & $0.033$ & $42.0$ & $10.0$ & $56.2$ & $21.8$ & $53.3$ \\
\SvmML\ {\tiny(RBF)} &
$0.137$ & $\mathbf{0.018}$ & $17.8$ & $\mathbf{1.7}$ & $32.4$ & $16.7$ & $26.4$ &
$0.090$ & $0.047$ & $46.6$ & $15.0$ & $58.1$ & $26.8$ & $58.4$ &
$0.072$ & $0.031$ & $43.3$ & $11.8$ & $57.6$ & $25.9$ & $55.3$ \\
\SvmMLg{\gamma} {\tiny(RBF)} &
$0.099$ & $\mathbf{0.018}$ & $\mathbf{23.1}$ & $1.6$ & $\mathbf{39.0}$ & $\mathbf{18.2}$ & $\mathbf{35.7}$ &
$0.076$ & $0.047$ & $\mathbf{48.6}$ & $\mathbf{16.1}$ & $\mathbf{59.5}$ & $\mathbf{26.9}$ & $59.9$ &
$0.046$ & $\mathbf{0.029}$ & $\mathbf{46.6}$ & $\mathbf{13.3}$ & $\mathbf{61.0}$ & $\mathbf{27.1}$ & $58.7$ \\
\bottomrule

\multicolumn{22}{c}{}\\
\multicolumn{1}{c}{} &
\multicolumn{7}{c}{\small\textbf{medical}} &
\multicolumn{7}{c}{\small\textbf{scene}} &
\multicolumn{7}{c}{\small\textbf{yeast}}
\\\toprule
Method &
\RLoss & \HLoss & \Acc & \SAcc & \Fmicro & \Fmacro & \Finst &
\RLoss & \HLoss & \Acc & \SAcc & \Fmicro & \Fmacro & \Finst &
\RLoss & \HLoss & \Acc & \SAcc & \Fmicro & \Fmacro & \Finst \\
\midrule
\midrule
\RFPCT\ \cite{madjarov2012extensive} &
$0.024$ & $0.014$ & $59.1$ & $53.8$ & $69.3$ & $20.7$ & $61.6$ &
$0.072$ & $0.094$ & $54.1$ & $51.8$ & $66.9$ & $65.8$ & $55.3$ &
$0.167$ & $0.197$ & $47.8$ & $15.2$ & $61.7$ & $32.2$ & $61.4$ \\
\HOMER\ \cite{madjarov2012extensive} &
$0.090$ & $\mathbf{0.012}$ & $71.3$ & $61.0$ & $77.3$ & $28.2$ & $76.1$ &
$0.119$ & $0.082$ & $\mathbf{71.7}$ & $\mathbf{66.1}$ & $\mathbf{76.4}$ & $\mathbf{76.8}$ & $74.5$ &
$0.205$ & $0.207$ & $55.9$ & $21.3$ & $67.3$ & $44.7$ & $\mathbf{68.7}$ \\
\BR\ \cite{madjarov2012extensive} {\tiny(RBF)} &
$\mathbf{0.021}$ & $0.077$ & $20.6$ & $0.0$ & $34.3$ & $36.1$ & $32.8$ &
$\mathbf{0.060}$ & $\mathbf{0.079}$ & $68.9$ & $63.9$ & $76.1$ & $76.5$ & $71.4$ &
$0.164$ & $0.190$ & $52.0$ & $19.0$ & $65.2$ & $39.2$ & $65.0$ \\
\midrule
\LrML\ &
$0.024$ & $0.013$ & $68.7$ & $56.9$ & $76.2$ & $35.1$ & $75.2$ &
$0.081$ & $0.120$ & $58.3$ & $39.4$ & $67.2$ & $68.4$ & $66.3$ &
$0.352$ & $0.264$ & $36.0$ & $8.4$ & $48.3$ & $44.4$ & $48.0$ \\
\SvmML\ &
$0.026$ & $0.013$ & $72.9$ & $\mathbf{62.8}$ & $78.4$ & $34.8$ & $\mathbf{77.5}$ &
$0.082$ & $0.114$ & $60.7$ & $46.0$ & $68.7$ & $69.5$ & $67.7$ &
$0.424$ & $0.280$ & $31.3$ & $5.8$ & $46.7$ & $42.9$ & $45.6$ \\
\SvmMLg{\gamma} &
$0.023$ & $\mathbf{0.012}$ & $\mathbf{73.1}$ & $60.2$ & $78.7$ & $\mathbf{36.7}$ & $77.4$ &
$0.081$ & $0.114$ & $60.4$ & $44.1$ & $68.7$ & $69.5$ & $67.6$ &
$0.366$ & $0.261$ & $35.6$ & $9.5$ & $46.6$ & $44.7$ & $47.3$ \\
\midrule
\LrML\ {\tiny(RBF)} &
$0.031$ & $0.016$ & $64.9$ & $46.5$ & $72.6$ & $28.0$ & $71.4$ &
$0.068$ & $0.096$ & $63.6$ & $54.3$ & $72.0$ & $73.0$ & $70.5$ &
$0.160$ & $0.193$ & $55.1$ & $19.0$ & $67.5$ & $47.1$ & $66.9$ \\
\SvmML\ {\tiny(RBF)} &
$0.027$ & $\mathbf{0.012}$ & $72.5$ & $61.7$ & $\mathbf{78.9}$ & $36.6$ & $76.6$ &
$0.069$ & $0.088$ & $69.1$ & $58.0$ & $75.1$ & $75.9$ & $\mathbf{75.4}$ &
$0.159$ & $0.188$ & $\mathbf{56.2}$ & $\mathbf{21.6}$ & $68.2$ & $48.1$ & $66.7$ \\
\SvmMLg{\gamma} {\tiny(RBF)} &
$0.027$ & $\mathbf{0.012}$ & $72.5$ & $61.6$ & $\mathbf{78.9}$ & $36.6$ & $77.2$ &
$0.064$ & $0.088$ & $68.0$ & $58.0$ & $74.7$ & $75.2$ & $75.0$ &
$\mathbf{0.157}$ & $\mathbf{0.187}$ & $\mathbf{56.2}$ & $19.8$ & $\mathbf{68.4}$ & $\mathbf{48.2}$ & $67.0$ \\
\bottomrule
\end{tabular}
\caption{
Multilabel classification.
The $3$ best performing methods from the study by Madjarov \etal \cite{madjarov2012extensive}
are compared to our multilabel methods from \S~\ref{sec:multilabel-methods}.
\textbf{Baselines:}
\RFPCT\ -- random forest of predicting clustering trees \cite{kocev2007ensembles};
\HOMER\ -- hierarchy of multilabel classifiers \cite{tsoumakas2008effective};
\BR\ -- binary relevance method using \SvmOva.
\HOMER\ and all the methods marked with (RBF) use an RBF kernel.
Following \cite{koyejo2015consistent},
the cut-off threshold $\delta$ for our methods is chosen by cross validation.}
\label{tbl:ml-small-cv}\intablefalse
\end{table*}

\textbf{Conclusion.}
We see that a safe choice for multiclass problems
seems to be the \LrMulti\ loss
as it yields reasonably good results in all top-$k$ errors.
A competitive alternative is the smooth \SvmMultig{\gamma} loss
which can be faster to train
(see runtime experiments in \S~\ref{sec:optimization-multiclass}).
If one wants to optimize directly for a top-$k$ error
(at the cost of a higher top-$1$ error),
then further improvements are possible using either
the smooth top-$k$ SVM or the top-$k$ entropy losses.

\subsection{Multilabel Experiments}\label{sec:multilabel-experiments}

The aim of this section is threefold.
First, we establish competitive performance of our multilabel
classification methods from \S~\ref{sec:multilabel-methods}
comparing them to the top $3$ methods from an extensive
experimental study by Madjarov \etal \cite{madjarov2012extensive}
on $10$ multilabel benchmark datasets of varying scale and complexity.
Next, we discuss an interesting learning setting when top-$k$ classification
methods emerge as a transition step between multiclass and multilabel
approaches.
Finally, we evaluate multiclass, top-$k$, and multilabel classification
methods on Pascal VOC 2007 \cite{everingham2010pascal}
and the more challenging Microsoft COCO \cite{lin2014microsoft}
image classification benchmarks.

\begin{table}[H]\scriptsize\centering\setlength{\tabcolsep}{.22em}
\begin{tabular}{l|cccc||l|cccc}
\toprule
Dataset & $m$ & $n$ & $d$ & $l_c$ & Dataset & $m$ & $n$ & $d$ & $l_c$ \\
\midrule
\midrule
bibtex \cite{katakis2008multilabel} &
$159$ & $5\K$ & $2\K$ & $2.40$ &
enron \cite{klimt2004enron} &
$53$ & $1\K$ & $1\K$ & $3.38$ \\
bookmarks \cite{katakis2008multilabel} &
$208$ & $60\K$ & $2\K$ & $2.03$ &
mediamill \cite{snoek2006challenge} &
$101$ & $31\K$ & $120$ & $4.38$ \\
corel5k \cite{duygulu2002object} &
$374$ & $4.5\K$ & $499$ & $3.52$ &
medical \cite{read2009classifier} &
$45$ & $645$ & $1.5\K$ & $1.25$ \\
delicious \cite{tsoumakas2008effective} &
$983$ & $13\K$ & $500$ & $19.02$ &
scene \cite{boutell2004learning} &
$6$ & $1.2\K$ & $294$ & $1.07$ \\
emotions \cite{trohidis2008multi} &
$6$ & $391$ & $72$ & $1.87$ &
yeast \cite{elisseeff2001kernel} &
$14$ & $1.5\K$ & $103$ & $4.24$ \\
\midrule
VOC 2007 \cite{everingham2010pascal} &
$20$ & $5\K$ & $2\K$ & $1.46$ &
MS COCO \cite{lin2014microsoft} &
$80$ & $83\K$ & $2\K$ & $2.91$ \\
\bottomrule
\end{tabular}
\caption{Statistics of multilabel benchmarks
($m$: \# classes, $n$: \# training examples, $d$: \# features,
$l_c$: label cardinality).
}
\label{tbl:stats-ml}
\end{table}

\begin{table}[t]\scriptsize\centering\setlength{\tabcolsep}{.5em}\intabletrue
\begin{tabular}{l|cc|cc|ccc}
\multicolumn{8}{c}{\small\textbf{emotions}}
\\\toprule
Method & \RLoss & \HLoss & \Acc & \SAcc & \Fmicro & \Fmacro & \Finst \\
\midrule
\midrule
\RFPCT\ \cite{madjarov2012extensive} &
$\mathbf{0.151}$ & $\mathbf{0.189}$ & $51.9$ & $\mathbf{30.7}$ & $67.2$ & $65.0$ & $61.1$ \\
\HOMER\ \cite{madjarov2012extensive} &
$0.297$ & $0.361$ & $47.1$ & $16.3$ & $58.8$ & $57.0$ & $61.4$ \\
\BR\ \cite{madjarov2012extensive} {\tiny(RBF)} &
$0.246$ & $0.257$ & $36.1$ & $12.9$ & $50.9$ & $44.0$ & $46.9$ \\
\midrule
\LrML\ & $0.186$ & $0.239$ & $\mathbf{53.6}$ & $22.8$ & $66.9$ & $66.6$ & $64.0$ \\
\SvmML\ & $0.217$ & $0.238$ & $50.4$ & $23.3$ & $63.4$ & $65.2$ & $63.9$ \\
\SvmMLg{\gamma} & $0.178$ & $0.230$ & $54.0$ & $23.3$ & $\mathbf{67.3}$ & $\mathbf{66.7}$ & $\mathbf{65.5}$ \\
\midrule
\LrML\ {\tiny(RBF)} & $0.225$ & $0.266$ & $47.2$ & $19.3$ & $61.1$ & $62.0$ & $58.4$ \\
\SvmML\ {\tiny(RBF)} & $0.186$ & $0.224$ & $53.0$ & $21.3$ & $65.5$ & $64.3$ & $64.1$ \\
\SvmMLg{\gamma} {\tiny(RBF)} & $0.187$ & $0.224$ & $49.3$ & $21.3$ & $65.5$ & $64.2$ & $61.1$ \\
\bottomrule
\end{tabular}
\caption{Continuation of Table~\ref{tbl:ml-small-cv}.}
\label{tbl:ml-small-cv-emotions}\intablefalse
\end{table}

\textbf{Multilabel classification.}
Here, we seek to establish a solid baseline
to evaluate our implementation of the multilabel \reftext{eq:svm-ml},
smooth \reftext{eq:smooth-svm-ml},
and the \reftext{eq:lr-ml} methods.
To that end, we follow the work of
Madjarov \etal \cite{madjarov2012extensive}
who provide a clear description of the evaluation protocol
and an extensive experimental comparison
of $12$ multilabel classification methods
on $11$ datasets reporting $16$ performance metrics.
We limit our comparison to the $3$ best performing methods
from their study, namely
the random forest of predicting clustering trees \cite{kocev2007ensembles},
the hierarchy of multilabel classifiers \cite{tsoumakas2008effective},
and the binary relevance method using \SvmOva.
We report results on $10$ datasets as there was an issue
with the published train/test splits on the remaining benchmark\footnote{
See \url{https://github.com/tsoumakas/mulan/issues/4} for details.}.
The datasets vary greatly in size and \textbf{label cardinality}
(the average number of labels per example),
as can be seen from the basic statistics in Table~\ref{tbl:stats-ml}.
Further details about each of the datasets can be found in
\cite{madjarov2012extensive}.

We follow closely the evaluation protocol of \cite{madjarov2012extensive}
except for the selection of the cut-off threshold $\delta$
(see \S~\ref{sec:perf-metrics} for definition).
Following \cite{read2009classifier},
Madjarov \etal choose $\delta$ by matching label cardinality
between the training and test data.
While it is fast and easy to compute,
that approach has two drawbacks:
(i) being an instance of transductive learning,
the method requires re-computation of $\delta$
every time test data changes;
(ii) the choice of $\delta$ is not tuned to any performance
measure and is likely to be suboptimal.
In our experiments (not reported here),
we observed generally comparable, but slightly lower results
compared to when $\delta$ is selected on a validation set as discussed next.

Instead, Koyejo \etal \cite{koyejo2015consistent}
recently showed that a consistent classifier is obtained
when one computes $\delta$ by optimizing a given performance measure
on a hold-out validation set.
While there are at most $m n$ distinct values of $\delta$
that would need to be considered, we limit the search to the grid
$\{-10^{(-5.9:.2:1)}, 0, 10^{(-5.9:.2:1)} \}$ of $71$ values.

Following \cite{madjarov2012extensive}, we use $10$-fold
cross-validation to select $C = 1/(\lambda n)$,
the RBF kernel parameter $\theta = 1/(2 \sigma^2)$,
and the threshold $\delta$, as described above.
We use rather large and fine-grained grids both for $C$
(from $2^{-20}$ to $2^5$) and $\theta$ (from $2^{-15}$ to $2^3$).
The smoothing parameter is always set $\gamma=1$.

Tables~\ref{tbl:ml-small-cv} and \ref{tbl:ml-small-cv-emotions}
present our experimental results.
We report $7$ performance metrics previously introduced in
\S~\ref{sec:perf-metrics} and tune the hyper-parameters
for each metric individually.
All metrics, except the rank loss and the hamming loss,
are given in percents.
Since the \RFPCT{} method did not use the RBF kernel in
\cite{koyejo2015consistent},
we also report results with the linear kernel for our methods
in the middle section of each table.

Overall, experimental results indicate competitive performance
of our methods across all datasets and evaluation measures.
Specifically, we highlight that the smooth \reftext{eq:smooth-svm-ml}
with the RBF kernel yields the best performance
in $38$ out of $70$ cases.
On the two largest datasets, bookmarks and delicious,
where the previous methods even struggled to complete training,
we are able to achieve significant performance improvements
both in rank loss as well as in partition-based measures.
Finally, we note that while the previous methods show rather large
variability in performance, all three of our multilabel methods
tend to be more stable and show results that are concentrated
around the best performing method in each of the cases.

\begin{table*}[tp]\scriptsize\centering\setlength{\tabcolsep}{.33em}\intabletrue
\begin{tabular}{c|l|cccccccccccccccccccc|c}
\toprule
Labels & Method &
aero & bike & bird & boat & bottle & bus & car & cat & chair & cow & table & dog & horse & mbike & prsn & plant & sheep & sofa & train & tv & mAP \\
\midrule
\midrule
\multirow{8}{*}{\multirowcell{multi-\\class}} &
\LrMulti\ & $99.2$ & $95.0$ & $92.5$ & $92.3$ & $61.9$ & $86.6$ & $93.4$ & $95.8$ & $55.3$ & $85.8$ & $82.0$ & $92.1$ & $97.2$ & $91.5$ & $93.2$ & $70.8$ & $82.1$ & $82.6$ & $97.8$ & $81.1$ & $86.4$ \\
&
\LrTopK{k} & $99.1$ & $\mathbf{96.0}$ & $92.3$ & $95.4$ & $62.1$ & $89.2$ & $93.9$ & $95.3$ & $58.5$ & $88.1$ & $72.8$ & $94.2$ & $97.3$ & $93.8$ & $93.0$ & $67.9$ & $87.7$ & $83.4$ & $97.6$ & $\mathbf{85.3}$ & $87.1$ \\
\noalign{\smallskip}\cline{2-23}\noalign{\smallskip}
&
\SvmMulti\ & $\mathbf{99.5}$ & $94.0$ & $\mathbf{97.0}$ & $\mathbf{96.8}$ & $62.1$ & $\mathbf{93.4}$ & $\mathbf{94.6}$ & $\mathbf{97.5}$ & $65.0$ & $89.9$ & $\mathbf{85.1}$ & $\mathbf{97.4}$ & $\mathbf{97.8}$ & $\mathbf{95.5}$ & $93.7$ & $\mathbf{71.0}$ & $90.2$ & $84.4$ & $\mathbf{98.7}$ & $82.3$ & $89.3$ \\
&
\SvmTopKa{k} & $99.3$ & $95.5$ & $94.7$ & $95.5$ & $61.5$ & $91.9$ & $\mathbf{94.6}$ & $97.4$ & $\mathbf{66.7}$ & $89.0$ & $80.8$ & $97.1$ & $97.7$ & $95.4$ & $\mathbf{95.3}$ & $70.7$ & $90.2$ & $84.3$ & $98.5$ & $84.8$ & $89.0$ \\
&
\SvmTopKb{k} & $99.4$ & $95.5$ & $96.0$ & $95.9$ & $63.5$ & $92.6$ & $\mathbf{94.6}$ & $97.4$ & $66.1$ & $\mathbf{90.2}$ & $84.1$ & $97.1$ & $\mathbf{97.8}$ & $\mathbf{95.5}$ & $95.0$ & $70.9$ & $\mathbf{91.7}$ & $\mathbf{84.6}$ & $98.5$ & $83.9$ & $\mathbf{89.5}$ \\
\noalign{\smallskip}\cline{2-23}\noalign{\smallskip}
&
\SvmMultig{\gamma} & $99.4$ & $95.4$ & $95.0$ & $95.5$ & $64.3$ & $91.9$ & $94.4$ & $97.0$ & $64.0$ & $90.0$ & $84.7$ & $96.1$ & $97.7$ & $94.8$ & $94.2$ & $70.6$ & $89.7$ & $\mathbf{84.6}$ & $98.3$ & $83.3$ & $89.0$ \\
&
\SvmTopKag{k}{\gamma} & $99.3$ & $\mathbf{96.0}$ & $93.2$ & $95.0$ & $63.6$ & $90.7$ & $94.3$ & $97.0$ & $62.4$ & $89.3$ & $79.7$ & $96.0$ & $97.6$ & $95.0$ & $95.0$ & $70.2$ & $89.5$ & $84.4$ & $98.3$ & $83.9$ & $88.5$ \\
&
\SvmTopKbg{k}{\gamma} & $99.3$ & $95.6$ & $94.7$ & $95.2$ & $\mathbf{64.4}$ & $91.8$ & $94.5$ & $97.1$ & $65.1$ & $89.8$ & $84.2$ & $96.3$ & $97.7$ & $94.9$ & $94.8$ & $70.6$ & $89.7$ & $\mathbf{84.6}$ & $98.4$ & $84.0$ & $89.1$ \\
\midrule
\midrule
\multirow{3}{*}{\multirowcell{multi-\\label}} &
\LrML\ & $98.8$ & $94.2$ & $92.3$ & $90.6$ & $56.6$ & $83.3$ & $92.1$ & $95.8$ & $65.0$ & $85.3$ & $84.0$ & $93.9$ & $96.5$ & $93.6$ & $92.5$ & $69.4$ & $83.8$ & $81.2$ & $97.7$ & $78.2$ & $86.2$ \\
&
\SvmML\ & $99.5$ & $\mathbf{96.5}$ & $\mathbf{97.5}$ & $\mathbf{96.7}$ & $\mathbf{71.8}$ & $\mathbf{93.6}$ & $\mathbf{95.3}$ & $\mathbf{97.8}$ & $\mathbf{79.3}$ & $\mathbf{92.0}$ & $87.6$ & $\mathbf{98.4}$ & $98.2$ & $\mathbf{96.6}$ & $97.9$ & $\mathbf{73.1}$ & $\mathbf{93.3}$ & $\mathbf{83.8}$ & $\mathbf{98.7}$ & $\mathbf{88.5}$ & $\mathbf{91.8}$ \\
&
\SvmMLg{\gamma} & $\mathbf{99.6}$ & $96.3$ & $97.1$ & $96.4$ & $69.5$ & $93.3$ & $94.9$ & $97.5$ & $76.7$ & $91.3$ & $\mathbf{88.0}$ & $98.0$ & $\mathbf{98.3}$ & $\mathbf{96.6}$ & $\mathbf{98.1}$ & $72.6$ & $93.2$ & $83.7$ & $98.6$ & $88.0$ & $91.4$ \\
\bottomrule
\end{tabular}
\caption{
Pascal VOC 2007 \textbf{classification} results.
Evaluation of multiclass, top-$k$, and multilabel classification methods.
Methods in the ``multiclass'' section above
use only a \emph{single} label per image,
while methods in the ``multilabel'' section use all annotated labels.
Please see the section \textbf{Multiclass to multilabel}
for further details on the learning setting.}
\label{tbl:ml-voc07-ap}\intablefalse
\end{table*}

\begin{table}[th]\scriptsize\centering\setlength{\tabcolsep}{.2em}\intabletrue
\begin{tabular}{l|ccc|cc|cc|ccc}
\toprule
Method & \RecAtK{1} & \RecAtK{3} & \RecAtK{5} & \RLoss & \HLoss & \Acc & \SAcc & \Fmicro & \Fmacro & \Finst \\
\midrule
\midrule
\LrMulti\ & $76.2$ & $94.3$ & $98.0$ & $0.016$ & $0.029$ & $73.2$ & $54.4$ & $78.1$ & $75.0$ & $80.0$ \\
\LrTopK{k} & $76.1$ & $94.3$ & $97.8$ & $0.016$ & $0.038$ & $48.1$ & $44.0$ & $59.8$ & $72.1$ & $61.7$ \\
\midrule
\SvmMulti\ & $76.5$ & $94.1$ & $97.6$ & $0.017$ & $0.025$ & $76.0$ & $60.6$ & $80.3$ & $77.5$ & $81.9$ \\
\SvmTopKa{k} & $76.4$ & $94.6$ & $98.0$ & $0.015$ & $0.025$ & $76.6$ & $61.3$ & $81.4$ & $77.9$ & $81.7$ \\
\SvmTopKb{k} & $\mathbf{76.8}$ & $\mathbf{95.2}$ & $98.1$ & $\mathbf{0.014}$ & $\mathbf{0.024}$ & $\mathbf{77.3}$ & $\mathbf{62.0}$ & $82.0$ & $\mathbf{78.6}$ & $\mathbf{83.1}$ \\
\midrule
\SvmMultig{\gamma} & $76.6$ & $94.7$ & $98.0$ & $0.015$ & $0.025$ & $75.9$ & $60.0$ & $81.2$ & $78.3$ & $82.3$ \\
\SvmTopKag{k}{\gamma} & $76.4$ & $95.0$ & $\mathbf{98.2}$ & $0.015$ & $0.025$ & $76.2$ & $59.3$ & $81.2$ & $76.7$ & $81.5$ \\
\SvmTopKbg{k}{\gamma} & $76.7$ & $\mathbf{95.2}$ & $98.1$ & $\mathbf{0.014}$ & $\mathbf{0.024}$ & $76.7$ & $60.5$ & $\mathbf{82.2}$ & $78.1$ & $82.8$ \\
\midrule
\midrule
\LrML\ & $76.5$ & $96.3$ & $98.9$ & $0.010$ & $0.027$ & $75.2$ & $59.8$ & $81.3$ & $77.4$ & $80.7$ \\
\SvmML\ & $\mathbf{78.0}$ & $96.8$ & $98.9$ & $\mathbf{0.008}$ & $0.019$ & $81.6$ & $69.8$ & $85.8$ & $81.9$ & $86.1$ \\
\SvmMLg{\gamma} & $77.9$ & $\mathbf{97.3}$ & $\mathbf{99.1}$ & $\mathbf{0.008}$ & $\mathbf{0.018}$ & $\mathbf{82.4}$ & $\mathbf{70.8}$ & $\mathbf{86.8}$ & $\mathbf{83.0}$ & $\mathbf{86.4}$ \\
\bottomrule
\end{tabular}
\caption{
Pascal VOC 2007 multilabel classification results.}
\label{tbl:ml-voc07-cv}\intablefalse
\end{table}

\textbf{Multiclass to multilabel.}
Collecting ground truth annotation is hard.
Even when the annotation is simply an image level tag,
providing a \emph{consistent} and \emph{exhaustive} list
of labels for every image in the training set would require
significant effort.
It is much easier to provide a weaker form of annotation
where only a single prominent object is tagged.
An interesting question is then whether it is still possible
to train multilabel classifiers from multiclass annotation.
And if so, how large is the performance gap compared to methods
trained with full multilabel annotation?
In the following, we set to explore that setting
and answer the questions above.

We also note that top-$k$ classification emerges naturally
as an intermediate step between multiclass and multilabel learning.
Recall that top-$k$ loss functions operate in the multiclass setting
where there is a single label per example, but that label is hard
to guess correctly on the first attempt.
One could imagine that the example is actually associated with
$k$ labels, but only a single label is revealed in the annotation.
Therefore, it is also interesting to see if our top-$k$ loss
functions can offer an advantage over the classic multiclass losses
in this setting.

To evaluate the multiclass, top-$k$, and multilabel loss functions
on a common task, we choose two multilabel image classification benchmarks:
Pascal VOC 2007 and Microsoft COCO.
Multilabel methods are trained using full image level annotation
(\ie all class labels, but no bounding boxes or segmentation),
while multiclass and top-$k$ methods are trained
using a \emph{single} label per image.
Both datasets offer object level bounding box annotations
which can be used to estimate relative sizes of objects in the scene.
For multiclass training, we only keep the label of the largest object,
which is our proxy to estimating the prominent object in the image.
All methods are evaluated using full annotation at test time.
Note that except for pruning the training labels,
we do \emph{not} use bounding boxes anywhere during training or testing.

\textbf{Experimental setup.}
We use $5\K$ images for training and $5\K$ for testing
on Pascal VOC 2007, and $83\K$ for training and $40\K$ for testing on
the MS COCO validation set.
We split the training data in half for parameter tuning,
and re-train on the full set for testing.
We tune the regularization parameter $C = 1/(\lambda n)$
in the range from $2^{-20}$ to $2^{15}$,
and the top-$k$ parameter $k$ in the range $\{2, 3, 4, 5\}$.
For the partition-based measures, we also tune the threshold
$\delta$ in the range $[0.1, 10]$ with $100$ equally spaced points.
That range was chosen by observing the distribution of $\delta$
when it is computed by matching the label cardinality
between training and test data.
All hyper-parameters are tuned for each method and performance metric
individually.

To isolate the effect of loss functions on classifier training
from feature learning, we follow the classic approach of extracting
features as a pre-processing step and then train our classifiers
on the fixed image representation.
We use our own implementation of SDCA based solvers for all of the methods
considered in this section.
That offers strong convergence guarantees
due to (i) convexity of the objective
and (ii) having the duality gap as the stopping criterion.

\begin{table*}[tp]\scriptsize\centering\setlength{\tabcolsep}{.675em}\intabletrue
\begin{tabular}{c|l|c|cccc|cccc|cc|cc|ccc}
\toprule
Labels & Method & mAP & \PrecAtK{1} & \PrecAtK{2} & \PrecAtK{3} & \PrecAtK{5} & \RecAtK{1} & \RecAtK{3} & \RecAtK{5} & \RecAtK{10} & \RLoss & \HLoss & \Acc & \SAcc & \Fmicro & \Fmacro & \Finst \\
\midrule
\midrule
\multirow{7}{*}{\multirowcell{multi-\\class}} &
\LrMulti\ & $54.6$ & $92.6$ & $66.9$ & $52.2$ & $37.0$ & $44.8$ & $65.8$ & $73.9$ & $82.9$ & $0.066$ & $0.028$ & $43.4$ & $15.9$ & $52.4$ & $41.1$ & $55.8$ \\
\noalign{\smallskip}\cline{2-18}\noalign{\smallskip}
&
\SvmMulti\ & $54.2$ & $92.8$ & $66.9$ & $51.9$ & $36.6$ & $44.9$ & $65.6$ & $73.4$ & $83.5$ & $0.057$ & $\mathbf{0.025}$ & $48.3$ & $20.7$ & $55.6$ & $43.3$ & $60.1$ \\
&
\SvmTopKa{k} & $58.3$ & $92.8$ & $68.1$ & $53.2$ & $\mathbf{37.5}$ & $44.9$ & $66.8$ & $\mathbf{74.8}$ & $83.7$ & $0.054$ & $\mathbf{0.025}$ & $48.8$ & $20.8$ & $56.8$ & $44.4$ & $59.9$ \\
&
\SvmTopKb{k} & $59.0$ & $\mathbf{93.2}$ & $\mathbf{68.4}$ & $53.2$ & $\mathbf{37.5}$ & $\mathbf{45.0}$ & $\mathbf{66.9}$ & $74.7$ & $\mathbf{84.0}$ & $\mathbf{0.053}$ & $\mathbf{0.025}$ & $\mathbf{49.7}$ & $\mathbf{21.2}$ & $\mathbf{57.5}$ & $44.5$ & $\mathbf{61.3}$ \\
\noalign{\smallskip}\cline{2-18}\noalign{\smallskip}
&
\SvmMultig{\gamma} & $58.1$ & $93.0$ & $67.7$ & $52.8$ & $37.2$ & $\mathbf{45.0}$ & $66.5$ & $74.3$ & $83.6$ & $0.056$ & $\mathbf{0.025}$ & $48.9$ & $20.2$ & $56.5$ & $\mathbf{44.6}$ & $60.6$ \\
&
\SvmTopKag{k}{\gamma} & $58.4$ & $92.8$ & $68.1$ & $53.2$ & $\mathbf{37.5}$ & $44.9$ & $66.8$ & $\mathbf{74.8}$ & $83.7$ & $0.055$ & $\mathbf{0.025}$ & $48.4$ & $20.4$ & $57.0$ & $44.5$ & $60.3$ \\
&
\SvmTopKbg{k}{\gamma} & $\mathbf{59.1}$ & $\mathbf{93.2}$ & $\mathbf{68.4}$ & $\mathbf{53.3}$ & $37.4$ & $\mathbf{45.0}$ & $\mathbf{66.9}$ & $74.7$ & $83.7$ & $0.054$ & $\mathbf{0.025}$ & $49.3$ & $20.9$ & $57.2$ & $44.4$ & $60.9$ \\
\midrule
\midrule
\multirow{3}{*}{\multirowcell{multi-\\label}} &
\LrML\ & $58.2$ & $92.8$ & $76.3$ & $61.8$ & $44.2$ & $44.9$ & $75.5$ & $84.6$ & $\mathbf{93.2}$ & $0.021$ & $0.030$ & $43.7$ & $16.9$ & $52.9$ & $49.6$ & $55.6$ \\
&
\SvmML\ & $63.0$ & $92.1$ & $72.9$ & $57.4$ & $40.6$ & $44.1$ & $70.8$ & $78.9$ & $89.1$ & $0.040$ & $0.024$ & $49.5$ & $25.6$ & $58.3$ & $50.6$ & $60.5$ \\
&
\SvmMLg{\gamma} & $\mathbf{71.0}$ & $\mathbf{95.7}$ & $\mathbf{79.5}$ & $\mathbf{63.4}$ & $\mathbf{44.6}$ & $\mathbf{46.2}$ & $\mathbf{77.1}$ & $\mathbf{85.3}$ & $\mathbf{93.2}$ & $\mathbf{0.020}$ & $\mathbf{0.021}$ & $\mathbf{57.4}$ & $\mathbf{29.8}$ & $\mathbf{65.5}$ & $\mathbf{58.9}$ & $\mathbf{67.9}$ \\
\bottomrule
\end{tabular}
\caption{
MS COCO multilabel \textbf{classification} results.
Methods in the ``multiclass'' section
use only a \emph{single} label per image,
while methods in the ``multilabel'' section use all annotated labels.
Please see the section \textbf{Multiclass to multilabel}
for further details on the learning setting,
and \S~\ref{sec:perf-metrics} for details on the evaluation measures.}
\label{tbl:ml-coco-cv}\intablefalse
\end{table*}

Our feature extraction pipeline is fairly common and follows the steps
outlined in \cite{simonyan2014very,wei2016region}.
We compute multiple feature vectors per image.
Every original image is resized isotropically so that the smallest side is
equal to $Q \in \{ 256, 384, 512 \}$ pixels, and then horizontal flips
are added for a total of $6$ images at $3$ scales.
We use MatConvNet \cite{vedaldi15matconvnet}
and apply the ResNet-$152$ model \cite{he2016deep}
which has been pre-trained on ImageNet.
We extract features from the pool5 layer
and obtain about $500$ feature vectors of dimension $2048$
per image on Pascal VOC
(the exact number depends on the size of the original image).
To reduce computational costs on COCO,
we increase the stride of that layer to $2$
for $Q \in \{ 384, 512 \}$,
which yields about $140$ feature vectors per image
and a total of $n=12\M$ training examples.
Unlike \cite{wei2016region}, we do not compute an additional
global descriptor and also perform no normalization.
Our preliminary experiments showed no advantage in doing so,
and we decided to keep the pipeline close to the original
ResNet network.

Every feature vector can be mapped to a region in the original image.
For training, we simply replicate the same image labels effectively
increasing the size of the training set.
At test time, we obtain a single ranking of class labels
per image by max pooling the scores for each class.
We follow this basic setup, but note that a $1-2\%$ improvement
is possible with a more sophisticated aggregation of information
from the different image regions \cite{wei2016region,zhao2016regional}.

\textbf{Pascal VOC 2007.}
Here, we discuss the results presented in
Tables~\ref{tbl:ml-voc07-ap} and \ref{tbl:ml-voc07-cv}.
We start with the first table which reports the standard VOC evaluation
measure, the mean AP.
First, we compare top-$1$ (multiclass) and top-$k$ classification methods.
As before, although the differences are small,
we see consistent improvements in each of the three groups:
\LrMulti\ to \LrTopK{k},
\SvmMulti\ to \SvmTopKb{k},
and \SvmMultig{\gamma} to \SvmTopKbg{k}{\gamma}.
The best top-$1$ method is \SvmMulti{}
with $89.3\%$ mAP,
which is outperformed by \SvmTopKb{k} reporting the best multiclass
result of $89.5\%$ mAP.

Next, we look at the performance gap between
multiclass and multilabel settings.
The best mAP of $91.8\%$ is achieved by the multilabel SVM, \SvmML{},
which exploits full annotation to boost its performance.
However, the gap of just above $2\%$ suggests a non-trivial
trade-off between the additional annotation effort
and the resulting classification performance.
One limitation of the results on VOC 2007 is the relatively low
label cardinality of only $1.5$ labels per image.
We will see how the picture changes on COCO where
the label cardinality is about $3$ labels per image.

Comparing the smooth and nonsmooth losses,
we see that nonsmooth loss functions tend to perform better on this dataset.
Moreover, SVM seems to perform significantly better than softmax.
While this is a somewhat surprising result, it has been observed
previously, \eg with the R-CNN detector \cite{lenc15rcnn,girshick2015fast},
and with deeply-supervised CNNs \cite{lee2015deeply},
even though their comparison was to OVA SVM.

Finally, we note that the current state of the art
classification results on VOC 2007 are reported in
\cite{wei2016region,zhao2016regional,wang2016beyond}.
Our $91.8\%$ mAP of \SvmML{} matches exactly the result of
LSSVM-Max in \cite{wei2016region},
which operates in the setting closest to ours
in terms of image representation and the learning architecture.
Their proposed PRSVM method performs additional inference
(as opposed to simple max pooling) and achieves $92.9\%$ mAP.
Multiscale orderless pooling from \cite{zhao2016regional}
is directly comparable to our setting and yields $90.8\%$ mAP.
Performing inference on the extracted image regions,
they too report around $93\%$ mAP,
while additionally exploiting bounding box annotations
boosts the performance to $93.7\%$.

While mAP is the established performance measure on Pascal VOC datasets,
it does not evaluate how well a method captures inter-class
correlations since the AP is computed for each class independently.
To address this limitation, we also report a number of multilabel
performance metrics from \S~\ref{sec:perf-metrics}
in Table~\ref{tbl:ml-voc07-cv}.
The best performing method in the multiclass category is again \SvmTopKb{k},
but the improvement over the baseline \SvmMulti\ is more pronounced.
Furthermore, the smooth \SvmMLg{\gamma} now clearly outperforms
its nonsmooth counterpart
also significantly increasing the gap between
multiclass and multilabel methods.

\textbf{MS COCO.}
Table~\ref{tbl:ml-coco-cv} presents
our results on the MS COCO benchmark.
The general trend is similar to that observed on VOC 2007:
top-$k$ methods tend to outperform top-$1$ multiclass baselines,
but are outperformed by multilabel methods that exploit full annotation.
However, the differences between the methods are more meaningful on
this dataset.
In particular, smooth \SvmTopKbg{k}{\gamma} achieves $59.1\%$ mAP,
which is a $1\%$ improvement over \SvmMultig{\gamma},
while multilabel \SvmMLg{\gamma} boosts the performance to $71\%$.
The improvement of over $10\%$ highlights the value of multilabel
annotation, even though this result is subject to the bias of our
label selection procedure for multiclass methods:
small objects may have not been repesented well.
That class imbalance could be also the reason for relatively poor mAP
performance of \SvmMulti\ and \LrMulti\ methods in these experiments.

The current state of the art classification results on COCO are reported
in \cite{zhao2016regional}.
A comparable architecture achieved $69.7\%$ mAP,
while performing inference on the multiple regions per image
and exploiting the bounding box annotations boosted the performance
to $73\%$ mAP.

Looking at multilabel evaluation measures,
we can also make a few interesting observations.
First, the rank loss seems to correlate well with
the other performance measures,
which is good since that is the metric
that our loss functions are designed to optimize.
Second, strong performance at \PrecAtK{1}
suggests that a single guess is generally sufficient to guess
\emph{a} correct label. However, due to high class imbalance
this result is not too impressive and is humbled by the performance
of \RecAtK{k}: even $10$ attempts may not suffice to guess all relevant labels.
The difficulty of properly ranking the less represented classes
is also highlighted by the relatively low
accuracy and subset accuracy results,
although the latter metric may be too stringent for a large scale benchmark.

\section{Conclusion}%
\label{sec:conclusion}
We have done an extensive experimental study of
multiclass, top-$k$, and multilabel performance optimization.
We observed that the softmax loss and the smooth hinge loss
are competitive across all top-$k$ errors
and should be considered the primary candidates in practice.
Our new top-$k$ loss functions
can further improve these results, especially
if one is targeting a particular top-$k$ error as the performance measure,
or if the training examples are multilabel in nature.
The latter transition from multiclass to multilabel classification
indicates that effective multilabel classifiers can be trained
from single label annotations.
Our results also show that the classical multilabel SVM is
competitive in mAP on Pascal VOC 2007, however,
the proposed \emph{smooth} multilabel SVM outperforms the competing methods
in other metrics on Pascal VOC, and in all metrics on MS COCO.
Finally, we would like to highlight our optimization schemes
for \reftext{eq:topk-entropy}, \SvmTopKg{k}{\gamma}, and \reftext{eq:smooth-svm-ml},
which include the softmax loss and multiclass, multilabel SVM
as special cases.

\bibliographystyle{IEEEtran}
\bibliography{main}

\appendices

\propfalse
\prooftrue

\section{Proofs from \S~\ref{sec:loss-functions}}
\subsection{Proof of Proposition~\ref{prop:topk-smooth}}

\subsection{Proof of Proposition~\ref{prop:softmax-conjugate}}

\subsection{Proof of Proposition~\ref{prop:topk-entropy-primal}}

\subsection{Proof of Proposition~\ref{prop:svm-ml-conj}}

\subsection{Proof of Proposition~\ref{prop:smooth-svm-ml}}

\subsection{Proof of Proposition~\ref{prop:lr-ml-conj}}

\section{Proofs from \S~\ref{sec:topk-calibration}}
\subsection{Proof of Lemma~\ref{lem:bayes-topk-error}}

\subsection{Proof of Proposition~\ref{prop:calibrated-hinge}}

\subsection{Proof of Proposition~\ref{prop:calibrated-lr}}

\subsection{Proof of Proposition~\ref{prop:smooth-ova-hinge-calibrated}}

\subsection{Proof of Proposition~\ref{prop:multi-hinge-topk-calibrated}}

\subsection{Proof of Proposition~\ref{prop:softmax-topk-calibrated}}

\subsection{Proof of Proposition~\ref{prop:truncated-topk-entropy-topk-calibrated}}

\section{Proofs from \S~\ref{sec:optimization}}
\subsection{Proof of Proposition~\ref{prop:smooth-topk-hinge-update}}

\subsection{Proof of Proposition~\ref{prop:topk-entropy-update}}

\subsection{Proof of Proposition~\ref{prop:solve-topk-entropy-update}}

\subsection{Proof of Proposition~\ref{prop:solve-topk-entropy-primal}}

\subsection{Proof of Proposition~\ref{prop:smooth-svm-ml-update}}

\subsection{Proof of Proposition~\ref{prop:proj-bipartite-simplex}}

\subsection{Proof of Proposition~\ref{prop:lr-ml-update}}

\end{document}